\definecolor{chicago-maroon}{RGB}{128,0,0}
\newtheorem{assumption}[theorem]{Assumption}
\numberwithin{equation}{section}
\def\Tr{\mathrm{Tr}}
\def\calH{\mathcal{H}}
\def\I{\mathrm{I}}
\def\op{\texttt{op}}
\def\calN{\mathcal{N}}
\def\bS{\mathbf{S}}
\def\bB{\mathbf{B}}
\def\bD{\mathbf{D}}
\def\bX{\mathbf{X}}
\def\bY{\mathbf{Y}}
\def\bEps{\bm{\mathcal{E}}}
\def\bbR{\mathbb{R}}
\def\bbE{\mathbb{E}}
\def\by{\mathbf{y}}
\def\btheta{\bm{\theta}}
\def\bTheta{\bm{\Theta}}
\def\beps{\bm{\varepsilon}}
\def\eps{\varepsilon}
\def\bx{\mathbf{x}}
\def\bW{\mathbf{W}}
\def\bA{\mathbf{A}}
\def\bw{\mathbf{w}}
\def\bv{\mathbf{v}}
\def\bI{\mathbf{I}}
\def\bS{\mathbf{S}}
\def\bZ{\mathbf{Z}}
\def\bb{\mathbf{b}}
\def\bU{\mathbf{U}}
\def\bM{\mathbf{M}}
\def\bV{\mathbf{V}}
\def\Var{\mathrm{Var}}
\def\our{\texttt{Meta-SP}}
\def\ANIL{\texttt{ANIL}}
\def\col{\text{col}}
\def\hat{\widehat}
\def\tilde{\widetilde}
\def\calA{\mathcal{A}}
\def\calB{\mathcal{B}}
\def\vec{\mathrm{vec}}
\def\sp{\mathsf{sp}}
\def\diff{\mathrm{d}}
\def\F{\texttt{F}}
\def\subG{\mathbf{subG}}
\begin{document}
\title{\raggedright Few-shot Multi-Task Learning of Linear Invariant Features with Meta Subspace Pursuit}

\author[C. Zhang et~al.]{Chaozhi Zhang\affil{1}, Lin Liu\affil{1,2,3,*}~and Xiaoqun Zhang\affil{1,2}\comma\corrauth}
\address{\affilnum{1}\ School of Mathematical Sciences, Shanghai Jiao Tong University, Shanghai 200240, China\\
\affilnum{2}\ Institute of Natural Sciences, MOE-LSC, Shanghai Jiao Tong University, Shanghai 200240, China\\
\affilnum{3}\ SJTU-Yale Joint Center for Biostatistics and Data Science, Shanghai Jiao Tong University, Shanghai 200240, China}

%
%
\emails{{\tt zhangcz4991@sjtu.edu.cn} (C.~Zhang), {\tt linliu@sjtu.edu.cn} (L. Liu), {\tt xqzhang@sjtu.edu.cn} (X.~Zhang)}

\begin{abstract}
Data scarcity poses a serious threat to modern machine learning and artificial intelligence, as their practical success typically relies on the availability of big datasets. One effective strategy to mitigate the issue of insufficient data is to first harness information from other data sources possessing certain similarities in the study design stage, and then employ the multi-task or meta learning framework in the analysis stage. In this paper, we focus on multi-task (or multi-source) linear models whose coefficients across tasks share an invariant low-rank component, a popular structural assumption considered in the recent multi-task or meta learning literature. Under this assumption, we propose a new algorithm, called \texttt{Meta Subspace Pursuit} (abbreviated as \our), that provably learns this invariant subspace shared by different tasks. Under this stylized setup for multi-task or meta learning, we establish both the algorithmic and statistical guarantees of the proposed method. Extensive numerical experiments are conducted, comparing \our{} against several competing methods, including popular, off-the-shelf model-agnostic meta learning algorithms such as \ANIL{}. These experiments demonstrate that \our{} achieves superior performance over the competing methods in various aspects.
\end{abstract}

\ams{62R07, 65F10
}
\keywords{Multi-task learning, few-shot learning, meta learning, matrix rank minimization.}

\maketitle


\section{Introduction}
\label{sec:intro}

Modern machine learning methods generally demand a large amount of data to achieve good learning performance. However, in certain application domains such as clinical medicine and social sciences, collecting a large amount of training data can be extremely cost and time consuming, and sometimes even unethical due to privacy concerns. As a result, data scientists and statisticians will have to face the so-called ``data-scarce'' or ``data-hungry'' challenge when deploying those modern machine learning methods \citep{hutchinson2017overcoming, bansal2022systematic}. 

To bypass this difficulty, the multi-task or meta learning paradigm \citep{zhang2021survey, finn2017model, finn2018probabilistic, duan2023adaptive, bairaktari2023multitask, li2023transformers} has emerged as a promising option both in theory and in practice \citep{liu2011multi}, by borrowing information across different data sources/tasks but sharing certain invariances. Intuitively, multi-task or meta learning methods are advantageous because they are designed to acquire those invariant information shared by all tasks during the learning process. 

To make progress in our theoretical understanding of multi-task or meta learning, a large body of the current literature focuses on the multi-task large-dimensional linear models \citep{maurer2016benefit, du2020few, tripuraneni2021provable, tian2025learning}, a stylized statistical model but still preserving some essential features of more general multi-task or meta learning problems encountered in real life. Specifically, we are given a set of $T$ tasks, indexed by $[T] \coloneqq \{1, \cdots, T\}$. For each task $t \in [T]$, we are given access to $m$ independent and identically distributed (i.i.d.) observation pairs $(\bX_{t} \in \bbR^{m \times d}, \by_{t} \in \bbR^{m})$, where $\bX_{t}$ and $\by_{t}$, respectively, denote the design matrix of $d$ features and the scalar-valued responses/labels, over all $m$ samples in task $t$. The total sample size is denoted as $N \equiv T m$. We further assume that for every task $t$, the response $\by_{t}$ follows a noise-corrupted linear model:
\begin{equation}\label{form0}
    \by_t = \bX_t \btheta_t^{\ast} + \beps_t, t = 1, \cdots, T,
\end{equation}
where $\beps_t \in \mathbb{R}^m$ represents the exogenous white noise associated with task $t$. $y_{t, j}, \bx_{t, j}, \eps_{t, j}$ are reserved for the response, the features, and the random noise for sample $j$ in task $t$. In addition, $x_{t, j, k}$ is the $k$-th coordinate of the feature vector $\bx_{t, j}$. We let $\bm{\Theta}^{\ast} \coloneqq [\btheta_{1}^{\ast}, \cdots, \btheta_{T}^{\ast}]^{\top} \in \bbR^{T \times d}$ be the collection of the true regression coefficients over all tasks. Since it becomes more difficult to estimate $\Theta$ when $d$ becomes larger relative to $m$ (corresponding to data-scarcity/data-hungry regime \citep{boursier2022trace}), further structural assumptions on the model \eqref{form0} are indispensable. A popular choice is the so-called ``Hard Parameter Sharing (HPS)'' condition by assuming that the regression coefficients over all tasks enjoy a factor structure $\bm{\Theta}^{\ast} \coloneqq \bW^{\ast} \bB^{\ast}$, where $\bW^{\ast} \in \bbR^{T \times s}$ is the \textit{task-varying} low-rank component with rank $s \leq d$ and $\bB^{\ast} \in \bbR^{s \times d}$ is the \textit{task-invariant} component that maps the intrinsic task-varying coefficient $\bw_{t}^\ast \in \bbR^{s}$ linearly to $\btheta_{t}^\ast \in \bbR^{d}$, for every $t \in [T]$. Intuitively, once the task-invariant $\bB^{\ast}$ is at our disposal, the original high-dimensional task can be reduced to easier, lower-dimensional task by projecting the features onto $\col(\bB^{\ast})$, the linear span of the column space of $\bB^{\ast}$. 

\subsection*{Related Work}

Although exceedingly simple, the model \eqref{form0} has been analyzed extensively as a first-step towards understanding multi-task or meta learning. Due to space limitation, we only highlight several works that are relevant to our development. Tripuraneni et al.\citep{tripuraneni2021provable} introduced a Method-of-Moments (\texttt{MoM}) algorithm that learns the task-invariant subspace $\col(\bB^{\ast})$ via Singular Value Decomposition (SVD) of a $y^{2}$-weighted empirical Gram matrices of the features $\bx$, where we recall that $y$ is the response variable. The \texttt{MoM} method, although extremely simple, is nearly rate optimal for estimating $\col(\bB^{\ast})$ in sine-angle distance by matching the minimax lower bound derived in the same paper \citep{tripuraneni2021provable}, which scales roughly in order $\sqrt{\frac{d s}{T m}}$, where we recall that $d, s, m, T$ are respectively the feature dimension, the rank of the task-invariant subspace $\bB^{\ast}$, the sample size of each individual task, and the total number of tasks. Lower bounds in a similar spirit have also been derived in \cite{bairaktari2023multitask, tian2025learning}, where the authors studied a variant of the HPS linear model by allowing the low-rank subspace/representation $\bB^{\ast}$ to slightly vary across tasks. 

Despite being rate-optimal, as shown later in Section \ref{subsec:simulated}, \texttt{MoM} does not perform well in practice, even in very standard settings where the feature is drawn from isotropic Gaussian. A modified version of \texttt{MoM} in \citep{kong2020meta} offers improved performance in estimating the subspace, particularly when data is extremely scarce. However, this modified \texttt{MoM} struggles to perform that well in practice as the feature dimension and sample size grow (see Figure \ref{fig_m_T=800}). Another alternative method presented in \citep{tripuraneni2021provable} incorporates the Burer-Monteiro factorization method \citep{burer2003nonlinear, lemon2016low, cifuentes2022polynomial}. However, this method fails to effectively handle cases with very scarce data, and the derived rate of estimating $\bB^{\ast}$ does not shrink to zero as the number $T$ of tasks grows, hence a suboptimal rate compared against the minimax lower bound $\sqrt{\frac{d s}{T m}}$. Thekumparampil et al.\citep{thekumparampil2021statistically} proposes an algorithm that involves alternating minimization and gradient descent techniques. Nevertheless, as shown in Figure \ref{fig_m_T=800}, this method still requires a substantial amount of data, which may be difficult to acquire in some applications.

In a slightly different vein, Collins et al.\citep{collins2022maml} demonstrate that Model-Agnostic Meta-Learning (MAML) \citep{finn2017model} and Almost-No-Inner-Loop algorithm (ANIL) \citep{raghu2020rapid, yuksel2023first}, which are not designed for linear models, are still capable of learning the shared representation, or the task-invariant subspace $\bB^{\ast}$. However, the outcomes of these algorithms are not well-suited for data scarcity situations, as their analysis relies on drawing independent mini-batches at each iteration. As shown in our numerical experiments, ANIL requires more data to achieve similar error level (see Figure \ref{fig_mT}). These empirical performance suggests that when the data generating mechanism satisfies certain complexity-reducing structures (e.g. the HPS linear model), MAML and ANIL could be sub-optimal because they fail to leverage such structures at the expense of being model-agnostic.

Finally, we mention that Boursier et al.\citep{boursier2022trace} proposed a trace-norm regularized estimator and derived error bounds of their proposed method\footnote{Their error bounds were later improved to nearly optimal by \cite{liu2023improved} under Gaussianity with Gordon's Convex Gaussian Min-Max Theorem [CGMT] \citep{stojnic2013framework, thrampoulidis2018precise}. However, It further imposes the constraint $T < m$ in \cite{liu2023improved}; see the statement of their Theorem 3.}. As will be evident later, this approach is the closest to ours as the trace norm can be viewed as a relaxation of the matrix rank used in our method. Nevertheless, they do not provide a satisfactory algorithmic solution to address the optimization problem\footnote{In their numerical experiments, the BFGS method is directly used to solve the relevant optimization problems, which may lead to sub-optimal solution. In Section \ref{sec:sim}, we elaborate on this issue when evaluating the finite-sample performance of different methods.}.

\subsection*{A Prelude to Our Approach and Our Contributions}
Overall, the existing methods present various strengths and weaknesses, making it essential to develop new approaches that can effectively handle data scarcity by learning the task-invariant subspace effectively. In this paper, we propose a method of directly learning the shared representation, or equivalently the invariant subspace in the HPS linear model, inspired by the matrix rank minimization problem. We will formally introduce our new algorithm later in Section \ref{sec:alg}. In this section, we provide some intuitive explanation of our proposed algorithm.

Specifically, we address the problem of learning task-invariant subspace by treating it as a matrix rank minimization problem. The corresponding constrained matrix rank minimization problem is formulated as follows:
\begin{equation}\label{intro0}
    \begin{aligned}
     \min_{X} \quad & \text{rank} (X) \\
     \text{s.t.} \quad &  \mathcal{A} (X) = b,
    \end{aligned}
\end{equation}
where $X$ is the unknown matrix, $\mathcal{A}$ is a linear map and $b$ is a vector. As in \citep{lee2009efficient, lee2010admira, jain2010guaranteed}, there is a more robust formulation of \eqref{intro0}:
\begin{equation}\label{intro1}
    \begin{aligned}
     \min_{X} \quad & \Vert \mathcal{A} (X)-b \Vert_2 \\
     \text{s.t.} \quad &  \text{rank} (X) \le s,
    \end{aligned}
\end{equation}
which we will really consider in our methods.

The matrix rank minimization problem is closely related to the compressed sensing problem. They become equivalent when $X$ is reduced to an unknown vector, and the objective function is replaced by $\Vert X \Vert_{0}$. Solving compressed sensing and matrix rank minimization problems exactly, in the form \eqref{intro0}, are well known NP-hard problems \citep{natarajan1995sparse}. To find an approximate solution to the compressed sensing problem, it is common to replace the objective function with the $\ell_1$ norm, which is the convex envelope of the $\ell_0$ norm. Various algorithms, such as iterative hard thresholding \citep{blumensath2009iterative}, subspace pursuit \citep{dai2009subspace}, compressive sampling matching pursuit \citep{needell2009cosamp}, iterative thresholding with inversion \citep{maleki2009coherence}, and hard thresholding pursuit \citep{foucart2011hard}, are typically employed for the original $\ell_0$ minimization problem. For the $\ell_1$ optimization problem, several algorithms, such as those introduced in \citep{candes2005magic,figueiredo2007gradient,hale2008fixed,yin2008bregman}, are commonly used.

In analogy, nuclear norm minimization, which replaces the rank of the matrix with its nuclear norm, solves the matrix rank minimization problem \eqref{intro0} approximately. The method proposed in \citep{boursier2022trace} exemplifies this approach. As the nuclear norm minimization problem is equivalent to some semi-definite programming (SDP) problem, interior-point methods can be used to solve it, as shown in \citep{liu2010interior}. Several other first-order algorithms, including singular value thresholding (SVT), fixed point continuation (FPC), and the approximate SVD-based FPC algorithm (FPCA), have also been proposed in \citep{cai2010singular} and \citep{ma2011fixed}. In our paper, we opt to utilize the iterative hard thresholding algorithm mentioned in \citep{goldfarb2011convergence}. This algorithm is a generalization of the iterative hard thresholding algorithm used in compressed sensing \citep{blumensath2009iterative, ma2013sparse, carpentier2018iterative}. Within the context of meta-learning, we designate this algorithm as \texttt{Meta Subspace Pursuit} (\our{}). We substantiate its efficacy by demonstrating that it can attain a convergence outcome analogous to the one reported in \citep{boursier2022trace}. In practical applications, our approach demonstrates improved statistical properties and computational efficiency empirically.

We now summarize our main contributions as follows:
\begin{enumerate}
\item On the methodological side, we develop a new, iterative algorithm \our{} for learning the task-invariant subspace in the multi-task HPS linear models, contributing a new method to the multi-task/meta/invariant learning paradigm \citep{rojas2018invariant}. \our{} is easy and transparent to implement.
\item On the theoretical side, we establish how fast the regression coefficients $\bTheta$ and task-invariant subspace $\bB$ outputted by \our{} converge to the truth $\bTheta^{\ast}$ and $\bB^{\ast}$, as $m, d, s$ vary. In particular, we directly analyze the iterations of \our{} and establish the convergence rates at $k$-th iteration, for every $k \geq 1$. As a consequence, our analysis directly reveals the iteration complexity of \our{}.
\item Empirically, through extensive experiments based on simulated and real datasets, we demonstrate that \our{} outperforms most of the competing methods on several aspects. Several future directions for theoretical investigation are also hinted by the results of our numerical experiments. For more details, see Section \ref{sec:sim}. 
\end{enumerate}

\subsection*{Organization of the Paper}
The rest of our paper is organized as follows. In Section \ref{sec:problem}, we introduce the basic problem setup, the statistical model being analyzed, and key underlying assumptions. The new \our{} algorithm is then proposed in Section \ref{sec:method}, with its theoretical properties, in terms of sample and computational complexities, established in Section \ref{sec:theory}. In Section \ref{sec:sim}, we conduct numerical experiments with both simulated and real datasets that demonstrate the performance of our proposed method in practice, together with an extensive comparison between our method and other competing methods. Finally, we conclude our paper in Section \ref{sec:conclusion} with a discussion on future research directions.

\subsection*{Notation}

Before proceeding, we collect some notation frequently used throughout the paper. The distributions for multivariate Gaussian and sub-Gaussian random variables are denoted as $\calN (\bm{\mu},\bm{\Sigma})$ and $\subG (\bm{\mu},\bm{\Sigma})$, respectively, where $\bm{\mu}$ is the mean and $\bm{\Sigma}$ is the covariance matrix. For sub-Gaussian random variables $\bX$, by $\subG (\bm{\mu}, \bm{\Sigma})$, we mean that the population covariance matrix $\bbE [(\bX - \bm{\mu}) (\bX - \bm{\mu})^{\top}]$ is dominated by $\bm{\Sigma}$ in the positive semidefinite sense. In general, we reserve bold fonts for vectors and matrices and regular fonts for their elements.

Given a real-valued vector $\mathbf{b}$, let $\|\mathbf{b}\|_q$ denote its $\ell_q$ norms. For a real-valued matrix $\bB$, $\|\bB\|_2$, $\|\bB\|_{\F}$, $\|\bB\|_{\ast}$, and $\text{rank}(\bB)$ represent its spectral norm, Frobenius norm, trace norm, and rank respectively. $\col (\bB)$ and $\col^{\perp} (\bB)$ denote the column space spanned by $\bB$ and its orthocomplement. With slight abuse of notation, we denote $\bB^{\perp}$ as a matrix with $\col (\bB^{\perp}) = \col^{\perp} (\bB)$. In general, $\bB^{\perp}$ is not unique, and we just pick one arbitrarily. Throughout the paper, we use the sine angle distance to measure the distance between the column spaces spanned by two different matrices $\bB_{1}$ and $\bB_{2}$, defined as $\sin {\angle (\bB_{1}, \bB_{2})} \coloneqq \Vert \bB_{1}^{\top} \bB_{2}^{\perp} \Vert_{2}$ \citep{stewart1990matrix}. Given any positive integer $k$, we let $\I_{k}$ denote the $k \times k$ identity matrix. Given a matrix $\bM \in \bbR^{k_1 \times k_2}$, we introduce the following vectorization operator to flatten the matrix $\bM$:
\begin{align*}
\vec (\bM) \coloneqq \left[ \mathbf{m}_{1, \cdot}^{\top}, \cdots, \mathbf{m}_{k_{1}, \cdot}^{\top} \right]^{\top}.
\end{align*}
If $\bM$ is a symmetric positive semidefinite matrix, let $\bM^{\dag}$ denote its Moore-Penrose pseudo inverse (also known as the ridgeless regularized inverse $\bM^{\dag} \coloneqq \lim_{\lambda \downarrow 0} (\bM + \lambda \bI)^{-1}$): let $\bM = \bU \bD \bU^{\top}$ be the eigen-decomposition of $\bM$ and let $\bar{\bD}$ be the diagonal matrix of only the non-zero eigenvalues and $\bar{\bU}$ be the corresponding eigenvectors, then $\bM^{\dag} \equiv \bar{\bU} \bar{\bD}^{-1} \bar{\bU}$. Finally, given two matrices $\bA$ and $\bB$ of appropriate sizes, $\bA \odot \bB$ denotes their matrix Hadamard product.

\section{Problem setup and main assumptions}
\label{sec:problem}

\subsection{The statistical model}
\label{sec:model}

As alluded to in the Introduction, the Hard Parameter Sharing (HPS) linear model posits a factor structure of the regression coefficients among different tasks: There exists a task-invariant matrix $\bB^{\ast} \in \bbR^{s \times d}$ and a task-varying, but lower-dimensional matrix $\bW^{\ast} \in \bbR^{T \times s}$, with $s \leq d$, such that $\bm{\Theta}^{\ast} = \bW^{\ast} \bB^{\ast}$. Here $\bW^{\ast} \equiv [\bw_1^{\ast}, \cdots, \bw_T^{\ast}]^{\top}$ can be interpreted as the intrinsic task-specific coefficients and $\bB^{\ast}$ represents the task-invariant linear map that lifts $\bw_{t} \in \bbR^{s}$ to $\btheta_{t} \in \bbR^{d}$ for all $t \in [T]$. The matrix $\bB^{\ast}$ is sometimes called the task-invariant linear representation in the literature \citep{tripuraneni2021provable}.The matrix $\bB^{\ast}$ is assumed to have orthonormal columns as we are only interested in the linear space $\col(\bB^{\ast})$ spanned by the columns of $\bB^{\ast}$. To avoid clutter, $\bB^{\ast}$ will also mean $\col(\bB^{\ast})$ when it is clear from the context.

In multi-task or meta learning, an essential idea is to learn the low-dimensional shared representation $\bB^{\ast}$ by $\hat{\bB}$ trained from multiple data sources. Based on learnt $\hat{\bB}$, the original high-dimensional problem is reduced to lower dimensions, thus enjoying improved sample and computational complexity. Following the current literature \citep{boursier2022trace, collins2022maml, tripuraneni2021provable}, the quality of learning $\bB^{\ast}$ by $\hat{\bB}$ is gauged by the sine angle distance $\sin {\angle(\hat{\bB}, \bB^{\ast})}$.

With Model \eqref{form0} in place, we also need to specify some assumptions on the data. First, we impose the following distributional assumptions on the covariates $\bX_{t}$ and the additive noise $\beps_{t}$:
\begin{assumption}\label{assum1}
For every task $t \in [T]$, $x_{t, j, k} \overset{\text{i.i.d.}}{\sim} \subG (0, 1)$ for $j = 1, \cdots, m$ and $k = 1, \cdots, d$; and $\varepsilon_{t, j} \sim \subG (0, \sigma^{2})$ for $j = 1, \cdots, m$.
\end{assumption}

The distributional assumption imposed on $\bX_{t}$ implies that it satisfies the Restricted Isometry Property (RIP) (or properties alike such as the Restricted Eigenvalue Conditions \citep{van2009conditions, buhlmann2011statistics}) with high probability, a common condition imposed on the covariates to ensure identifiability (i.e. well-posedness along any sparse directions). Similar to the existing literature, we also assume the following condition on the intrinsic task-specific regression coefficients $\bW^{\ast}$:
\begin{assumption}[Task Diversity]\label{assum2}
Let $\lambda_{1}, \cdots, \lambda_{s}$ be the largest to the smallest eigenvalues of the Gram matrix of task-varying components averaged over tasks, $\bm{\Xi}^{\ast} \coloneqq T^{-1} \bW^{\ast \top} \bW^{\ast} \equiv T^{-1} \sum_{t = 1}^{T} \bw_{t}^{\ast} \bw_{t}^{\ast \top}$. There exists some universal constant $L_{s} > 0$ such that $\lambda_{s} \geq L_{s}$.
\end{assumption}
This assumption essentially requires that the task-varying information in the regression coefficients is sufficiently diverse across tasks, to the extent that their Gram matrix $\bm{\Xi}^{\ast}$ averaged across tasks is well-conditioned.

To ease exposition, we introduce some further notations: $\mathbb{X} \in \bbR^{mT \times dT}$ denotes the block-diagonal matrix with block-diagonal elements $\bX_{t} \in \bbR^{m \times d}$ for $t = 1, \cdots, T$, $\bY = (\by_1^{\top}, \cdots, \by_T^{\top})^{\top} \in \bbR^{N \times 1}$ and $\bEps = (\beps_1^{\top}, \cdots, \beps_T^{\top})^{\top} \in \bbR^{N \times 1}$, with $N \coloneqq m T$. Additionally, given any $\bm{\Theta} \in \bbR^{T \times d}$, we define the linear mapping $\calA: \bbR^{T \times d} \rightarrow \bbR^{N \times 1}$ as follows
\begin{equation}\label{form1}
\calA (\bm\Theta) \coloneqq \mathbb{X} \cdot \vec (\bm\Theta) \equiv \left[ \begin{array}{ccc}
    \bX_1 & & \\
    & \ddots & \\
    & & \bX_T \\
    \end{array} \right]
    \left[ {\begin{array}{c}
    \btheta_1 \\
    \vdots \\
    \btheta_T \\
    \end{array} } \right].
\end{equation}
Hence we can rewrite Model \eqref{form0} compactly in matrix form as:
\begin{align*}
\bY = \calA (\bm\Theta^{\ast}) + \bEps.
\end{align*}
Our formulation can also be adapted to the case where different tasks have different sample sizes $m_{1}, \cdots, m_{T}$. But to not lead readers astray, we decide to focus on the simpler, equal-sized case, i.e. $m_{1} \equiv \cdots \equiv m_{T} \equiv m$.

\subsection{Restricted Isometry Property}

Similar to the compressed sensing literature, In the pursuit of solving the aforementioned problem through matrix rank minimization, the concept of RIP plays a pivotal role. While RIP for sparse vectors in compressed sensing was originally introduced in \cite{candes2005decoding}, it has been extended to matrices to address matrix rank minimization, as seen in \cite{recht2010guaranteed}.  Here, we define RIP in the context of Model \eqref{form0}:
\begin{definition}[Restricted Isometry Property]\label{RIP}
For any integer $r$ where $1\le r \le d$, consider the linear operator $\calB:\mathbb{R}^{T \times d} \rightarrow \mathbb{R}^{N \times 1}$. The operator $\calB$ satisfies the Restricted Isometry Property with the restricted isometry
constant $\delta_r(\calB)$, where $\delta_r(\calB)$ is the smallest constant for which the following inequality holds:
\begin{equation}\label{rip}
    (1-\delta_r(\calB))\|\bm\Theta\|_{\F}^2 \le \|\calB(\bm\Theta)\|_2^2 \le (1+\delta_r(\calB))\|\bm\Theta\|_{\F}^2
\end{equation}
for all $\bm\Theta\in\mathbb{R}^{T \times d}$ with $\text{rank}(\bm\Theta)\le r$.
\end{definition}

In our context, we define the linear operator $\calB=\mathcal{A}/\sqrt{m}$, where $\mathcal{A}$ is as defined in Equation \eqref{form1}. Equation \eqref{rip} can be rewritten as:
\begin{equation}\label{rip1}
    (1-\delta_r(\calB))(\sum_{t=1}^T\|\btheta_t\|_2^2) \le \sum_{t=1}^T\|\frac{1}{\sqrt{m}}\bX_t\btheta_t\|_2^2 \le (1+\delta_r(\calB))(\sum_{t=1}^T\|\btheta_t\|_2^2)
\end{equation}
Since $\text{rank}(\bm\Theta)\le r$, all $\btheta_t$ lie within an $r$-dimensional subspace. Therefore, there exist an orthonormal matrix $\bB\in\mathbb{R}^{r \times d}$($\bB\bB^{\top}=\bI_r$) and a set of vectors $\{\bv_t\}_{t=1}^{T}$ such that $\btheta_t=\bB^{\top}\bv_t$. Equation \eqref{rip1} can then be equivalently expressed as:
\begin{equation}\label{rip2}
    (1-\delta_r(\calB))(\sum_{t=1}^T\bv_t^{\top}\bv_t)
    \le \sum_{t=1}^T\frac{1}{m}\bv_t^{\top}\bB\bX_t^{\top}\bX_t\bB^{\top}\bv_t \le
    (1+\delta_r(\calB))(\sum_{t=1}^T\bv_t^{\top}\bv_t).
\end{equation}
This version of the RIP condition can often be justified by the distributional assumptions on the feature vector $\bx$, using matrix concentration inequalities \citep{tropp2015introduction} (see Appendix \ref{app-proof-ripr}). Then we obtain the following:

\begin{theorem}\label{ripr}
The linear operator $\calB=\mathcal{A}/\sqrt{m}$ where $\mathcal{A}$ is defined in Equation \eqref{form1}, satisfies the Restricted Isometry Property with the restricted isometry constant:
\begin{equation*}
    \delta_r(\calB)\le \sqrt{\frac{8(a+1)r-4}{3m}\log{\frac{2r}{\epsilon}}}
\end{equation*}
with probability at least $(1-\epsilon)^T(1-\frac{15}{a^2})^{rT}$ for any $a>0$.
\end{theorem}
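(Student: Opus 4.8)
The plan is to reduce the matrix-RIP bound in Theorem~\ref{ripr} to a per-task concentration statement for the random Gram matrix $\frac{1}{m}\bX_t^\top\bX_t$ restricted to an $r$-dimensional subspace, and then to control all directions simultaneously via the reformulation~\eqref{rip2}. First I would fix a task $t$ and an orthonormal $\bB\in\bbR^{r\times d}$ with $\bB\bB^\top=\bI_r$, and study the $r\times r$ random matrix $\bB\bX_t^\top\bX_t\bB^\top=\sum_{j=1}^m (\bB\bx_{t,j})(\bB\bx_{t,j})^\top$. Each summand $\bS_j \coloneqq (\bB\bx_{t,j})(\bB\bx_{t,j})^\top - \bI_r$ is centered (since $x_{t,j,k}$ are i.i.d.\ with unit variance, $\bbE[(\bB\bx_{t,j})(\bB\bx_{t,j})^\top]=\bB\bB^\top=\bI_r$), symmetric, and independent across $j$. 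So the quantity $\|\frac1m\bX_t\bB^\top\bv\|_2^2 - \|\bv\|_2^2$ appearing in~\eqref{rip2} is exactly $\frac1m \bv^\top(\sum_j \bS_j)\bv$, and its supremum over unit $\bv$ is $\frac1m\|\sum_{j=1}^m \bS_j\|$. Thus it suffices to bound $\|\bZ_t\|$ with $\bZ_t \coloneqq \sum_{j=1}^m \bS_j$ using Lemma~\ref{mci}.

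Next I would supply the two inputs Lemma~\ref{mci} needs: a uniform norm bound $L$ on the summands, and the variance proxy $\nu(\bZ_t)$. For the variance, $\bbE[\bS_j\bS_j^\top]=\bbE[\|\bB\bx_{t,j}\|_2^2\,(\bB\bx_{t,j})(\bB\bx_{t,j})^\top] - \bI_r$, and a direct fourth-moment computation for coordinatewise i.i.d.\ unit-variance entries gives $\bbE[\bS_j\bS_j^\top]\preceq c\, r\,\bI_r$ for an absolute constant $c$ (the $r$ factor comes from $\bbE\|\bB\bx_{t,j}\|_2^2=r$), whence $\nu(\bZ_t)\le c\,r\,m$; matching the stated bound forces the constant bookkeeping to land on $\nu(\bZ_t)\le \tfrac{2}{3}(a+1)rm$ or similar, absorbing the sub-Gaussian fourth-moment constant into the parameter $a$. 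For the uniform bound, $\|\bS_j\|\le \|\bB\bx_{t,j}\|_2^2 + 1$, and since $\|\bB\bx_{t,j}\|_2^2$ concentrates around $r$, on the event $\{\|\bB\bx_{t,j}\|_2^2\le a r\}$ we may take $L = (a+1)r$ roughly (again the exact constant is tuned to match the $8(a+1)r-4$ in the theorem). Plugging $L$ and $\nu(\bZ_t)$ into the matrix Bernstein tail with $u = m\delta$ and choosing $\delta$ so that the exponent equals $\log\frac{2r}{\epsilon}$ yields $\Pr(\|\bZ_t\|\ge m\delta)\le \epsilon$ for the claimed $\delta$; this is the per-task failure probability $\epsilon$, and the truncation event for all $m$ samples in task $t$ contributes the factor $(1-\tfrac{15}{a^2})^{r}$ (one sub-exponential/chi-square-type tail per sample, via a standard $\chi^2$ or Hanson--Wright bound for $\|\bB\bx_{t,j}\|_2^2$, with constant $15$). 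Taking a union/intersection over the $T$ independent tasks multiplies both factors to the $T$-th power, giving probability at least $(1-\epsilon)^T(1-\tfrac{15}{a^2})^{rT}$.

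One subtlety is that~\eqref{rip2} must hold \emph{simultaneously} over all rank-$\le r$ matrices $\bm\Theta$, i.e.\ over all $r$-dimensional column subspaces $\bB$ and all $\{\bv_t\}$, not just for a fixed $\bB$. I would handle this by noting that for a fixed $\bB$ the bound on $\frac1m\|\bZ_t\|$ already controls all $\bv_t$ uniformly (it is an operator-norm bound), and that the randomness enters only through the $\bX_t$; but to range over all $\bB$ one either invokes a net argument over the Grassmannian of $r$-planes (standard covering number $(C/\eta)^{rd}$, which would change the dependence and does not match the clean statement) or — more likely what the authors intend — reads Theorem~\ref{ripr} as asserting, for each \emph{fixed} rank-$\le r$ matrix, the two-sided bound with the stated constant and probability, which is exactly what the matrix Bernstein argument above delivers. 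The main obstacle is therefore the constant accounting: getting the exponent in matrix Bernstein, the fourth-moment variance bound, and the truncation-level $a$ to combine into precisely $\sqrt{\tfrac{8(a+1)r-4}{3m}\log\tfrac{2r}{\epsilon}}$ and the probability $(1-\epsilon)^T(1-\tfrac{15}{a^2})^{rT}$, rather than any analytical difficulty; everything else is a routine application of Lemma~\ref{mci} together with sub-Gaussian moment estimates.
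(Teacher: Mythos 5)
Your proposal follows essentially the same route as the paper's proof: fix a task $t$ and an orthonormal $\bB$, write $\frac{1}{m}\bB\bX_t^{\top}\bX_t\bB^{\top}-\bI_r$ as a sum of independent centered matrices, bound the summands' norms on a Chebyshev-type truncation event for $\|\bB\bx_{t,j}\|_2^2$ (yielding the $(1-15/a^2)^{r}$ factor) and the variance proxy by $((a+1)r-1)/m$, apply Lemma \ref{mci} with $u=\delta_r$, and multiply the per-task success probabilities over the $T$ tasks. The uniformity subtlety you flag is genuine but is shared by the paper's own argument, which also works with a fixed subspace $\bB$ and performs no net or covering step over the Grassmannian, so your reading of the statement as a fixed-subspace concentration bound matches what is actually proved.
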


The proof of Theorem \ref{ripr} is presented in Appendix \ref{app-proof-ripr}. It demonstrates that when the data generating mechanism follows Assumption \ref{assum1}, the associated linear operator $\mathcal{B}$ satisfies the RIP condition with high probability. This guarantee forms the building block for the convergence analysis of our proposed method.

\section{Our method}
\label{sec:method}

In this section, we are ready to describe the new methodology for learning the invariant subspace $\bB^{\ast}$ under the HPS linear model. The method is coined as the \texttt{Meta Subspace Pursuit} Algorithm, or \our{} for short.

In a nutshell, \our{} consists of two steps. In the first step, task-specific regressions are solved jointly to obtain the coefficient matrix estimator $\hat{\bm\Theta}$. To simplify exposition, we assume that the rank $s$ of $\bm\Theta^\ast$ (or equivalently, the dimension $s$ of the subspace $\bB$) is known, whence $\hat{\bm{\Theta}}$ is the solution to the following constrained optimization problem:
\begin{equation}\label{min0}
    \begin{aligned}
     \min_{\bm\Theta} \quad &\mathcal{L}(\bm\Theta):=\|\mathcal{A}(\bTheta)-\bY\|_{\texttt{F}}^2,\\
     \text{s.t.} \quad & \text{rank}(\bTheta) \le s.
    \end{aligned}
\end{equation}
Here, $\mathcal{A} (\bm\Theta)$ is the linear mapping defined in Equation \eqref{form1}. 

In the second step, with $\bm\Theta$ just attained from step one, it is then straightforward to apply either QR matrix factorization or SVD to obtain the task-invariant subspace $\bm{B}$. In the following, we introduce the proposed \texttt{Meta Subspace Pursuit} (\our{}) algorithm for solving this constrained optimization problem. 

\subsection{The \texttt{Meta Subspace Pursuit} (\our{}) Algorithm}
\label{sec:alg}

The \our{} algorithm can be viewed as a variant of the iterative hard thresholding (IHT) algorithm, initially introduced for solving compressed sensing problems in \citep{blumensath2009iterative}. The IHT algorithm has since then been adapted and generalized to handle matrix rank minimization problems, similar to the formulation in Equation \eqref{intro1}. Here we further adapt the matrix rank minimization problem to solve the task-invariant subspace learning problem. Since \eqref{min0} and \eqref{intro1} have the same form when $\mathcal{A}(\bTheta)$ and $\bY$ are vectorized, we can use this kind of algorithm to solve \eqref{min0}.

The \our{} algorithm constitutes the following iterative subroutines:
\begin{enumerate}
    \item \textbf{Gradient Descent (GD) Step}. In each iteration, a GD update is performed independently for each task to reduce the task-specific loss. For the $t$-th task, the expression of $(k + 1)$-th iteration is:
    \begin{equation}
       \label{update}
        \hat{\btheta}_t^{(k+1)} = \btheta_t^{(k)} - \nabla_{\btheta_t} \mathcal{L} (\bTheta) |_{\btheta_t = \btheta_t^{(k)}} = \btheta_t^{(k)} + \frac{\gamma}{m} \bX_t^{\top} (\by_t - \bX_t \btheta_t^{(k)})
    \end{equation}
    where $\gamma$ is the step size and $\btheta_t^{(k)}$ is the regression coefficients updated after the $k$-th step. The updated regression coefficients are $\hat{\bm\Theta}^{(k+1)} \coloneqq [\hat{\btheta}_1^{(k+1)}, \cdots, \hat{\btheta}_T^{(k+1)}]^{\top}$.
    
    \item \textbf{Hard Thresholding (HT) Step}. We use SVD to factorize $\hat{\bm\Theta}^{(k+1)}=\hat{\bU}^{(k+1)} \hat{\bD}^{(k+1)} {{}\hat{\bV}^{(k+1)}}^{\top}$, where $\hat{\bU}^{(k+1)} \in \mathbb{R}^{T \times T}$, $\hat{\bD}^{(k+1)} \in \mathbb{R}^{T \times d}$ and $\hat{\bV}^{(k+1)} \in \mathbb{R}^{d \times d}$. Then, only the largest $s$ singular values and the corresponding left and right singular vectors are retained, where we recall that $s$ is the task-invariant subspace dimension. We obtain a new matrix $\bm\Theta^{(k + 1)} = \bU^{(k + 1)} \bD^{(k + 1)} {\bV^{(k + 1)}}^{\top}$, where $\bU^{(k+1)} \in \mathbb{R}^{T \times s}$, $\bV^{(k + 1)} \in \mathbb{R}^{d \times s}$ and $\hat{\bD}^{(k + 1)} \in \mathbb{R}^{s \times s}$. For short, we denote this hard thresholding operation as $\calH_s (\cdot)$, so $\calH_s (\hat{\bm\Theta}^{(k + 1)}) = \bm\Theta^{(k + 1)}$.
    
    \item \textbf{Representation Learning Update}. The right singular vectors $\bV^{(k + 1) \top}$ is taken as the learnt task-invariant subspace $\bB^{(k + 1)}$ at the $(k+1)$-th iteration.
\end{enumerate}
The \our{} algorithm iteratively performs the above three steps until a stopping criterion is met. The complete procedure of the \our{} is presented in Algorithm \ref{IHT}. It is noteworthy that the step size $\gamma$ is an important hyperparameter -- the algorithm converges to the desired solution only if $\gamma$ is set appropriately.

In the \our{} algorithm, the choice of the parameter $s$ in the hard thresholding step is an important hyperparameter. In the ideal case where the dimension of the shared representation among tasks is known, setting $s$ to that value is sufficient. However, in realistic scenarios, the true dimension of the shared representation is generally unknown. In such cases, the algorithm can be executed on a subset of samples in each task using different values of $s$, and the most appropriate value can then be selected based on the generalization performance of the learned shared representation on other tasks.

\begin{algorithm}[ht]
\caption{\texttt{Meta Subspace Pursuit} (\our{})}
\label{IHT}
    \SetAlFnt
    \SetAlgoLined
    \SetKwData{Left}{left}\SetKwData{This}{this}\SetKwData{Up}{up}\SetKwFunction{Union}{Union}\SetKwFunction{FindCompress}{FindCompress}\SetKwInOut{Input}{Input}\SetKwInOut{Output}{Output}
    Initialize $k = 0$ and $\bm\Theta^{(0)}$ arbitrarily; Set step size $\gamma$\;
    \While{not done}{
        Gradient Descent: $\hat{\bm\Theta}^{(k+1)} \coloneqq [\hat{\btheta}_1^{(k+1)},\cdots,\hat{\btheta}_T^{(k+1)}]^{\top}$, where $\hat{\btheta}_t^{(k+1)} = \btheta_t^{(k)} + \dfrac{\gamma}{m} \bX_t^{\top} (\by_t - \bX_t \btheta_t^{(k)})$\;
        Hard Thresholding: $\bm\Theta^{(k+1)} = \calH_s (\hat{\bm\Theta}^{(k+1)})$, i.e. $\bm\Theta^{(k+1)} = \bU^{(k+1)} \bD^{(k+1)} \bV^{(k+1) \top}$\;
        Representation Learning Update: $\bB^{(k+1)} = \bV^{(k+1) \top}$\;
        $k \leftarrow k+1$\;
    }
\end{algorithm}

\subsection{Theoretical analysis of \our{}}
\label{sec:theory}

In this section, our goal is to establish error bounds for the estimator $\bm\Theta^{(k)}$ obtained by the \our{} algorithm and the  ground truth $\bm\Theta^{\ast}$. Additionally, we aim to constrain the sine angle distance between the subspaces $\mathbf{B}^{(k)}$ and $\bB^{\ast}$.

In this context, we operate under the assumption that RIP is valid, as established in Theorem \ref{ripr}. This assumption provides us the leverage to establish error bounds for the $\bm\Theta^{(k)}$ estimator. Our primary theorem is presented below.

\begin{theorem}\label{thm-theta}
Under Assumptions \ref{assum1} to \ref{assum2}, we suppose the restricted isometry constant of $\calB=\mathcal{A}/\sqrt{m}$ is $\delta_s(\calB)$($\mathcal{A}$ is defined in Equation \eqref{form1} and $\delta_s(\calB)$ satisfies Theorem \ref{ripr}). If $\frac{1}{2(1-2\delta_{3s}(\calB))}<\gamma<\frac{1}{1-2\delta_{3s}(\calB)}$, for any $a > 0$, with probability at least $(1 - \epsilon) \left(1 - \frac{15}{a^2} \right)^{3s} \left( 1 - \frac{2}{d} - \frac{14}{m} - \frac{208}{dm} \right)^T$, the following holds:
\begin{equation}\label{res1-0}
\|\bm\Theta^{\ast} - \bm\Theta^{(k)}\|_{\texttt{F}} \le \left[2(1-\gamma+2\gamma\delta_{3s}(\calB))\right]^k\|\bm\Theta^{\ast}-\bm\Theta^{(0)}\|_{\F} + \frac{2\sqrt{2}}{1-2(1-\gamma+2\gamma\delta_{3s}(\calB))}\sqrt{\frac{dT\sigma^2}{m}}.
\end{equation}

Moreover, as $k \rightarrow \infty$, the above bounds can be simplified to:
\begin{equation}\label{res1-2}
    \|\bm\Theta^{\ast}-\bm\Theta^{(k)}\|_{\texttt{F}} \le O(\sqrt{\frac{\sigma^2dT}{m}})
\end{equation}
\end{theorem}
The proof of Theorem \ref{thm-theta} is deferred to Appendix \ref{app-proof-thm-theta}. Our proof loosely follows the strategy of \cite{goldfarb2011convergence}. However, during the proof, we need to derive more refined error bounds depending on both the sample size $m$ and the data dimension $d$. 

The bound in Frobenius norm shown in Theorem \ref{thm-theta} can then be used to establish convergence rate of the learnt task-invariant subspace $\bB^{(k)}$ after $k$-th iteration of \our{} to the truth $\bB^{\ast}$.

\begin{theorem}\label{thm-B}
Under the same assumptions of Theorem \ref{thm-theta}, we have:
\begin{equation}\label{res2-1}
    \sin{\angle (\bB^{(k)}, \bB^{\ast})} \le \frac{\|\bm\Theta^{\ast}-\bm\Theta^{(k)}\|_{\texttt{F}}}{\sqrt{L_{s} T}}.
\end{equation}
Furthermore, with the error bound for $\|\bm\Theta^{\ast} - \bm\Theta^{(k)}\|_{\texttt{F}}$ shown in \eqref{res1-0}, i.e. with probability at least $(1 - \epsilon) (1 - \frac{15}{a^2})^{3s} ( 1 - \frac{2}{d} - \frac{14}{m} - \frac{208}{dm})^T$, we have 
\begin{equation}\label{res2-2}
    \sin{\angle (\bB^{(k)}, \bB^{\ast})} \le O \left( \sqrt{\frac{\sigma^2d}{L_{s}m}} \right).
\end{equation}
\end{theorem}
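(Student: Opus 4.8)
\textbf{Proof proposal for Theorem \ref{thm-B}.}

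The plan is to reduce the subspace-recovery bound \eqref{res2-1} to the Frobenius-norm bound of Theorem \ref{thm-theta}, so that \eqref{res2-2} follows immediately by substituting \eqref{res1-2}. The starting point is the perturbation identity for the column space of a rank-$s$ matrix: writing $\bm\Theta^{\ast} = \bW^{\ast} \bB^{\ast}$ with $\bB^{\ast}$ orthonormal, and letting $\bB^{(k)}$ be the orthonormal right-singular factor of $\bm\Theta^{(k)}$, I would use the classical bound $\sin \angle(\bB^{(k)}, \bB^{\ast}) = \Vert \bB^{(k) \top} \bB^{\ast \perp} \Vert_2 \le \Vert \bm\Theta^{\ast} - \bm\Theta^{(k)} \Vert_2 / \sigma_{s}(\bm\Theta^{\ast})$, a Wedin/Davis--Kahan-type inequality applied to the right singular subspaces (valid since both matrices have rank $\le s$). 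Then I would pass from the spectral norm to the Frobenius norm on the numerator via $\Vert \cdot \Vert_2 \le \Vert \cdot \Vert_{\F}$.

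The second ingredient is a lower bound on $\sigma_{s}(\bm\Theta^{\ast})$ in terms of the task-diversity constant. Since $\bm\Theta^{\ast} = \bW^{\ast} \bB^{\ast}$ and $\bB^{\ast}$ has orthonormal rows, the nonzero singular values of $\bm\Theta^{\ast}$ coincide with those of $\bW^{\ast}$, so $\sigma_{s}(\bm\Theta^{\ast})^2 = \lambda_{\min}(\bW^{\ast \top} \bW^{\ast}) = T \lambda_{s}(\bm\Xi^{\ast}) \ge L_{s} T$ by Assumption \ref{assum2} (I am reading $L_s$ in the statement as the constant $L$ of that assumption). Combining the two displays yields $\sin \angle(\bB^{(k)}, \bB^{\ast}) \le \Vert \bm\Theta^{\ast} - \bm\Theta^{(k)} \Vert_{\F} / \sqrt{L_s T}$, which is exactly \eqref{res2-1}. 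For \eqref{res2-2}, I would then insert the $k \to \infty$ bound $\Vert \bm\Theta^{\ast} - \bm\Theta^{(k)} \Vert_{\F} \le O(\sqrt{\sigma^2 d T / m})$ from \eqref{res1-2}, so the $T$'s cancel and $\sin \angle(\bB^{(k)}, \bB^{\ast}) \le O(\sqrt{\sigma^2 d / (L_s m)})$ on the same high-probability event, since no new randomness is introduced in this deterministic post-processing step.

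The one point that needs care — and the main (mild) obstacle — is justifying the perturbation inequality $\sin \angle(\bB^{(k)}, \bB^{\ast}) \le \Vert \bm\Theta^{\ast} - \bm\Theta^{(k)} \Vert_2 / \sigma_{s}(\bm\Theta^{\ast})$ when $\bm\Theta^{(k)}$ is itself exactly rank $s$ (so there is no spectral gap \emph{above} $\sigma_s$, only the gap between $\sigma_s(\bm\Theta^{\ast}) > 0$ and the zero singular values of the orthogonal complement). This is the rank-$s$-to-rank-$s$ case of Wedin's theorem: the relevant gap is $\sigma_s(\bm\Theta^{\ast}) - \sigma_{s+1}(\bm\Theta^{(k)}) = \sigma_s(\bm\Theta^{\ast}) - 0$, and one checks the residual terms $\Vert (\bm\Theta^{\ast} - \bm\Theta^{(k)}) \bB^{(k)} \Vert$ and $\Vert \bB^{\ast \perp \top}(\bm\Theta^{\ast} - \bm\Theta^{(k)})\Vert$ are each bounded by $\Vert \bm\Theta^{\ast} - \bm\Theta^{(k)} \Vert_2$. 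Alternatively, to keep the argument self-contained one can argue directly: for any unit vector $\bu$ in $\col(\bB^{(k) \top}) \cap$ (the span making $\bB^{(k)\top}\bB^{\ast\perp}$ achieve its norm), $\bm\Theta^{(k)} \bu$ lies in $\col(\bm\Theta^{(k)})$ while $\bm\Theta^{\ast}\bu$ has its component along $\bB^{\ast\perp}$ of size $\ge \sigma_s(\bm\Theta^{\ast}) \sin\angle(\bB^{(k)},\bB^{\ast})$; since $\bm\Theta^{(k)}\bu$ has no such component... — making this last step rigorous is the only real work, and it is standard. Everything else is substitution.
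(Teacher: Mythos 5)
Your proposal is correct and arrives at both \eqref{res2-1} and \eqref{res2-2}, but the route to \eqref{res2-1} is genuinely different from the paper's. The paper never invokes Wedin's theorem: following Lemma 15 of \cite{tripuraneni2021provable}, it notes that the minimizer of $\tilde{\bW} \mapsto \|\tilde{\bW}\bB^{(k)} - \bW^{\ast}\bB^{\ast}\|_{\F}^2$ is $\tilde{\bW}^{\ast} = \bW^{\ast}\bB^{\ast}\bB^{(k)\top}$, so that $\|\bW^{\ast}\bB^{\ast}(\bB^{(k)\top}\bB^{(k)} - \I_d)\|_{\F}^2 \le \|\bm\Theta^{(k)} - \bm\Theta^{\ast}\|_{\F}^2$, and then lower-bounds the left-hand side by $\bar{\lambda}_{s}(\bW^{\ast\top}\bW^{\ast})\,\|\bB^{\ast}\bB_{k\bot}^{\top}\|_2^2 \ge L T \sin^2{\angle(\bB^{(k)},\bB^{\ast})}$ using $\Tr(MN)\ge \sigma_{\min}(M)\Tr(N)$. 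This variational argument is precisely the ``one real piece of work'' you flagged, carried out with no spectral-gap bookkeeping; the direct argument you left dangling closes in one line by writing $\bm\Theta^{\ast}(\I_d - P_k) = (\bm\Theta^{\ast} - \bm\Theta^{(k)})(\I_d - P_k)$, where $P_k$ is the orthogonal projection onto the row space of $\bm\Theta^{(k)}$, whence $\sqrt{L T}\,\sin{\angle(\bB^{(k)},\bB^{\ast})} \le \|\bW^{\ast}\bB^{\ast}(\I_d - P_k)\|_{\F} \le \|\bm\Theta^{\ast}-\bm\Theta^{(k)}\|_{\F}$. Your Wedin route is also valid --- the relevant gap is $\sigma_s(\bm\Theta^{\ast}) - \sigma_{s+1}(\bm\Theta^{(k)}) = \sigma_s(\bm\Theta^{\ast}) \ge \sqrt{LT}$ because $\bm\Theta^{(k)}$ has rank at most $s$ --- but it imports a heavier theorem whose degenerate-rank case you would still have to state and verify carefully, which is exactly what the elementary identity avoids. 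The remaining ingredients, namely $\sigma_s(\bm\Theta^{\ast})^2 = T\lambda_s(\bm\Xi^{\ast}) \ge LT$ from Assumption \ref{assum2} (your reading of $L_s$ as the constant $L$ there is the intended one) and the substitution of \eqref{res1-2} so that the factors of $T$ cancel on the same high-probability event, coincide with the paper's.
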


The proof of Theorem \ref{thm-B} is provided in Appendix \ref{app-proof-thm-B}.

\subsubsection{Remarks on the Main Theoretical Results}
Theorem \ref{thm-theta} provides valuable insights into the error bound of $\bm\Theta^{(k)}$ and $\bm\Theta^{\ast}$. This result places significant reliance on RIP, which in turn necessitates certain conditions to be met. Specifically, to ensure that the restricted isometry constant in Theorem \ref{ripr} remains small, the value of $m$ should not be small($m \ge \Omega (s \log s)$). Additionally, the distance of the initial point should not be excessively large. In practice, the choice of the zero point is often suitable and aligns with the selection made in the subsequent numerical experiments. Importantly, this error bound diminishes as $m$ increases.

In practice, we often focus more on the distance metric $\sin{\angle(\bB^{(k)},\bB^{\ast})}$. Theorem \ref{thm-B} demonstrates that this distance can be bounded by the distance presented in Theorem \ref{thm-theta}, and it also tends to decrease as $m$ increases. Notably, this distance should also decrease with an increase in $T$, as observed in the numerical experiments in next section. While it has been empirically observed, further theoretical validation remains a topic for future research.

The theoretical results depend critically on the matrix RIP, which itself arises from the assumptions introduced in Section \ref{sec:model}. We acknowledge that these assumptions are idealized and may be difficult to verify in real-world multi-task problems. However, the experiments on real data (see Section \ref{subsec: real}) demonstrate that our method remains stable and achieves competitive performance, indicating that it captures meaningful low-rank structures and possesses practical significance within the multi-task learning paradigm.

Finally, we comment on how the theoretical results scale with sample size per task $m$ and compare the scaling with that of several other algorithms. Recalling that we require $m \ge \Omega(s \log s)$, which results in a smaller $m$ requirement compared to other methods. For instance, Burer-Monteiro factorization in \cite{tripuraneni2021provable} necessitates $m\ge\Omega(s^4\log(T))$, and alternating minimization in \cite{thekumparampil2021statistically} requires $m\ge\Omega(s^2\log(T))$. The Method of Moments in \cite{tripuraneni2021provable} indeed shares a similar requirement with our approach, where it requires $m \ge \Omega (s \log s)$. The error bound of the obtained $\bm\Theta$ for this method, as specified in Theorem \ref{thm-theta}, is $O(\sigma\sqrt{s\frac{\sigma^2sd+T}{m}}+s\sqrt{\frac{d}{m}})$. This error bound may not be as favorable, as evident from the numerical results presented in the subsequent section. Another method, nuclear norm minimization in \cite{boursier2022trace}, has been shown to require $m\ge\Omega(1)$, which is favorable in theory but may not perform as well in practical scenarios, especially when $s$ is small. The error bound of the obtained $\bm\Theta$ for this method, as specified in Theorem \ref{thm-theta}, is $O \left( \sigma \sqrt{s \left( \frac{d^{2}}{m^{2}} + \frac{T}{m} \right)}+\sqrt{s \frac{d}{m} \frac{\max \{d, T\}}{m}} \right)$. This bound is less favorable when the noise level $\sigma$ is small. In summary, our method \our{} holds a distinct advantage when dealing with scenarios where the sample complexity per task $m$ is small.

\section{Numerical Results}
\label{sec:sim}

\subsection{Simulated Data}
\label{subsec:simulated}

In this section, we conduct extensive numerical experiments to evaluate our proposed method, benchmarked against several competing methods, including both the classical model-based methods such as Method of Moments (\texttt{MoM}) and the modern model-agnostic methods such as $\ANIL$. Specifically, we consider a setup where $d = 100$ and $s = 5$, following the experimental setup outlined in \cite{tripuraneni2021provable,thekumparampil2021statistically,boursier2022trace}. We use the following performance metric to compare different methods: (1) the normalized squared Frobenius norm distance
\begin{equation*}
\mathbf{Dist}_1 (\bm\Theta^\dag, \bm\Theta^{\ast}) \coloneqq \|\bm\Theta^\dag - \bm\Theta^{\ast}\|_{\texttt{F}}^2/T
\end{equation*}
and (2) the sine angle distance
\begin{equation*}
\mathbf{Dist}_2 (\bB^\dag, \bB^{\ast}) \coloneqq \sin{\angle(\bB^\dag, \bB^{\ast})}
\end{equation*}
where $\bm\Theta$ and $\bB$, respectively, denote arbitrary values that $\bm\Theta$ and $\bB$ can take and $\bm\Theta^\ast$ and $\bB^\ast$ denote the ground truths, respectively. Using repeating draws from the true data generating process, we investigate on average how these performance metrics are affected by key parameters including the task-specific sample size $m$, the total number of tasks $T$, and the noise level $\sigma$. Additionally, we examine their dynamics changing with iteration steps and system running time in a single experiment. More details on the experimental setups can be found in the Appendix.

We compare our method \our{} with several alternative approaches:
\begin{itemize}
    \item \texttt{AltMin} and \texttt{AltMinGD}: The alternating minimization and alternating minimization gradient descent algorithms studied in \cite{thekumparampil2021statistically}.
    \item \texttt{BM}: The Burer-Monteiro factorization method as presented in \cite{tripuraneni2021provable}.
    \item \texttt{MoM}: The Method of Moments algorithm described in \cite{tripuraneni2021provable}.
    \item \texttt{MoM2}: Algorithm 2 presented in \cite{kong2020meta}.
    \item \texttt{NUC}: The nuclear norm minimization method proposed in \cite{boursier2022trace}.
    \item \texttt{ANIL}: The ANIL algorithm outlined in \cite{collins2022maml}, which utilizes all available data in each iteration.
    \item \our{}: Our \texttt{Meta Subspace Pursuit} method.
\end{itemize}
\texttt{ANIL} was not originally designed for the linear model considered in this work, although it was applied to this model in \cite{collins2022maml}. Here, we include \texttt{ANIL} as an illustrative comparison to show the difference between low-rank approximation–based methods and meta-learning approaches.

\begin{figure}[ht]
    \centering
    \includegraphics[width=.45\textwidth]{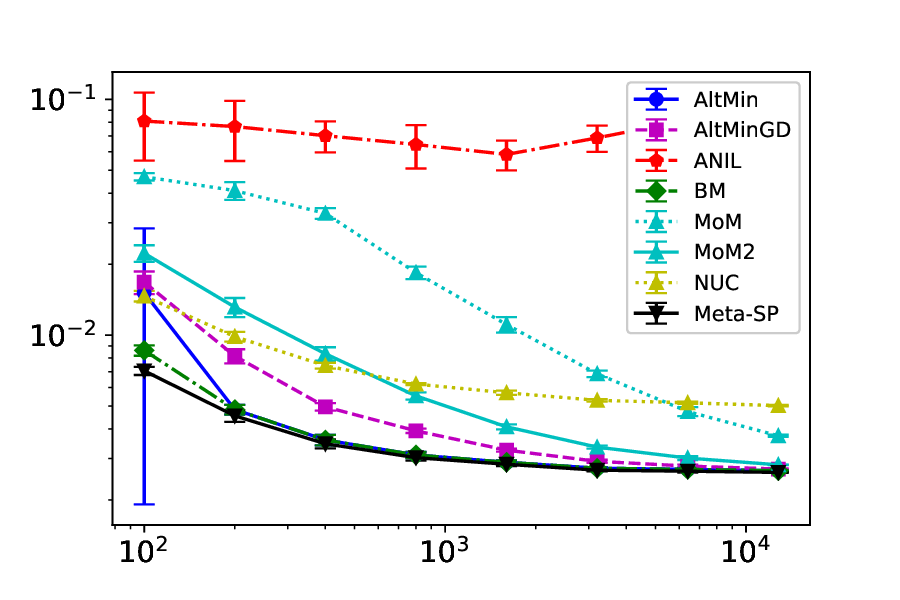}
    \includegraphics[width=.45\textwidth]{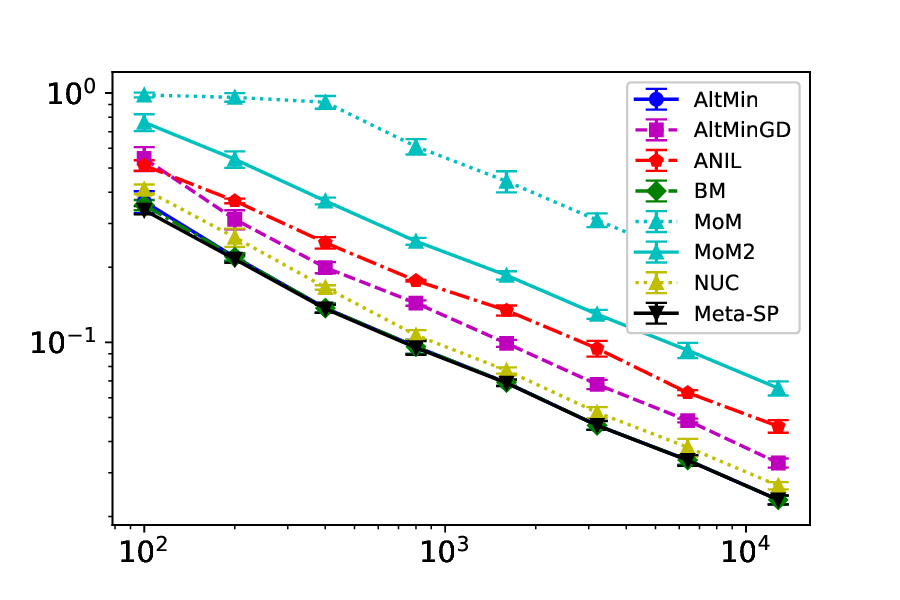}
    \caption{Evolution of $\mathbf{Dist}_1$ (left) and $\mathbf{Dist}_2$ (right) with the number of tasks $T$ for $s=5$, $m=25$ and $\sigma=1$.}
    \label{fig_T_m=25}
\end{figure}

\begin{figure}[ht]
    \centering
    \includegraphics[width=.45\textwidth]{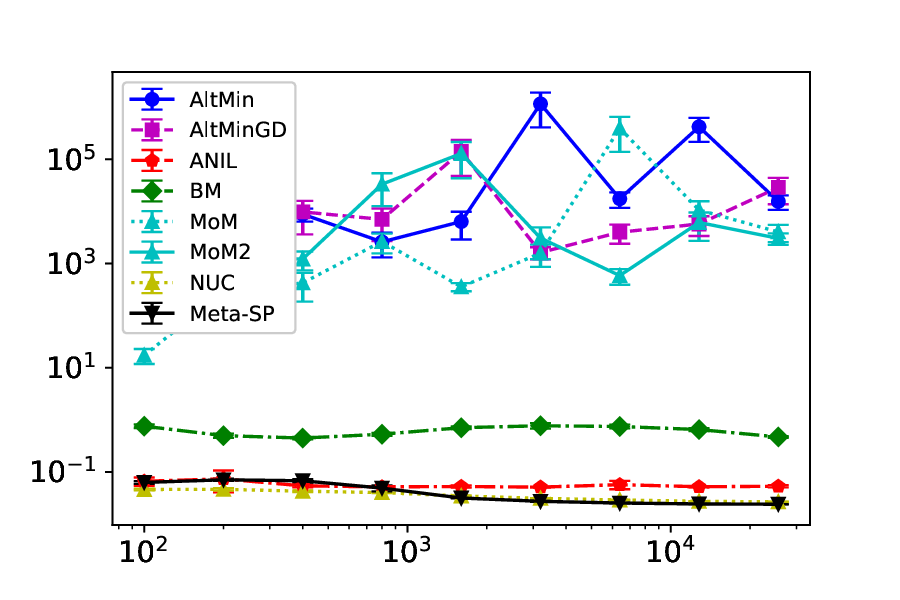}
    \includegraphics[width=.45\textwidth]{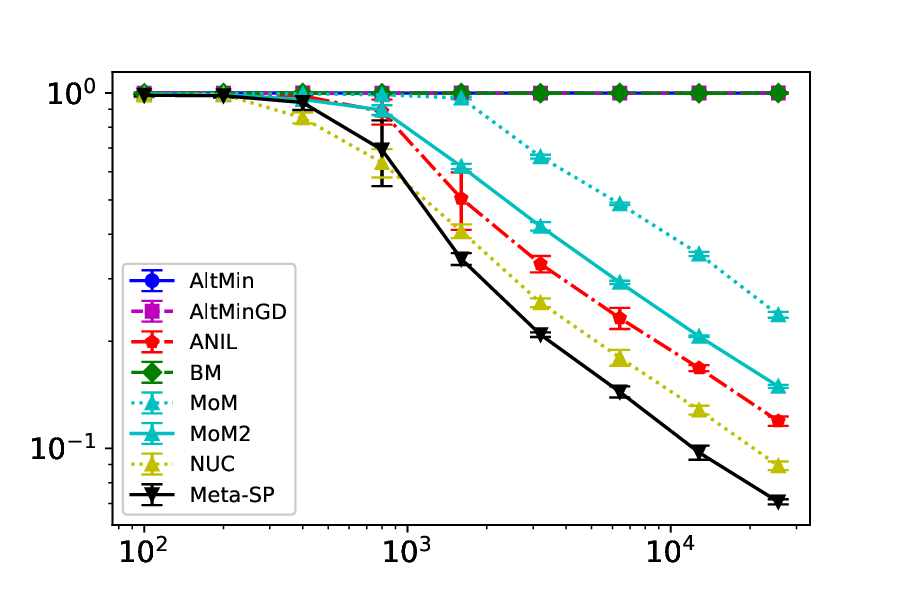}
    \caption{Evolution of $\mathbf{Dist}_1$(left) and $\mathbf{Dist}_2$(right) with the number of tasks $T$ for $s=5$, $m=5$ and $\sigma=1$.}
    \label{fig_T_m=5}
\end{figure}

Figures \ref{fig_T_m=25} and \ref{fig_T_m=5} showcase how $\mathbf{Dist}_1$ and $\mathbf{Dist}_2$ change by varying the number of tasks $T$. When $m = 25$, it is notable that \our{}, \texttt{AltMin}, and \texttt{BM} achieve lower errors compared to other methods. Notably, \texttt{ANIL}, designed to model-agnostically identify the meta-representation, consistently underperformed by other methods under the squared distances induced by the matrix Frobenius norm. When $m = s = 5$, the scenario corresponding to the extreme data-scarcity setting, methods like \texttt{AltMin}, \texttt{AltMinGD}, and \texttt{BM} fail to learn either the regression coefficients (the left panel) or the task-invariant representation (the right panel). In this scenario, our proposed approach \our{} exhibits superior performance, over other methods that still exhibit shrinking estimation error as $T$ grows, including \texttt{MoM}, \texttt{MoM2}, \ANIL{}, and \texttt{NUC}. As shown in Figures \ref{fig_T_m=25} and \ref{fig_T_m=5}, most methods exhibit stable performance across multiple runs. For the subsequent figures, we report only the average results over multiple experiments for clarity.

\begin{figure}[ht]
    \centering
    \includegraphics[width=.45\textwidth]{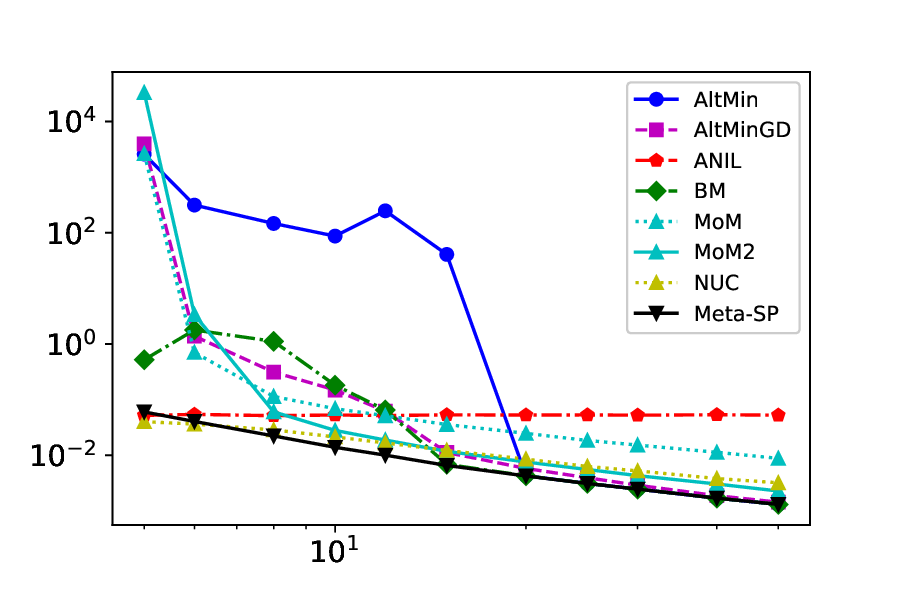}
    \includegraphics[width=.45\textwidth]{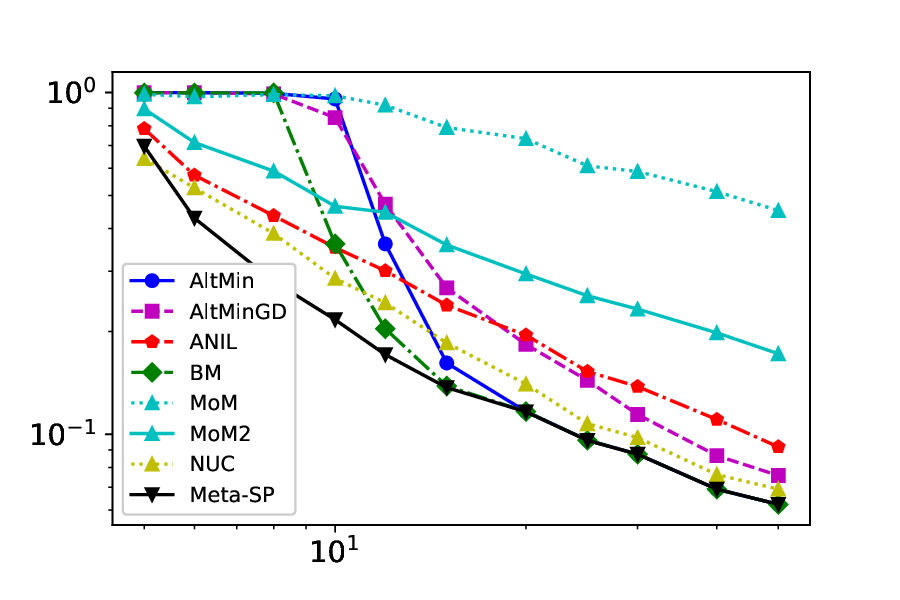}
    \caption{Evolution of $\mathbf{Dist}_1$(left) and $\mathbf{Dist}_2$(right) with the sample size $m$ for $s=5$, $T=800$ and $\sigma=1$.}
    \label{fig_m_T=800}
\end{figure}

Figure \ref{fig_m_T=800} illustrates how the two metrics change as $m$ varies. Notably, our proposed method \our{} consistently maintains strong performance across different $m$'s. Particularly, for small $m$ values ($m < 20$), \texttt{AltMin}, \texttt{AltMinGD}, and \texttt{BM} exhibit diminished performance. However, as $m$ increases, the performance of \texttt{AltMin} and \texttt{BM} becomes better, and is eventually comparable to \our{}.

\begin{figure}[ht]
    \centering
    \includegraphics[width=.45\textwidth]{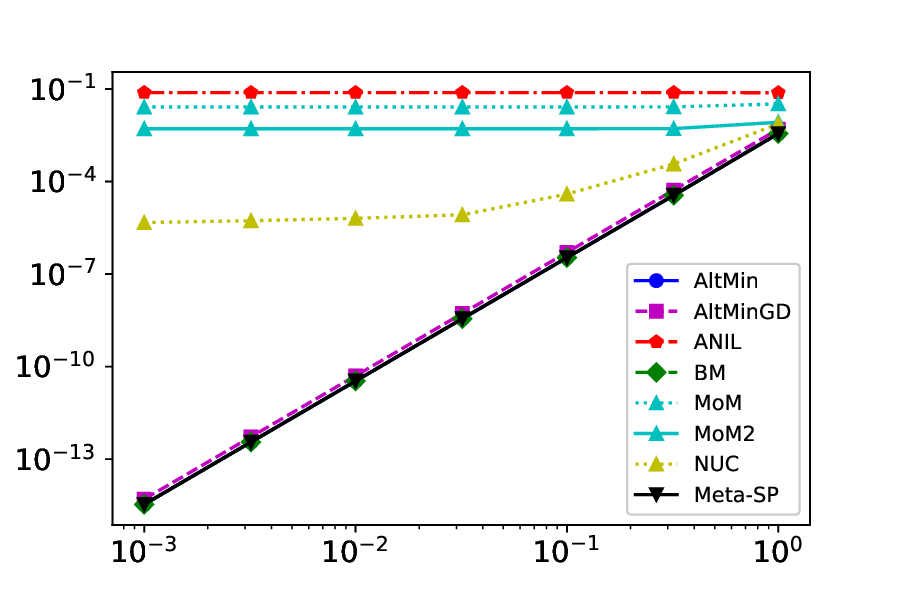}
    \includegraphics[width=.45\textwidth]{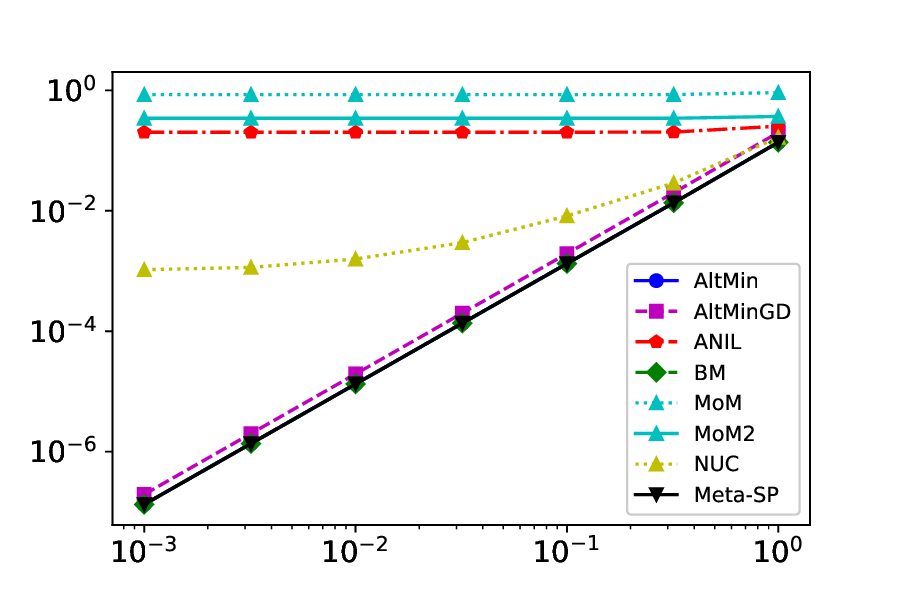}
    \caption{Evolution of $\mathbf{Dist}_1$(left) and $\mathbf{Dist}_2$(right) with variance of noise $\sigma$ for $s=5$, $T=400$ and $m=25$.}
    \label{fig_sigma_m=25}
\end{figure}

Figure \ref{fig_sigma_m=25} illustrates how the two metrics change as $\sigma$ changes. With $m = 25$, \our{}, \texttt{AltMin}, and \texttt{BM} perform better than other methods in general. Additionally, these three methods, along with \texttt{AltMinGD}, exhibit the ability to reduce errors as $\sigma$ diminishes. More comprehensive numerical results regarding situations where $m$ is very small or $\sigma=0$ can be found in the Appendix \ref{app-exp-more}. Notably, \texttt{NUC} fails to achieve zero error in the noiseless setting, since the implemented algorithm provided in \cite{boursier2022trace} did not solve the optimization problem well\footnote{As explained in the Introduction, BFGS method is directly used to solve the optimization problem, which can lead to sub-optimal numerical performance. Theoretically, supported by the results of \cite{liu2023improved}, \texttt{NUC} should achieve zero error in the noiseless setting if the numerical optimization step achieves negligible iteration errors.}.

\begin{table}
    \centering
    \begin{tabular}{|l|l|l|l|l|l|l|}
    \hline
    m=  & \our{}(Ours) & \texttt{AltMin} & \texttt{AltMinGD} & \texttt{BM}  & \texttt{NUC} & \texttt{ANIL}  \\ \hline
    5   & 12000 & /      & /        & /       & 19000    & 35000 \\ \hline
    6   & 8000  & /      & /        & /       & 12000    & 27000 \\ \hline
    8   & 4500  & /      & 58000    & 7400    & 7200     & 15000 \\ \hline
    10  & 2800  & 5100   & 17500    & 3400    & 4500     & 10000 \\ \hline
    25  & 750   & 760    & 1600     & 760     & 950      & 2600  \\ \hline
    50  & 320   & 320    & 450      & 320     & 400      & 900   \\ \hline
    75  & 220   & 220    & 270      & 220     & 240      & 480   \\ \hline
    100 & 160   & 160    & 200      & 160     & 180      & 320   \\ \hline
    \end{tabular}   
    \caption{The minimum amount of data required to achieve a sine angle distance $\le 0.1$.}
    \label{tab_mT}
\end{table}

\begin{figure}[ht]
    \centering
    \includegraphics[width=.9\textwidth]{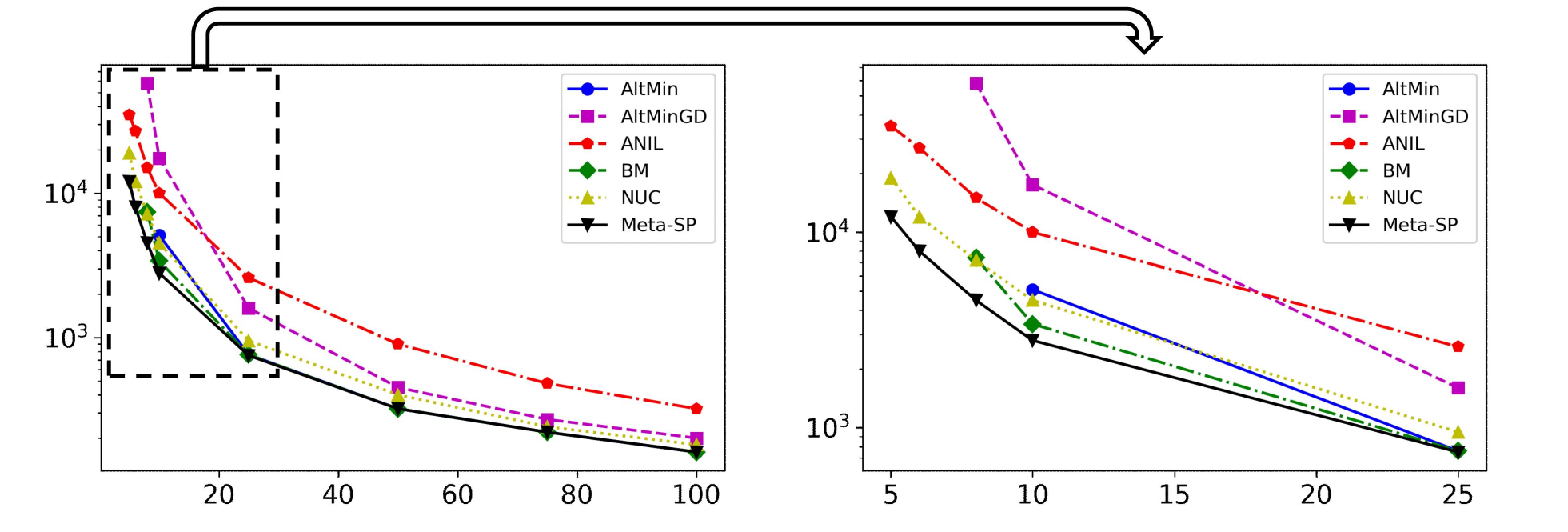}
    \caption{The empirical minimum amount of $(m, T)$ required for the sine angle distance between the estimated and the true task-invariant subspaces to be $\leq 0.1$. The horizontal axis represents the value of $m$, while the vertical axis represents the value of $T$.}
    \label{fig_mT}
\end{figure}

Figure \ref{fig_mT} along with Table \ref{tab_mT} present the empirical minimum amount of data for each method to attain a sine angle distance $\le 0.1$ over one simulation. Notably, \texttt{MoM} and \texttt{MoM2} requires an a large amount of data and are thus excluded from this representation. Strikingly, \our{} exhibits the most favorable performance in terms of data requirement, particularly when confronted with small sample size $m$ per task.

\begin{figure}[ht]
    \centering
    \subfigure[Iterations]{
    \label{fig_T=400_m=25_1}
    \includegraphics[width=.45\textwidth]{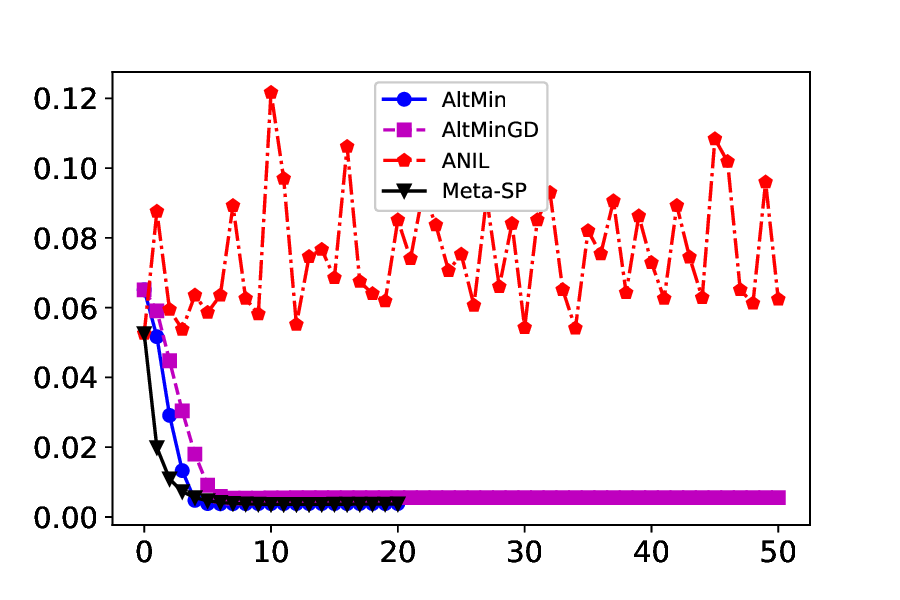}
    \includegraphics[width=.45\textwidth]{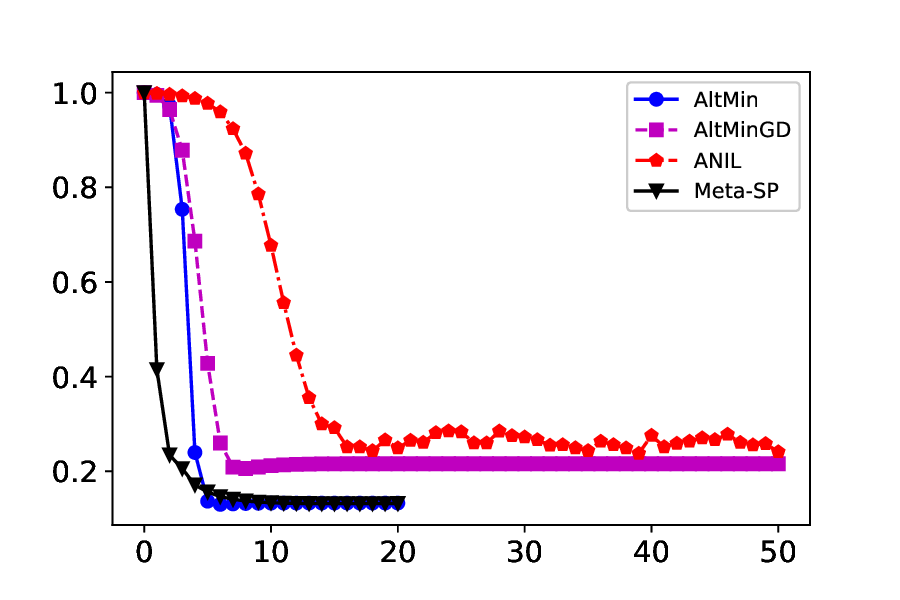}
    }
    \subfigure[Time]{
    \label{fig_T=400_m=25_2}
    \includegraphics[width=.45\textwidth]{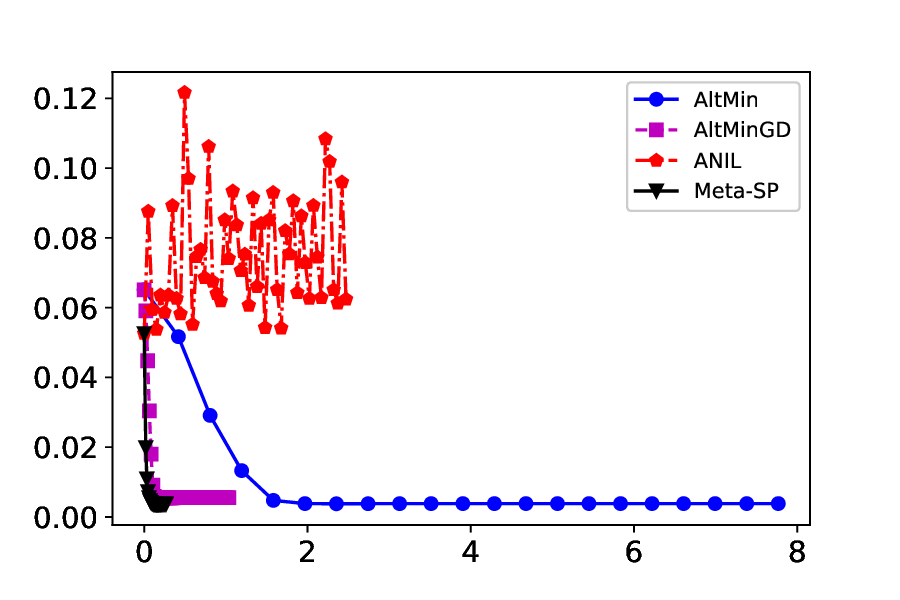}
    \includegraphics[width=.45\textwidth]{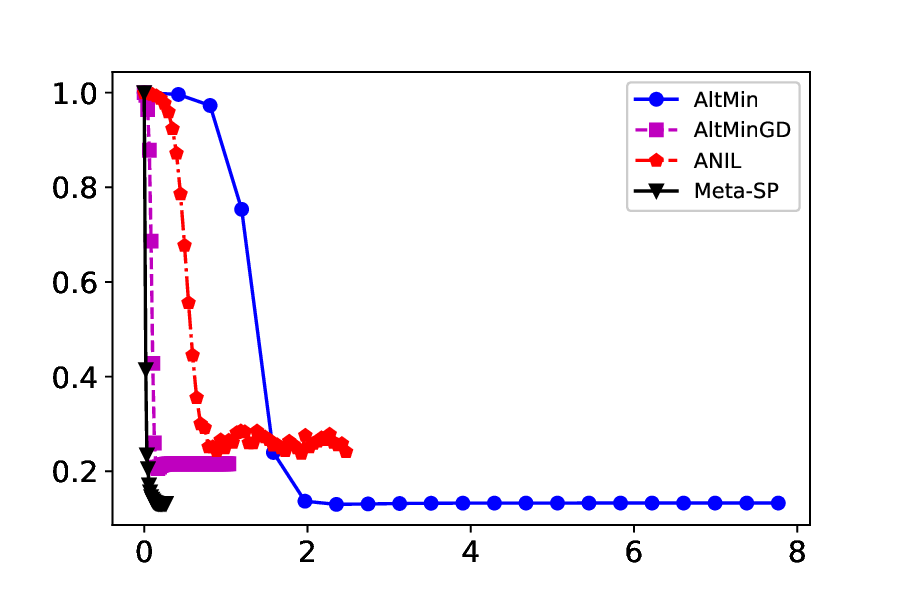}
    }
    \caption{Evolution of $\mathbf{Dist}_1$(left) and $\mathbf{Dist}_2$(right) with iterations(\ref{fig_T=400_m=25_1}) and time(\ref{fig_T=400_m=25_2}, unit: second) for $s=5$, $m=25$, $T=400$ and $\sigma=1$ in one example}
    \label{fig_T=400_m=25}
\end{figure}

Figure \ref{fig_T=400_m=25} shows how $\mathbf{Dist}_1$ and $\mathbf{Dist}_2$ change over iterations and time for the four iterative methods: \our{}, \texttt{AltMin}, \texttt{AltMinGD} and \texttt{ANIL}. Evidently, \our{} demonstrates significantly better computational efficiency compared to the other methods. Although \texttt{AltMin} requires fewer iterations, its efficiency is hampered by the larger computation time per iteration. Notably, there are instances, as demonstrated in Appendix \ref{app-exp-more}, where \texttt{AltMin} still requires more iterations than \our{} when dealing with small values of $m$.

In general, \our{} exhibits the highest computational efficiency and consistently achieves superior statistical accuracy across various scenarios. While \texttt{AltMin} and \texttt{BM} are shown to be effective when data is not excessively scarce, they are ill-suited when the sample size of each task is small. Furthermore, \texttt{AltMin} demands substantial computational resources per iteration. Its variant, \texttt{AltMinGD}, offers improved speed but lags behind other methods in terms of performance based on the two evaluation metrics. \texttt{NUC}, \texttt{ANIL}, \texttt{MoM}, and \texttt{MoM2} can effectively handle scenarios with highly restricted data. Although \texttt{NUC} performs comparably to our method (\our{}), it falls short when confronted with extremely low noise intensity. On the other hand, \texttt{ANIL} only provides a subspace estimation, yielding less satisfactory results. Despite the conciseness of \texttt{MoM} and \texttt{MoM2}, their demanding data requirements and suboptimal performance undermine their utility.

\subsection{Real Data Analysis}
\label{subsec: real}

In this section, we further examine the performance of \our{} and other competing methods in a real dataset. In particular, we use the air quality data from weather stations across different regions of China in 2023, which is originally from the China national urban air quality real-time publishing platform of the China Environmental Monitoring Station\footnote{The website of China national urban air quality real-time publishing platform is \href{https://air.cnemc.cn:18007/}{https://air.cnemc.cn:18007/} and the dataset can be downloaded from \href{https://quotsoft.net/air/\#archive}{https://quotsoft.net/air/\#archive}.}. In this example, we choose to use a working linear model to approximate the true regression function, but we acknowledge that a nonlinear regression may be more appropriate in this context and thus our results are only suggestive rather than confirmatory. However, extending our method to nonlinear models is beyond the scope of this paper. Moreover, as the weather stations in the dataset are all located in China, the coefficients of the linear models across different stations are expected to be similar. Therefore, the shared representation seems to be a reasonable assumption to leverage. Model \eqref{form0} is adopted to study the relationship between the average PM2.5 in a day and the hourly \ce{CO} value, hourly \ce{NO2} value, date, and geographical location. Specifically, the regression model for each weather station is treated as a single task, where the average value of 24-hour PM2.5 is the response variable (i.e., $y$), and the hourly \ce{CO} value (24-dimensional), hourly \ce{NO2} value (24-dimensional), date (1-dimensional), and (geographical) location coordinates (2-dimensional) are combined into covariates or features (i.e., $\tilde{\bx}$), together with the intercept (all 1 vector). After removing corrupted data, outliers, and abnormal tasks, we are left with $T = 1,210$ tasks, each with a maximum sample size of $293$ and a minimum sample size of $87$ (i.e. $m_{t} \in [87, 293]$ for $t = 1, \cdots, T$). In order to better align with the modeling assumptions and the algorithmic implementation in this paper, we preprocess the data in the following steps:
\begin{enumerate}
    \item Normalize the location coordinates of different tasks within the range $[-1,1]$.
    \item Sequentially assign date values $x$ from ${1,2,\cdots,365}$ for each task. Since PM2.5 is generally low in summer and high in winter, we transform the date values to $x=\left|x-183\right|$, and then standardize them ($x=\frac{x-\Bar{x}}{\sigma_{x}}$, where $\Bar{x}$ and $\sigma_{x}$ are the mean and variance computed from the data).
    \item Take the logarithm of the \ce{CO} and \ce{NO2} values and standardize each dimension in every task.
    \item Take the logarithm of PM2.5 values.
\end{enumerate}
We assume that the coefficients of all tasks lie in an $s$-dimensional space. 80\% of the tasks are used to train the model to get the low-dimensional subspace ($\bB$), referred to as \textit{meta tasks}. The remaining tasks are used to verify the validity of the resulting subspace, referred to as \textit{test tasks}. For each \textit{meta task} and \textit{test task}, we divide the data into training points (randomly selecting $m$ points) and test points (the rest). Training points in \textit{meta tasks} are trained with different methods to obtain the subspace $\bB$ and cross-task regression coefficients $\bTheta$. Test points in \textit{meta tasks} are used to evaluate the results of $\bTheta$. Given $\bB$, training points $(\bX_j,\by_j)$ in a \textit{test task} are used to obtain the intrinsic low-dimensional coefficients by computing
\begin{equation*}
    \bw_j=((\bX_j\bB)^{\top}(\bX_j\bB))^{-1}((\bX_j\bB)^{\top}\by_j)
\end{equation*}
and
\begin{equation*}
    \btheta_j = \bB \bw_j.
\end{equation*}
Then test points in \textit{test tasks} are used to verify the performance of the obtained coefficients, which also reflects the performance of $\bB$. The training and testing process is divided into four stages: use training points in \textit{meta tasks} to train the model with different methods(\textit{meta-train}); use test points in \textit{meta tasks} to test the model(\textit{meta-test}); and use training points in \textit{test tasks} to solve the regression problem with $\bB$(\textit{test-train}); use test points in \textit{test tasks} to test the performance(\textit{test-test}).

We focus on the results of the stage \textit{meta-test}, \textit{test-train} and \textit{test-test}. For each stage, we compute the mean relative error (MRE) of the predicted PM2.5 values for each task, and use the mean of MRE (M-MRE) over all corresponding tasks as the performance metric. Additionally, we use two other methods as baselines in \textit{test tasks}:
\begin{itemize}
    \item \texttt{Random-B}: $\bB$ is randomly generated when solving regression problems in test tasks;
    \item \texttt{Lsq-Pinv}: Combine the least square method and pseudo inverse, i.e. $\btheta_j=(\bX_j^{\top}\bX_j)^{\dag}\bX_j^{\top}\by_j$.
\end{itemize}

We set $s=6$ and the results of different methods\footnote{\texttt{ANIL} failed in the experiments of real data, so it is not shown in the results. Other approaches mentioned in Section \ref{subsec:simulated} are all compared here.} for various $m$ are shown in Tables \ref{m_mt}, \ref{m_ttr}, \ref{m_tt}. The best-performing results are highlighted in bold font and the second-best results are highlighted in italic font.

\begin{table}[H]
\centering
\begin{tabular}{|c|c|c|c|c|c|c|c|}
\hline
\textit{meta-test} & \texttt{AltMin} & \texttt{AltMinGD} & \texttt{BM} & \texttt{MoM} & \texttt{MoM2} & \texttt{NUC} & \our{} \\ \hline
$m=30$ & 0.4311 & 0.4286 & \textcolor{black}{\textit{0.4175}}  & 1.4128 & 0.7737 & 0.7206 & \textcolor{black}{\textbf{0.4044}} \\ \hline
$m=24$ & 0.4588 & 0.4429 & \textcolor{black}{\textit{0.4318}}  & 17.9870 & 0.6858 & 0.6545 & \textcolor{black}{\textbf{0.4181}} \\ \hline
$m=18$ & 0.5120 & 0.4827 & \textcolor{black}{\textit{0.4625}} & 236759.4 & 1.3979 & 0.6504 & \textcolor{black}{\textbf{0.4469}} \\ \hline
$m=12$ & 6.9935 & 0.6289 & \textcolor{black}{\textbf{0.5040}} & 5.03E+10 & 21.1079 & 0.6649 & \textcolor{black}{\textit{0.5179}} \\ \hline
\end{tabular}
\caption{M-MRE results of different methods for various $m$ values when $s=6$ in \textit{meta-test} stage}
\label{m_mt}
\end{table}

\begin{table}[H]
\centering
\begin{tabular}{|c|c|c|c|c|c|c|c|c|}
\hline
\textit{test-train} & \texttt{AltMin} & \texttt{AltMinGD} & \texttt{BM} & \texttt{MoM} & \texttt{MoM2} & \texttt{NUC} & \our{}  & \texttt{Random-B} \\ \hline
$m=30$ & \textcolor{black}{\textit{0.3197}} & 0.3218 & 0.3357 & 0.8862 & 0.5199 & 0.3351 & \textcolor{black}{\textbf{0.3163}} & 0.8975 \\ \hline
$m=24$ & \textcolor{black}{\textit{0.3109}} & 0.3112 & 0.3236 & 0.8981 & 0.4677 & 0.3202 & \textcolor{black}{\textbf{0.3044}} & 1.1655 \\ \hline
$m=18$ & \textcolor{black}{\textit{0.2834}} & 0.2848 & 0.3034 & 0.9063 & 0.5705 & 0.2952 & \textcolor{black}{\textbf{0.2834}} & 1.0572 \\ \hline
$m=12$ & 0.2456 & \textcolor{black}{\textit{0.2451}} & 0.2627 & 0.9710 & 0.4947 & 0.2493 & \textcolor{black}{\textbf{0.2439}} & 1.0220 \\ \hline
\end{tabular}
\caption{M-MRE results of different methods for various $m$ values when $s=6$ in \textit{test-train} stage}
\label{m_ttr}
\end{table}

\begin{table}[H]
\centering
\begin{tabular}{|c|c|c|c|c|c|c|c|c|c|}
\hline
\textit{test-test} & \texttt{AltMin} & \texttt{AltMinGD} & \texttt{BM} & \texttt{MoM} & \texttt{MoM2} & \texttt{NUC} & \our{}  & \texttt{Random-B} & \texttt{Lsq-Pinv} \\ \hline
$m=30$ & \textcolor{black}{\textit{0.4252}} & 0.4296 & 0.4385 & 1.6439 & 0.7493 & 0.4464 & \textcolor{black}{\textbf{0.4242}} & 3.6900 & 0.8469 \\ \hline
$m=24$ & \textcolor{black}{\textit{0.4452}} & 0.4491 & 0.4651 & 30.4750  & 0.7348 & 0.4574 & \textcolor{black}{\textbf{0.4413}} & 111.0599 & 0.6932 \\ \hline
$m=18$ & 0.4817 & \textcolor{black}{\textit{0.4783}} & 0.5217 & 5011.062 & 1.4058 & 0.4971 & \textcolor{black}{\textbf{0.4708}} & 37555.84 & 0.6681 \\ \hline
$m=12$ & \textcolor{black}{\textit{0.6171}} & 0.7382 & 1.2946 & 3.35E+12 & 118.889 & 0.6732  & \textcolor{black}{\textbf{0.6160}} & 977322.9 & 0.9575 \\ \hline
\end{tabular}
\caption{M-MRE results of different methods for various $m$ values when $s=6$ in \textit{test-test} stage}
\label{m_tt}
\end{table}

We also investigate the performance of the methods for different $r$ values. The results are shown in Tables \ref{r_mt}, \ref{r_ttr}, and \ref{r_tt} for the \textit{meta-test}, \textit{test-train}, and \textit{test-test} stages, respectively. In these tables, $m$ is set to 30, and the best results are highlighted in bold font, while the second-best results are in italic font.

\begin{table}[H]
\centering
\begin{tabular}{|c|c|c|c|c|c|c|c|}
\hline
\textit{meta-test} & \texttt{AltMin} & \texttt{AltMinGD} & \texttt{BM} & \texttt{MoM} & \texttt{MoM2} & \texttt{NUC} & \our{} \\ \hline
$s=6$ & 0.4311 & 0.4286 & \textcolor{black}{\textit{0.4175}} & 1.4128 & 0.7737 & 0.7206 & \textcolor{black}{\textbf{0.4044}} \\ \hline
$s=9$ & 0.5434 & 0.4615 & \textcolor{black}{\textit{0.4368}} & 0.4522 & 0.7257 & 0.7231 & \textcolor{black}{\textbf{0.4249}} \\ \hline
$s=12$ & 0.5480 & 0.5076 & \textcolor{black}{\textit{0.4586}} & 0.4924 & 0.7793 & 0.7260 & \textcolor{black}{\textbf{0.4515}} \\ \hline
$s=15$ & 0.6498 & 0.5889 & \textcolor{black}{\textit{0.5312}} & 0.5685 & 1.1971 & 0.7178 & \textcolor{black}{\textbf{0.4717}} \\ \hline
\end{tabular}
\caption{M-MRE results of different methods for various $s$ values when $m=30$ in \textit{meta-test} stage}
\label{r_mt}
\end{table}

\begin{table}[H]
\centering
\begin{tabular}{|c|c|c|c|c|c|c|c|c|}
\hline
\textit{test-train} & \texttt{AltMin} & \texttt{AltMinGD} & \texttt{BM} & \texttt{MoM} & \texttt{MoM2} & \texttt{NUC} & \our{}  & \texttt{Random-B} \\ \hline
$s=6$ & \textcolor{black}{\textit{0.3197}} & 0.3218 & 0.3357 & 0.8862 & 0.5199 & 0.3351 & \textcolor{black}{\textbf{0.3163}} & 0.8975 \\ \hline
$s=9$ & \textcolor{black}{\textit{0.2915}} & 0.2983 & 0.3054 & 0.2926 & 0.4224 & 0.3104 & \textcolor{black}{\textbf{0.2913}} & 1.1869 \\ \hline
$s=12$ & 0.2718 & 0.2687 & 0.2819 & \textcolor{black}{\textbf{0.2620}} & 0.3655 & 0.2837 & \textcolor{black}{\textit{0.2645}} & 1.0976 \\ \hline
$s=15$ & 0.2404 & 0.2402 & 0.2578 & \textcolor{black}{\textit{0.2364}} & 0.3253 & 0.2571 & \textcolor{black}{\textbf{0.2361}} & 0.6470 \\ \hline
\end{tabular}
\caption{M-MRE results of different methods for various $s$ values when $m=30$ in \textit{test-train} stage}
\label{r_ttr}
\end{table}

\begin{table}[H]
\centering
\begin{tabular}{|c|c|c|c|c|c|c|c|c|c|}
\hline
\textit{test-test} & \texttt{AltMin} & \texttt{AltMinGD} & \texttt{BM} & \texttt{MoM} & \texttt{MoM2} & \texttt{NUC} & \our{}  & \texttt{Random-B} & \texttt{Lsq-Pinv} \\ \hline
$s=6$ & \textcolor{black}{\textit{0.4252}} & 0.4296 & 0.4385 & 1.6439 & 0.7493 & 0.4464 & \textcolor{black}{\textbf{0.4242}} & 3.6900 & 0.8469 \\ \hline
$s=9$ & 0.4663 &\textcolor{black}{\textbf{0.4595}} & 0.4819 & 0.4652 & 0.7463 & 0.4933 & \textcolor{black}{\textit{0.4608}} & 9527.81 & 0.8469 \\ \hline
$s=12$ & 0.5130 & 0.5128 & 0.5332 & \textcolor{black}{\textit{0.5107}} & 1.0832 & 0.5418 & \textcolor{black}{\textbf{0.5085}} & 334.302 & 0.8469 \\ \hline
$s=15$ & \textcolor{black}{\textbf{0.5831}} & 0.5950 & 0.6152 & 0.5925 & 1.0421 & 0.6376 & \textcolor{black}{\textit{0.5923}} & 4.6051 & 0.8469 \\ \hline
\end{tabular}
\caption{M-MRE results of different methods for various $s$ values when $m=30$ in \textit{test-test} stage}
\label{r_tt}
\end{table}

The results in the tables show that our method \our{} consistently ranks among the top two methods in different situations and is often the best one. Additionally, the methods based on Model \eqref{form0} significantly outperform the baseline methods, indicating that Model \eqref{form0} could be appropriate for investigating the given dataset.

\section{Concluding Remarks} 
\label{sec:conclusion}

In this study, we introduced the concept of utilizing matrix rank minimization techniques to address challenges posed by multi-task linear regression problems with limited data availability. Our proposed approach, termed \texttt{Meta Subspace Pursuit} (\our{}), demonstrates efficacy in resolving such problems. Moreover, we conducted a theoretical analysis of the error bounds associated with the estimations produced by this algorithm. Through experimental validation, \our{} showcased superior accuracy and efficiency compared to existing methodologies, particularly in scenarios with exceptionally scarce data. This underscores the potential for improved solutions in cases involving very limited samples.

Nevertheless, it is important to acknowledge that, based on our experimental findings, the error bounds we established may not be the most tight. Notably, the error associated with the sine angle distance displays a gradual reduction with increasing values of $T$. This intriguing observation warrants further exploration and validation in future research endeavors.

In this paper, we assume that the rank $s$ of the common subspace is known. The immediate next step is to show that our procedure adapts to unknown $s$. Another interesting direction is to analyze related algorithms, including \our{}, under the so-called ``proportional asymptotic limit'' \citep{donoho2016high, jin2024meta, li2024understanding} ($s \asymp d \asymp n$ and $s \leq d$), an arguably more realistic high-dimensional regime than the $d \gg n$ or $d \ll n, d \rightarrow \infty$ regimes, using techniques from statistical physics \cite{dudeja2023universality, han2023universality}. 

It is also of theoretical interest to characterize the fundamental limit of estimating the common subspace when the task diversity assumption may be nearly violated, using the local minimax rate framework recently put forth in the theoretical computer science and statistics literature \citep{valiant2017automatic}. Finally, in most modern applications, the HPS linear model is at best an approximation of the actual data generating mechanism. Developing methods for its non- or semi-parametric extension, such as HPS single index models or model-agnostic settings \cite{finn2017model, raghu2020rapid}, is thus another important research direction that deserves more attention.

\section*{Acknowledgments}
This work is supported by NSFC Grants  No.12090024 (C. Zhang, X. Zhang, L. Liu) and No.12471274 (L. Liu), Shanghai Municipal of Science and Technology Grants 21JC1402900 (L. Liu), 2021SHZDZX0102 (X. Zhang, L. Liu), and Science and Technology Talent and Platform Program of Yunnan Province Grant No.202605AF35007 (L. Liu). The authors would like to thank two anonymous referees for helpful comments.

\begin{appendices}
\appendixpage
\section{Proofs} \label{app-proof}


\subsection{Proof of Theorem \ref{ripr}} \label{app-proof-ripr}
The matrix concentration inequality used to prove Theorem \ref{ripr} is stated in the following lemma.
\begin{lemma}[Matrix Bernstein inequality \citep{tropp2015introduction}]\label{mci}
Let $\bS_1, \cdots, \bS_m \in \bbR^{d_1 \times d_2}$ be independent, centered random matrices of the same size, where $\bbE \bS_j = \bm{0}$ and $\|\bS_j\| \le L$ for every $j = 1, \ldots, m$. Let $\bZ \coloneqq \sum_{j=1}^{m} \bS_j$. Define the quantity $\nu(\bZ)$, which can be viewed as the matrix extension of the variance for scalar Bernstein inequality \citep{boucheron2003concentration}, as follows:
\begin{equation*}
    \begin{aligned}
        \nu(\bZ) & \coloneqq \max{\{\|\bbE (\bZ\bZ^{\top})\|,\|\bbE (\bZ^{\top}\bZ)\|\}} \\
        & \equiv \max{\{\|\bbE (\sum_{j=1}^{m}{\bS_j\bS_j^{\top})}\|,\|\bbE (\sum_{j=1}^{m}{\bS_j^{\top}\bS_j)}\|\}}.
    \end{aligned}
\end{equation*}
Then we have:
\begin{equation*}
    \mathbb{P} (\|\bZ\| \ge u) \le (d_1 + d_2) \cdot \exp \left\{ \frac{-u^2 / 2}{\nu (\bZ) + L u / 3} \right\} \text{ for all } u \ge 0.
\end{equation*}
\end{lemma}
The proof of Lemma \ref{mci} can be found in \cite{tropp2015introduction}. This lemma helps prove Theorem \ref{ripr}.

\begin{proof}[\textbf{Proof of Theorem \ref{ripr}}]
We analyze the concentration result for each term in Equation \eqref{rip2}. Using Lemma \ref{mci}, we set $\bS_j$ to be the matrix $\frac{1}{m}(\bB\bx_{t, j}\bx_{t, j}^{\top}\bB^{\top}-\bI_r)$. Consequently, $\bZ=\frac{1}{m}\bB\bX_t^{\top}\bX_t\bB^{\top}-\bI_r$. 

We denote $\bb_l$ as the $l$-th row vector of $\bB$ and $\|\bb_l\|_2=1$. Since $\bx_{t,j} \sim \subG(\bm{0}_{d},\I_d)$, it follows that $\bbE [(\bb_l^{\top}\bx_{t,j})^2]=1$. According to $\bbE [|z|^k]\le(2\sigma^2)^{k/2}k\Gamma(k/2)$ for any $z\sim\subG(0,\sigma^2)$, we obtain $\Var[(\bb_l^{\top}\bx_{t,j})^2]\le15$. By applying Chebyshev's Inequality, we can infer that $\mathbb{P}(1-a \le (\bb_l^{\top}\bx_{t,j})^2\le 1+a)\ge1-\frac{15}{a^2}$ for any $a>0$. Consequently, we have $\|\bB^{\top}\bx_{t,j}\|_2^2=\sum_{l=1}^{r}{(\bb_l^{\top}\bx_{t,j})^2}\le (a+1)r$ with probability at least $(1-\frac{15}{a^2})^r$ for any $a>0$. 

Subsequently, $\|\bS_j\|$ can be bounded as follows:
\begin{equation*}
    \|\bS_j\| = \|\frac{1}{m}(\bB\bx_{t,j}\bx_{t,j}^{\top}\bB^{\top}-\bI)\| \le \|\frac{1}{m}\bB\bx_{t,j}\bx_{t,j}^{\top}\bB^{\top}\| + \frac{1}{m} = \frac{1}{m}\|\bB\bx_{t,j}\|_2^2 + \frac{1}{m} \le \frac{(a+1)r+1}{m}.
\end{equation*}
This holds with high probability. Since $\bS_j$ is Hermitian, we can compute the matrix variance statistic $\nu(\bZ)$ under the assumptions stated in Lemma \ref{mci}:
\begin{equation*}
    \nu(\bZ) = \|\sum_{j=1}^{m}{\mathbb{E}\bS_j^2}\|
\end{equation*}
The variance of each term is calculated as:
\begin{equation*}
    \begin{aligned}
    \mathbb{E}\bS_j^2 & = \frac{1}{m^2}(\bbE [\bB\bx_{t,j}(\bx_{t,j}^{\top}\bB^{\top}\bB\bx_{t,j})\bx_{t,j}^{\top}\bB^{\top}]-\bI) \\
    & = \frac{1}{m^2}(\bbE [\|\bB\bx_{t,j}\|_2^2\bB\bx_{t,j}\bx_{t,j}^{\top}\bB^{\top}]-\bI) \\
    & \preccurlyeq \frac{1}{m^2}\{(a+1)r\bbE [\bB\bx_{t,j}\bx_{t,j}^{\top}\bB^{\top}]-\bI\} \\
    & \preccurlyeq \frac{(a+1)r-1}{m^2}\bI
    \end{aligned}
\end{equation*}
Thus, $\|\mathbb{E}\bS_j^2\|\le\frac{(a+1)r-1}{m^2}$. As a result, we find that:
\begin{equation*}
    \nu(\bZ) \le \sum_{j=1}^{m}\|\mathbb{E}\bS_j^2\| = \frac{(a+1)r-1}{m}
\end{equation*}
Applying Lemma \ref{mci} with the aforementioned conditions and letting $u=\delta_r^{(t)}$(assuming $u\le1$), we have
\begin{equation}\label{prip1}
    \|\frac{1}{m}\bB\bX_t^{\top}\bX_t\bB^{\top}-\bI\| \le \delta_r^{(t)} \le \sqrt{\frac{8(a+1)r-4}{3m}\log{\frac{2r}{\epsilon}}}.
\end{equation}
This holds with probability at least $(1-\epsilon)(1-\frac{15}{a^2})^{r}$ for any $a>0$. If it holds for all $t$, we can set $\delta_r(\calB)=\sqrt{\frac{2(a+4)r+8}{3m}\log{\frac{2r}{\epsilon}}}$ and sum up Equation \eqref{prip1} over $t$. This leads to the conclusion that Equation \eqref{rip2} holds with the probability at least $(1-\epsilon)^T(1-\frac{15}{a^2})^{rT}$, which in turn implies that Equation \eqref{rip} holds with this probability.
\end{proof}

\subsection{Proof of Theorem \ref{thm-theta}} \label{app-proof-thm-theta}
To start the proof, we first recall some relevant definitions and results from \citep{recht2010guaranteed} and \citep{goldfarb2011convergence}.

\begin{definition}[Orthonormal basis of a subspace]
Given a set of $r$ rank-one matrices $\varPsi = \{\psi_1,\cdots,\psi_r\}$, there exists a set of $s$ orthonormal matrices $\Gamma = \{\gamma_1, \cdots, \gamma_s\}$ in the Frobenius sense\footnote{That is, $\langle \gamma_i, \gamma_j \rangle_{\F} = 0$ if $i \ne j$ and $\|\gamma_i\|_{\F} = 1$ for all $i$, such that $\sp (\Gamma) = \sp (\varPsi)$. Here the inner product between two matrices is simply Frobenius inner product.}. We call $\Gamma$ an orthonormal basis for the space $\sp (\varPsi)$. We use $P_{\Gamma} \bm\Theta$ to denote the projection of $\bm\Theta$ onto the space $\sp (\Gamma)$. By definition, $P_{\Gamma} \bm\Theta \equiv P_{\varPsi} \bm\Theta$ and
$\text{rank} (P_{\Gamma} \bm\Theta) \le r$, $\forall \bm\Theta$.
\end{definition}

\begin{definition}[SVD basis of a matrix]
Assume that the rank-$r$ matrix $\bm\Theta_r$ has the SVD $\bm\Theta_r=\sum_{i=1}^r \sigma_iu_iv_i^{\top}$. $\Gamma:={u_1v_1^{\top},u_2v_2^{\top},\cdots,u_rv_r^{\top}}$ is
called an SVD basis for the matrix $\bm\Theta_r$. Note that the elements in $\Gamma$ are orthonormal rank-one matrices.
\end{definition}

\begin{proposition}\label{prop1}
Suppose that the linear operator $\calB:\mathbb{R}^{T \times d} \rightarrow \mathbb{R}^{N \times 1}$ satisfies the RIP with constant $\delta_r(\calB)$. Let $\varPsi$ be an arbitrary orthonormal subset of $\mathbb{R}^{T \times d}$ such that
$\text{rank}(P_{\varPsi}\bm\Theta) \le r, \forall \bm\Theta\in\mathbb{R}^{d \times T}$. Then, for all $b\in \mathbb{R}^{N \times 1}$ and $\bm\Theta\in\mathbb{R}^{T \times d}$ the following properties hold:
\begin{equation*}
\begin{aligned}
    & \|P_{\varPsi}\calB^{\ast}b\| \le \sqrt{1+\delta_r(\calB)}\|b\|_2, \\
    & (1-\delta_r(\calB))\|P_{\varPsi}\bm\Theta\|_{\F} \le \|P_{\varPsi}\calB^{\ast}\calB P_{\varPsi}\bm\Theta\|_{\F} \le (1+\delta_r(\calB))\|P_{\varPsi}\bm\Theta\|_{\F}
\end{aligned}
\end{equation*}
\end{proposition}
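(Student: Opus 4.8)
The plan is to deduce both inequalities directly from the RIP of $\calB$ given in \eqref{rip}, relying on only two elementary facts. First, $P_{\varPsi}$ is an orthogonal projection in the Frobenius inner product, hence self-adjoint and idempotent: $\langle P_{\varPsi}\bm\Theta_{1},\bm\Theta_{2}\rangle_{\F}=\langle \bm\Theta_{1},P_{\varPsi}\bm\Theta_{2}\rangle_{\F}$ and $P_{\varPsi}^{2}=P_{\varPsi}$. Second, by the standing hypothesis on $\varPsi$, every matrix lying in $\sp(\varPsi)$ has rank at most $r$, so the RIP bound applies verbatim to any such matrix. With these in hand, the rest is a short chain of Cauchy--Schwarz and adjointness manipulations.

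For the first inequality I would set $\bM\coloneqq P_{\varPsi}\calB^{\ast}b$, which has rank $\le r$, and write $\|\bM\|_{\F}^{2}=\langle P_{\varPsi}\calB^{\ast}b,\bM\rangle_{\F}=\langle \calB^{\ast}b,P_{\varPsi}\bM\rangle_{\F}=\langle \calB^{\ast}b,\bM\rangle_{\F}=\langle b,\calB\bM\rangle$, where I move $P_{\varPsi}$ onto the second slot, use $P_{\varPsi}\bM=\bM$, and then invoke the adjoint relation $\langle \calB^{\ast}v,\bm\Theta\rangle_{\F}=\langle v,\calB\bm\Theta\rangle$. Cauchy--Schwarz followed by the RIP upper bound on the rank-$\le r$ matrix $\bM$ gives $\langle b,\calB\bM\rangle\le\|b\|_{2}\|\calB\bM\|_{2}\le\sqrt{1+\delta_r(\calB)}\,\|b\|_{2}\|\bM\|_{\F}$; cancelling one factor of $\|\bM\|_{\F}$ (the case $\bM=\mathbf{0}$ being trivial) yields $\|P_{\varPsi}\calB^{\ast}b\|_{\F}\le\sqrt{1+\delta_r(\calB)}\,\|b\|_{2}$.

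For the second inequality I would view $\mathcal{M}\coloneqq P_{\varPsi}\calB^{\ast}\calB P_{\varPsi}$ as a self-adjoint linear operator on the finite-dimensional inner-product space $\sp(\varPsi)$. For every $\bM\in\sp(\varPsi)$ its Rayleigh quotient is $\langle \mathcal{M}\bM,\bM\rangle_{\F}=\langle \calB\bM,\calB\bM\rangle=\|\calB\bM\|_{2}^{2}$, which by the RIP (applicable since $\text{rank}(\bM)\le r$) lies in $[(1-\delta_r(\calB))\|\bM\|_{\F}^{2},\,(1+\delta_r(\calB))\|\bM\|_{\F}^{2}]$. Hence the spectrum of $\mathcal{M}$ is contained in $[1-\delta_r(\calB),\,1+\delta_r(\calB)]$, and in the regime of interest $\delta_r(\calB)<1$ the operator $\mathcal{M}$ is positive definite; diagonalizing $\mathcal{M}$ then gives $(1-\delta_r(\calB))\|\bM\|_{\F}\le\|\mathcal{M}\bM\|_{\F}\le(1+\delta_r(\calB))\|\bM\|_{\F}$ for all $\bM\in\sp(\varPsi)$. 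Applying this with $\bM=P_{\varPsi}\bm\Theta$ and noting $\mathcal{M}(P_{\varPsi}\bm\Theta)=P_{\varPsi}\calB^{\ast}\calB P_{\varPsi}\bm\Theta$ produces the stated two-sided estimate.

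The argument is essentially bookkeeping rather than conceptually hard, and the one point that must be checked carefully is that each matrix handed to the RIP --- namely $P_{\varPsi}\calB^{\ast}b$, $P_{\varPsi}\bm\Theta$, and $\mathcal{M}P_{\varPsi}\bm\Theta$ --- lies in $\sp(\varPsi)$ and therefore has rank at most $r$. This is precisely the structural assumption imposed on $\varPsi$, and it is exactly what lets us use the single constant $\delta_r(\calB)$ throughout, with no need for a higher-order restricted isometry constant such as $\delta_{2r}(\calB)$ that would otherwise surface when comparing two unrelated rank-$\le r$ matrices.
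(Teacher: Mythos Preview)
Your argument is correct and is the standard way these RIP consequences are established. The paper itself does not supply a proof of this proposition; it is simply recalled from \cite{recht2010guaranteed} and \cite{goldfarb2011convergence} as background for the analysis of Theorem~\ref{thm-theta}, so there is no in-paper proof to compare against. Your derivation---duality plus Cauchy--Schwarz for the first bound, and a Rayleigh-quotient/spectral argument for the second---matches how these facts are proved in that literature, and your closing remark that only $\delta_r(\calB)$ (not $\delta_{2r}(\calB)$) is needed here is exactly the point of restricting attention to the single subspace $\sp(\varPsi)$.
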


\begin{proposition}\label{prop2}
Suppose that the linear operator $\calB:\mathbb{R}^{T \times d} \rightarrow \mathbb{R}^{N \times 1}$ satisfies the RIP with constant $\delta_r(\calB)$. Let $\varPsi$,$\varPsi^{'}$ be an arbitrary orthonormal subset of $\mathbb{R}^{T \times d}$ such that
$\text{rank}(P_{\varPsi\cup\varPsi^{'}}\bm\Theta) \le r, \forall \bm\Theta\in\mathbb{R}^{T \times d}$. Then the following property hold:
\begin{equation*}
\begin{aligned}
    \|P_{\varPsi}\calB^{\ast}\calB(I-P_{\varPsi})\bm\Theta\|_{\F} \le \delta_r(\calB)\|(I-P_{\varPsi})\bm\Theta\|_{\F}, \forall \bm\Theta\in\sp (\varPsi^{'})
\end{aligned}
\end{equation*}
\end{proposition}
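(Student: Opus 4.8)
The plan is to establish Proposition~\ref{prop2} by the matrix analogue of the standard ``near-orthogonality'' property of RIP operators from compressed sensing. Fix $\bm\Theta \in \sp(\varPsi')$ and abbreviate $v := (I - P_{\varPsi})\bm\Theta$ and $u := P_{\varPsi}\calB^{\ast}\calB\,v$, so that the goal is to show $\|u\|_{\F} \le \delta_r(\calB)\,\|v\|_{\F}$. I would first record three elementary facts. (i) $u \in \sp(\varPsi)$ by construction. (ii) $v$ is Frobenius-orthogonal to $\sp(\varPsi)$: for any $w \in \sp(\varPsi)$, $\langle w, v\rangle_{\F} = \langle w, \bm\Theta\rangle_{\F} - \langle P_{\varPsi}w, \bm\Theta\rangle_{\F} = 0$ since $P_{\varPsi}w = w$. (iii) Because $\bm\Theta \in \sp(\varPsi')$ we have $v \in \sp(\varPsi \cup \varPsi')$, so that for every $w \in \sp(\varPsi)$ both $w + v$ and $w - v$ lie in $\sp(\varPsi \cup \varPsi')$ and hence, by the rank hypothesis $\text{rank}(P_{\varPsi\cup\varPsi'}\bm\Theta') \le r$ for all $\bm\Theta'$, satisfy $\text{rank}(w \pm v) \le r$. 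It is precisely this third point that makes the rank assumption of the Proposition do its work.

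Next I would rewrite $\|u\|_{\F}$ variationally. Since $u \in \sp(\varPsi)$, and using that $P_{\varPsi}$ is self-adjoint with $P_{\varPsi}w = w$ for $w \in \sp(\varPsi)$,
\[
\|u\|_{\F} \;=\; \sup_{w \in \sp(\varPsi),\ \|w\|_{\F} \le 1} \langle w, P_{\varPsi}\calB^{\ast}\calB v\rangle_{\F}
\;=\; \sup_{w \in \sp(\varPsi),\ \|w\|_{\F} \le 1} \langle \calB w, \calB v\rangle_{2}.
\]
For a fixed $w$ in this supremum, apply the polarization identity $\langle \calB w, \calB v\rangle_{2} = \tfrac14\big(\|\calB(w + v)\|_{2}^{2} - \|\calB(w - v)\|_{2}^{2}\big)$ and then invoke the RIP bounds of Definition~\ref{RIP} on $w \pm v$ --- legitimate by fact (iii) --- to obtain
\[
\langle \calB w, \calB v\rangle_{2} \;\le\; \tfrac14\Big[(1 + \delta_r(\calB))\|w + v\|_{\F}^{2} - (1 - \delta_r(\calB))\|w - v\|_{\F}^{2}\Big].
\]
By fact (ii), $w \perp v$, hence $\|w + v\|_{\F}^{2} = \|w - v\|_{\F}^{2} = \|w\|_{\F}^{2} + \|v\|_{\F}^{2}$, and the right-hand side collapses to $\tfrac12\delta_r(\calB)\big(\|w\|_{\F}^{2} + \|v\|_{\F}^{2}\big)$. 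Using bilinearity to replace $v$ by $v/\|v\|_{\F}$ (the case $v = 0$ being trivial) together with $\|w\|_{\F} \le 1$ gives $\langle \calB w, \calB v\rangle_{2} \le \delta_r(\calB)\|v\|_{\F}$ for every admissible $w$; taking the supremum over $w$ yields $\|u\|_{\F} \le \delta_r(\calB)\|v\|_{\F}$, which is the claim.

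I do not anticipate a genuine obstacle here: the argument is a routine consequence of RIP together with the parallelogram/polarization identity, mirroring the vector-RIP near-orthogonality lemma of the compressed-sensing literature, lifted to matrices via the orthonormal-subspace and SVD-basis formalism recalled before Proposition~\ref{prop1}. The only two places deserving a moment's care are (a) verifying that $w \pm v$ genuinely has rank at most $r$, which is exactly where the hypothesis on $P_{\varPsi\cup\varPsi'}$ enters, and (b) the small homogeneity step that converts the crude estimate $\tfrac12\delta_r(\calB)(\|w\|_{\F}^2 + \|v\|_{\F}^2)$ into the clean bound $\delta_r(\calB)\|v\|_{\F}$ --- one must normalize $v$, not merely $w$, to avoid picking up a spurious additive term. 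Everything else is bookkeeping with Frobenius inner products and projections.
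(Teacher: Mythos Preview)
Your argument is correct and is precisely the standard near-orthogonality proof for matrix RIP: variational dualization of $\|u\|_{\F}$, polarization, RIP applied to $w\pm v$ (legitimized by the rank hypothesis on $P_{\varPsi\cup\varPsi'}$), Frobenius-orthogonality of $w$ and $v$, and the homogeneity normalization of $v$. Note that the paper does not supply its own proof of Proposition~\ref{prop2}; it simply recalls the statement from \cite{recht2010guaranteed} and \cite{goldfarb2011convergence}, and your proof is exactly the argument one finds in that literature, so there is nothing to contrast.
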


\begin{lemma}\label{plem}
Suppose $\bTheta$ is the best rank-$s$ approximation to the matrix $\hat{\bTheta}$, and $\Gamma$ is an SVD basis of $\bTheta$. Then for any rank-$s$ matrix $\bTheta_s$ and SVD basis $\Gamma_s$ of $\bTheta_s$, we have
\begin{equation*}
    \Vert P_B\bTheta-P_B\hat{\bTheta} \Vert_\F \le \Vert P_B\bTheta_s-P_B\hat{\bTheta} \Vert_\F,
\end{equation*}
where $B$ is any orthonormal set of matrices satisfying $\sp(\Gamma\cup\Gamma_s)\subseteq B$.
\end{lemma}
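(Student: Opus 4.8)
The plan is to establish the inequality by analyzing the projection $P_B$ componentwise along an orthonormal basis adapted to $\Gamma$, $\Gamma_s$, and then reducing the matrix claim to a statement about the singular values of $\hat{\bTheta}$. Since $B$ is an orthonormal set containing $\sp(\Gamma \cup \Gamma_s)$, the projection $P_B$ can be decomposed, and the key observation is that $P_\Gamma \hat{\bTheta}$ is the best rank-$s$ Frobenius approximation to $P_B \hat{\bTheta}$ restricted to the subspace $B$ — this is exactly the Eckart–Young–Mirsky theorem applied within the (finite-dimensional, Frobenius-inner-product) space $\sp(B)$. More precisely, I would first note that $\bTheta = P_\Gamma \hat{\bTheta}$ by the definition of best rank-$s$ approximation and the fact that $\Gamma$ is the SVD basis of $\bTheta$; likewise I may write $P_\Gamma \bTheta = \bTheta$ and $P_{\Gamma_s} \bTheta_s = \bTheta_s$.

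**Next I would** invoke the optimality characterization: among all matrices $\bZ$ in $\sp(B)$ with $\mathrm{rank}(\bZ) \le s$, the minimizer of $\|\bZ - P_B \hat{\bTheta}\|_F$ is the truncated SVD of $P_B\hat{\bTheta}$ to its top $s$ singular values. The content of the lemma is that $P_B \bTheta = P_\Gamma \hat{\bTheta}$ coincides with this minimizer. To see this, observe that since $\Gamma$ is the SVD basis of the best rank-$s$ approximation to $\hat{\bTheta}$, and since $\sp(\Gamma) \subseteq \sp(B)$, projecting first onto $\sp(B)$ and then truncating to rank $s$ gives back $P_\Gamma\hat{\bTheta}$ — the top $s$ singular directions of $\hat{\bTheta}$ already live inside $\sp(B)$, so no information is lost. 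Meanwhile $\bTheta_s$ is \emph{some} rank-$s$ matrix with SVD basis $\Gamma_s \subseteq B$, hence $P_B \bTheta_s = \bTheta_s$ is a competitor in the same constrained minimization. Applying the optimality of the truncated SVD then yields
\begin{equation*}
\|P_B \bTheta - P_B \hat{\bTheta}\|_F = \|P_\Gamma\hat{\bTheta} - P_B\hat{\bTheta}\|_F \le \|\bTheta_s - P_B\hat{\bTheta}\|_F = \|P_B\bTheta_s - P_B\hat{\bTheta}\|_F,
\end{equation*}
which is the claim.

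**The main obstacle** I expect is the careful justification that $P_B \bTheta = P_\Gamma\hat{\bTheta}$ really is the \emph{global} rank-$s$ minimizer over $\sp(B)$, and not merely over $\sp(\Gamma)$. This requires verifying that truncating the SVD of the ambient-space projection $P_B\hat{\bTheta}$ recovers $P_\Gamma\hat{\bTheta}$; the subtlety is that $P_B\hat{\bTheta}$ may have nonzero components outside $\sp(\Gamma)$, so one must argue those components correspond to strictly smaller singular values. This follows because $\|P_B\hat{\bTheta}\|_F \le \|\hat{\bTheta}\|_F$ and $P_\Gamma$ captures the top $s$ singular values of $\hat\bTheta$ exactly, but spelling out the Weyl-type interlacing argument cleanly is where the real work lies. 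An alternative, possibly slicker route, is to work entirely within $\sp(B) \cong \bbR^{\dim B}$ as a vector space with the Frobenius inner product, identify $P_B\hat\bTheta$ with a vector/matrix there, and apply Eckart–Young directly; one then only needs that $\mathrm{rank}(P_\Gamma\hat\bTheta) \le s$ and that it is the rank-$s$ truncation, which again reduces to the same singular-value comparison. I would present the argument via this ambient-space reduction and cite the Eckart–Young–Mirsky theorem for the optimality of SVD truncation.
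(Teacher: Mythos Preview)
Your approach has a genuine gap. The central claim---that $P_\Gamma\hat{\bTheta}$ coincides with the top-$s$ SVD truncation of $P_B\hat{\bTheta}$---is false in general, and the ``Weyl-type interlacing argument'' you anticipate cannot be completed. A concrete counterexample with $s=1$ in $2\times 2$ matrices: take $\hat{\bTheta} = \mathrm{diag}(3,2)$, so $\bTheta = 3E_{11}$ and $\Gamma = \{E_{11}\}$ (here $E_{ij}$ is the standard matrix unit); let $\psi = \tfrac{1}{\sqrt{2}}(E_{12}+E_{22})$, which is rank one and Frobenius-orthogonal to $E_{11}$, and set $B = \{E_{11},\psi\}$. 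Then
\[
P_B\hat{\bTheta} \;=\; 3E_{11} + \sqrt{2}\,\psi \;=\; \begin{pmatrix} 3 & 1 \\ 0 & 1\end{pmatrix},
\]
whose singular values are $\sqrt{(11\pm\sqrt{85})/2}$. The rank-one SVD truncation of this matrix is \emph{not} $3E_{11}$; indeed $\|P_B\hat{\bTheta} - 3E_{11}\|_{\F} = \sqrt{2}$ strictly exceeds the smaller singular value $\sqrt{(11-\sqrt{85})/2}\approx 0.94$. The projection $P_B$ can rotate mass from the lower singular directions of $\hat{\bTheta}$ into directions that interact nontrivially with $\Gamma$, so the SVD of $P_B\hat{\bTheta}$ need not respect $\Gamma$ at all. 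Restricting the competitor $\bZ$ to lie in $\sp(B)$ does not help either: Eckart--Young gives you nothing about minimizers of $\|\bZ - P_B\hat{\bTheta}\|_{\F}$ over the non-linear set $\{\bZ\in\sp(B):\mathrm{rank}(\bZ)\le s\}$.

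The paper's proof is far simpler and sidesteps this issue entirely. It applies the Eckart--Young inequality once, to $\hat{\bTheta}$ itself rather than to $P_B\hat{\bTheta}$, obtaining $\|\bTheta - \hat{\bTheta}\|_{\F}^2 \le \|\bTheta_s - \hat{\bTheta}\|_{\F}^2$. Both sides are then split via Pythagoras into $\|P_B(\cdot)\|_{\F}^2 + \|(I-P_B)(\cdot)\|_{\F}^2$. Because $\bTheta,\bTheta_s\in\sp(B)$, the orthogonal-complement pieces satisfy
\[
(I-P_B)(\bTheta-\hat{\bTheta}) \;=\; -(I-P_B)\hat{\bTheta} \;=\; (I-P_B)(\bTheta_s-\hat{\bTheta}),
\]
so those two terms are equal and cancel, leaving exactly the desired inequality on the $P_B$ components. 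No analysis of the SVD of $P_B\hat{\bTheta}$ is needed.
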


\begin{proof}[\textbf{Proof of Lemma \ref{plem}}]
    Since $\bTheta$ is the best rank-$s$ approximation to the matrix $\hat{\bTheta}$ and the rank of $\bTheta_s$ is $s$, $\Vert \bTheta-\hat{\bTheta} \Vert_\F \le \Vert \bTheta_s-\hat{\bTheta} \Vert_\F$. Hence,
    \begin{equation*}
        \Vert P_B(\bTheta-\hat{\bTheta}) \Vert_\F^2 + \Vert (I-P_B)(\bTheta-\hat{\bTheta}) \Vert_\F^2 \le \Vert P_B(\bTheta_r-\hat{\bTheta}) \Vert_\F^2 + \Vert (I-P_B)(\bTheta_r-\hat{\bTheta}) \Vert_\F^2.
    \end{equation*}
    Since $(I-P_B)\bTheta=0$ and $(I-P_B)\bTheta_r=0$, this reduces to the result.
\end{proof}

Provided here is a pivotal lemma that sets apart the proof in this paper from the approach presented in \cite{goldfarb2011convergence}.

\begin{lemma}\label{klem}
Given the linear operator $\calB=\mathcal{A}/\sqrt{m}$ and the sub-Gaussian noise vector $\bEps$, the following inequality holds with probability at least $(1-\frac{2}{d}-\frac{14}{m}-\frac{208}{dm})^T$:
\begin{equation*}
    \|\calB^{\ast}\bEps\|_{\op} \leq \|\calB^{\ast}\bEps\|_{\F} \le \sqrt{2dT\sigma^2}.
\end{equation*}
\end{lemma}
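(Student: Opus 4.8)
The plan is to first make $\calB^{\ast}\bEps$ fully explicit, reduce the global bound to a per-task bound via independence across tasks, and then prove the per-task bound by a second-moment (Chebyshev) argument. Since $\calA(\bTheta)=\mathbb{X}\vec(\bTheta)$ with $\mathbb{X}$ block-diagonal, its adjoint maps $\bv=(\bv_1^{\top},\dots,\bv_T^{\top})^{\top}\in\bbR^{N}$ to the matrix in $\bbR^{T\times d}$ whose $t$-th row is $(\bX_t^{\top}\bv_t)^{\top}$, so $\calB^{\ast}\bEps=\calA^{\ast}\bEps/\sqrt{m}$ is the $T\times d$ matrix with $t$-th row $\frac{1}{\sqrt{m}}(\bX_t^{\top}\beps_t)^{\top}$. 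The inequality $\|\calB^{\ast}\bEps\|_{\op}\le\|\calB^{\ast}\bEps\|_{\F}$ holds for any matrix, and
\[
\|\calB^{\ast}\bEps\|_{\F}^{2}=\sum_{t=1}^{T}\frac{1}{m}\,\big\|\bX_t^{\top}\beps_t\big\|_2^{2}.
\]
Hence it suffices to show that, for each fixed $t$, $\|\bX_t^{\top}\beps_t\|_2^{2}\le 2dm\sigma^{2}$ with probability at least $1-\frac{2}{d}-\frac{14}{m}-\frac{208}{dm}$: the pairs $(\bX_t,\beps_t)$ are independent across $t$, so these $T$ events are independent and hold simultaneously with probability at least $(1-\frac{2}{d}-\frac{14}{m}-\frac{208}{dm})^{T}$, on which the display above is at most $2dmT\sigma^{2}/m=2dT\sigma^{2}$.

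For the per-task estimate I drop the subscript $t$ and write $W:=\|\bX^{\top}\beps\|_2^{2}=\sum_{k=1}^{d}Z_k^{2}$ with $Z_k:=\langle\bX_{\cdot k},\beps\rangle=\sum_{j=1}^{m}x_{jk}\eps_j$, where $\bX_{\cdot k}$ denotes the $k$-th column of $\bX$. Taking $\bbE[x_{jk}^{2}]=1$ as in the proof of Theorem \ref{ripr} and using independence of $\bX$ and $\beps$ gives $\bbE[W]=dm\sigma^{2}$, so it remains to control $W-\bbE[W]$. The crucial point is that, conditionally on $\beps$, the columns $\bX_{\cdot 1},\dots,\bX_{\cdot d}$ are independent, whence $Z_1,\dots,Z_d$ are conditionally independent with $\bbE[Z_k^{2}\mid\beps]=\|\beps\|_2^{2}$. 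This makes both parts of $\Var(W)$ computable: for $k\ne k'$, $\mathrm{Cov}(Z_k^{2},Z_{k'}^{2})=\bbE[\|\beps\|_2^{4}]-(m\sigma^{2})^{2}=\Var(\|\beps\|_2^{2})\le 15m\sigma^{4}$, using $\Var(\eps_1^{2})\le 15\sigma^{4}$ from the sub-Gaussian moment inequality $\bbE|z|^{q}\le(2\sigma^{2})^{q/2}q\Gamma(q/2)$ (the same estimate that produced the constant $15$ in Theorem \ref{ripr}); and for the diagonal terms, expanding $\bbE[Z_k^{4}\mid\beps]=3\|\beps\|_2^{4}+(\bbE x^{4}-3)\sum_j\eps_j^{4}$ and taking expectations gives $\Var(Z_k^{2})=O(m^{2}\sigma^{4}+m\sigma^{4})$. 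Summing, $\Var(W)\le 2dm^{2}\sigma^{4}+15\,d^{2}m\sigma^{4}+O(dm)\,\sigma^{4}$, and Chebyshev's inequality
\[
\bbP\!\left(W\ge 2dm\sigma^{2}\right)\le \frac{\Var(W)}{(dm\sigma^{2})^{2}}
\]
yields a bound of the form $\frac{2}{d}+\frac{O(1)}{m}+\frac{O(1)}{dm}$; tightening the constants (e.g. $d(d-1)$ versus $d^{2}$, and sharper fourth-moment estimates) is what delivers exactly $\frac{2}{d}+\frac{14}{m}+\frac{208}{dm}$.

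The genuinely delicate part is this last step, namely pinning down the \emph{numerical} constants. One must (i) evaluate $\bbE[Z_k^{4}\mid\beps]$ for a weighted sum of sub-Gaussian variables, in particular controlling the fourth-moment cross term $\sum_j\eps_j^{4}$ via the stated moment inequality applied to both $x_{jk}$ and $\eps_j$, and (ii) note that $\subG(0,1)$ only guarantees $\bbE x_{jk}^{2}\le 1$, so --- exactly as in Theorem \ref{ripr} --- one works with $\bbE x_{jk}^{2}=1$ and lets the sub-Gaussian tail handle all higher moments. Carrying these constants through $\Var(W)$, through the Chebyshev bound, and through the passage from one task to the independent product over $T$ tasks is routine but bookkeeping-heavy, and is precisely what produces the somewhat idiosyncratic coefficients $2$, $14$, and $208$. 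An alternative route that lands at the same place is a two-stage conditioning --- first control $\|\beps\|_2^{2}\le(1+c)m\sigma^{2}$ off an event of probability $O(1/m)$ by Chebyshev, then bound $W$ conditionally on $\beps$ by Chebyshev to get the $O(1/d)$ term --- which is perhaps more transparent about where each of the three terms originates.
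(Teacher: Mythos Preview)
Your proposal is correct and follows essentially the same approach as the paper: reduce to a per-task bound by independence, compute the mean $dm\sigma^{2}$ and bound the variance of $\|\bX_t^{\top}\beps_t\|_2^{2}$ via the sub-Gaussian moment inequality $\bbE|z|^{k}\le(2\sigma^{2})^{k/2}k\Gamma(k/2)$, then apply Chebyshev with deviation $dm\sigma^{2}$. The only cosmetic difference is that the paper expands over \emph{sample pairs}, writing $\|\bX_t^{\top}\beps_t\|_2^{2}=\sum_{j,j'}(\bx_{t,j}^{\top}\bx_{t,j'})\eps_{t,j}\eps_{t,j'}$, rather than over feature columns as you do, and reads off $\Var[\|\bX_t^{\top}\beps_t\|_2^{2}]\le(2dm^{2}+14d^{2}m+208dm)\sigma^{4}$ directly from that expansion --- so the constants $2,14,208$ arise without any further tightening.
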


\begin{proof}[\textbf{Proof of Lemma \ref{klem}}]
As $\|\calB^{\ast}\bEps\|_{\F}^2=\sum_{t=1}^{T}\|\bX_t^{\top}\beps_t\|_2^2/m$, we can analyze $\|\bX_t^{\top}\beps_t\|_2^2/m$ for each task $t$ individually. The expectation of it is $\bbE [\|\bX_t^{\top}\beps_t\|_2^2/m]=\bbE [\frac{1}{m}\beps_t^{\top}\bX_t\bX_t^{\top}\beps_t]=\frac{1}{m}\bbE [\beps_t^{\top}\bbE [\bX_t\bX_t^{\top}]\beps_t]=d\sigma^2$. According to $\bbE [|z|^k]\le(2\sigma^2)^{k/2}k\Gamma(k/2)$ for any $z\sim\subG(0,\sigma^2)$, the corresponding variance is 
\begin{align*}
\Var[\|\bX_t^{\top}\beps_t\|_2^2/m] & = \bbE [(\sum_{j=1}^{m}\sum_{k=1}^{m}{\bx_{t,j}^{\top}\bx_{t,k}\varepsilon_{t,j}\varepsilon_{t,k}})^2/m^2]-(\bbE [\|\bX_t^{\top}\beps_t\|_2^2/m])^2 \\
& \le [15m(d^2+14d)+(m^2-m)(d^2+2d)]\sigma^4/m^2-d^2\sigma^4 \\
& = \frac{(2dm^2+14d^2m+208dm)\sigma^4}{m^2}.
\end{align*}
By Chebyshev’s inequality, we can establish 
\begin{align*}
\mathbb{P}(d\sigma^2-t\le\|\bX_t^{\top}\beps_t\|_2^2/m\le d\sigma^2+u)\ge1-\frac{(2dm^2+14d^2m+208dm)\sigma^4}{u^2m^2}
\end{align*}
for any $u > 0$. Here, We take $u = d\sigma^2$, resulting in $\|\bX_t^{\top}\beps_t\|_2^2/m\le 2d\sigma^2$ with a probability at least $1-\frac{2}{d}-\frac{14}{m}-\frac{208}{dm}$. By summing up these inequalities over $t$, the lemma is proved.
\end{proof}

Then we give the proof of Theorem \ref{thm-theta}.

\begin{proof}[\textbf{Proof of Theorem \ref{thm-theta}}]
Assuming that RIP holds, as established in Theorem \ref{ripr}, let $\Bar{\Gamma}^{\ast}$ and $\Bar{\Gamma}^k$ be the SVD basis of $\bm\Theta^{\ast}$ and $\bm\Theta^{(k)}$, respectively. We denote an orthonormal basis of the subspace $\sp (\Bar{\Gamma}^{\ast}\cup\Bar{\Gamma}^k)$ as $\varPsi_k$ 
Starting with the relation $\|\bm\Theta^{\ast}-\bm\Theta^{(k+1)}\|_{\F} = \|P_{\varPsi_{k+1}}\bm\Theta^{\ast}-P_{\varPsi_{k+1}}\bm\Theta^{(k+1)}\|_{\F}$ and Lemma \ref{plem} (note that one can simply conduct a verbatim application of the lemma once replacing $\bTheta_r$ by $\bTheta^*$.), we can further expand this as follows:
\begin{equation*}
\begin{aligned}
    & \|\bm\Theta^{\ast}-\bm\Theta^{(k+1)}\|_{\F} \\ 
    =& \|P_{\varPsi_{k+1}}\bm\Theta^{\ast}-P_{\varPsi_{k+1}}\bm\Theta^{(k+1)}\|_{\F} \\
    =& \|P_{\varPsi_{k+1}}\bm\Theta^{\ast}-P_{\varPsi_{k+1}}\hat{\bm\Theta}^{(k+1)}+P_{\varPsi_{k+1}}\hat{\bm\Theta}^{(k+1)}-P_{\varPsi_{k+1}}\bm\Theta^{(k+1)}\|_{\F} \\
    \le & \|P_{\varPsi_{k+1}}\bm\Theta^{\ast}-P_{\varPsi_{k+1}}\hat{\bm\Theta}^{(k+1)}\|_{\F} + \|P_{\varPsi_{k+1}}\hat{\bm\Theta}^{(k+1)}-P_{\varPsi_{k+1}}\bm\Theta^{(k+1)}\|_{\F} \\
    \le & 2\|P_{\varPsi_{k+1}}\bm\Theta^{\ast}-P_{\varPsi_{k+1}}\hat{\bm\Theta}^{(k+1)}\|_{\F}
\end{aligned}
\end{equation*}
Here, we define $Z^k=\bm\Theta^{\ast}-\bm\Theta^{(k)}$, which results in $\hat{\bm\Theta}^{(k+1)}=\bm\Theta^{(k)}+\gamma \calB^{\ast}(\calB Z^k+\frac{\bEps}{\sqrt{m}})$. Continuing the derivation, we obtain:
\begin{equation*}
\begin{aligned}
    & \|P_{\varPsi_{k+1}}\bm\Theta^{\ast}-P_{\varPsi_{k+1}}\hat{\bm\Theta}^{(k+1)}\|_{\F} \\
    = & \|P_{\varPsi_{k+1}}\bm\Theta^{\ast}-P_{\varPsi_{k+1}}\bm\Theta^{(k)}-\gamma P_{\varPsi_{k+1}}\calB^{\ast}\calB Z^k-\gamma P_{\varPsi_{k+1}}\calB^{\ast}\frac{\bEps}{\sqrt{m}}\|_{\F} \\
    = & \|P_{\varPsi_{k+1}}Z^k-\gamma P_{\varPsi_{k+1}}\calB^{\ast}\calB P_{\varPsi_{k+1}}Z^k-\gamma P_{\varPsi_{k+1}}\calB^{\ast}\calB(I-P_{\varPsi_{k+1}})Z^k-\gamma P_{\varPsi_{k+1}}\calB^{\ast}\frac{\bEps}{\sqrt{m}}\|_{\F} \\
    \le & \|(I-\gamma P_{\varPsi_{k+1}}\calB^{\ast}\calB P_{\varPsi_{k+1}})P_{\varPsi_{k+1}}Z^k\|_{\F} + \gamma \|P_{\varPsi_{k+1}}\calB^{\ast}\calB(I-P_{\varPsi_{k+1}})Z^k\|_{\F} + \frac{\gamma}{\sqrt{m}} \|P_{\varPsi_{k+1}}\calB^{\ast}\bEps\|_{\F}. 
\end{aligned}
\end{equation*}

By utilizing Proposition \ref{prop1} and \ref{prop2}, the following inequalities are established with a probability at least $(1-\epsilon)(1-\frac{15}{a^2})^{3s}(1-\frac{2}{d}-\frac{14}{m}-\frac{208}{dm})^T$ for any $a>0$:
\begin{equation*}
    \begin{aligned}
    & \|P_{\varPsi_{k+1}}\bm\Theta^{\ast}-P_{\varPsi_{k+1}}\hat{\bm\Theta}^{(k+1)}\|_{\F} \\
    \le & (1-\gamma+\gamma\delta_{2s}(\calB)) \|P_{\varPsi_{k+1}}Z^k\|_{\F} + \gamma\delta_{3s}(\calB)\|(I-P_{\varPsi_{k+1}})Z^k\|_{\F} + \frac{\gamma}{\sqrt{m}} \|\calB^{\ast}\bEps\|_{\F} \\
    \le & (1-\gamma+2\gamma\delta_{3s}(\calB))\|Z^k\|_{\F} + \sqrt{\frac{2dT\sigma^2}{m}}.
    \end{aligned}
\end{equation*}
This implies $\|Z^{k+1}\|_\F\le 2(1-\gamma+2\gamma\delta_{3s}(\calB))\|Z^k\|_{\F} + \sqrt{\frac{8dT\sigma^2}{m}}$. Consequently, with $\frac{1}{2(1-2\delta_{3s})}<\gamma<\frac{1}{1-2\delta_{3s}}$, i.e. $0<2(1-\gamma+2\gamma\delta_{3s}(\calB))<1$, we have:
\begin{equation*}
    \begin{aligned}
    & \|\bm\Theta^{\ast}-\bm\Theta^{(k)}\|_{\F} \\
    \le & \left[2(1-\gamma+2\gamma\delta_{3s}(\calB))\right]^k\|Z^0\|_{\F} + \{[2(1-\gamma+2\gamma\delta_{3s}(\calB))]+[2(1-\gamma+2\gamma\delta_{3s}(\calB))]^2+\cdots \\
    & +[2(1-\gamma+2\gamma\delta_{3s}(\calB))]^k\}\sqrt{\frac{8dT\sigma^2}{m}} \\
    \le & \left[2(1-\gamma+2\gamma\delta_{3s}(\calB))\right]^k\|Z^0\|_{\F} + \frac{2\sqrt{2}}{1-2(1-\gamma+2\gamma\delta_{3s}(\calB))}\sqrt{\frac{dT\sigma^2}{m}}.
    \end{aligned}
\end{equation*}

If $k$ is large enough, i.e. with $k\rightarrow\infty$, we have
\begin{equation*}
    \|\bm\Theta^{\ast}-\bm\Theta^{(k)}\|_{\F} \le O(\sqrt{\frac{\sigma^2dT}{m}}).
\end{equation*}
\end{proof}

\subsection{Proof of Theorem \ref{thm-B}} \label{app-proof-thm-B}
We will draw on a similar result, namely Lemma 15 in \cite{tripuraneni2021provable}, and adopt their methodologies to prove Theorem \ref{thm-B}.
\begin{proof}[\textbf{Proof of Theorem \ref{thm-B}}]
We begin the proof by defining $f (\tilde{\bW}) \coloneqq \|\tilde{\bW} \bB - \bW^{\ast} \bB^{\ast}\|_{\F}^2$ for short. Since $\tilde{\bW}^{\ast}$ minimizes $f (\tilde{\bW})$, it follows from the first-order condition $\frac{\diff f}{\diff \tilde{\bW}}=0$ that $\tilde{\bW}^{\ast}=\bW^{\ast}\bB^{\ast}\bB^{\top}$. Therefore for any $\bW$, we have $\|\bW^{\ast}\bB^{\ast}\bB^{\top}\bB-\bW^{\ast}\bB^{\ast}\|_{\F}^2\le\|\bW\bB-\bW^{\ast}\bB^{\ast}\|_{\F}^2$. Moreover,
\begin{equation*}
\begin{aligned}
    \|\bW^{\ast}\bB^{\ast}\bB^{\top}\bB-\bW^{\ast}\bB^{\ast}\|_{\F}^2 &= \|\bW^{\ast}\bB^{\ast}(\bB^{\top}\bB-\bI)\|_{\F}^2 \\
    &= \|\bW^{\ast}\bB^{\ast}\bB_{\bot}^{\top}\bB_{\bot}\|_{\F}^2 \\
    &= \|\bW^{\ast}\bB^{\ast}\bB_{\bot}^{\top}\|_{\F}^2 \\
    &= \Tr({\bW^{\ast}}^{\top}\bW^{\ast}\bB^{\ast}\bB_{\bot}^{\top}\bB_{\bot}{\bB^{\ast}}^{\top}) \\
    &\ge \bar{\lambda}_{s}({\bW^{\ast}}^{\top}\bW^{\ast})\|\bB^{\ast}\bB_{\bot}^{\top}\|_{\F}^2 \\
    &\ge \bar{\lambda}_{s}(\bW^{\ast}{\bW^{\ast}}^{\top})\|\bB^{\ast}\bB_{\bot}^{\top}\|_2^2.
\end{aligned}
\end{equation*}
In the above inequalities, $\bar{\lambda}_{s} (\mathbf{A})$ represents the $s$-th largest eigenvalue of a given matrix $\mathbf{A}$ and $\Tr(\mathbf{A})$ represents the trace of $\mathbf{A}$. It is worth noting that the second-to-last inequality follows from the property that for any positive semi-definite matrices $\mathbf{M}$ and $\mathbf{N}$, $\Tr (\mathbf{M}\mathbf{N})\ge\sigma_{min}(\mathbf{M})\Tr (\mathbf{N})$. Let $\bm\Theta^{(k)}$ and $\bB^{(k)}$ be the output obtained by \our{} after $k$ iterations. As $\bar{\lambda}_{s}(\bW^{\ast}{\bW^{\ast}}^{\top})=\lambda_sT$, it follows that:
\begin{equation*}
    \sin{\angle(\bB^{(k)},\bB^{\ast})} = \|{\bB^{\ast}}\bB_{k\bot}^{\top}\|_2 \le \frac{\|\bm\Theta^{\ast}-\bm\Theta^{(k)}\|_{\F}}{\sqrt{\lambda_s T}}\le\frac{\|\bm\Theta^{\ast}-\bm\Theta^{(k)}\|_{\F}}{\sqrt{L_{s} T}}.
\end{equation*}
Combining this result with \eqref{res1-2}, we immediately have \eqref{res2-2}.
\end{proof}

\section{More Experimental Results and Details of Simulated Data Experiments} \label{app-exp}
In this part, we show more numerical results and give the details of all the experiments in Section \ref{subsec:simulated}.
\subsection{More Experimental Results} \label{app-exp-more}
We previously demonstrated the trend of metrics' evolution with varying $T$ for cases where $m=25$ and $m=5$, depicted in Figures \ref{fig_T_m=25} and \ref{fig_T_m=5}. Now, we extend our analysis to the scenario where $m=10$, as illustrated in Figure \ref{fig_T_m=10}. Similar to our previous observations, our proposed method \our{} continues to outperform others under these conditions. Moreover, \texttt{AltMin}, \texttt{AltMinGD}, and \texttt{BM} exhibit strong performance when $T$ is relatively large. This underscores the importance of having both sufficiently large values of $m$ and $T$ to harness the effectiveness of these three methods.

\begin{figure}[htbp!]
    \centering
    \includegraphics[width=.45\textwidth]{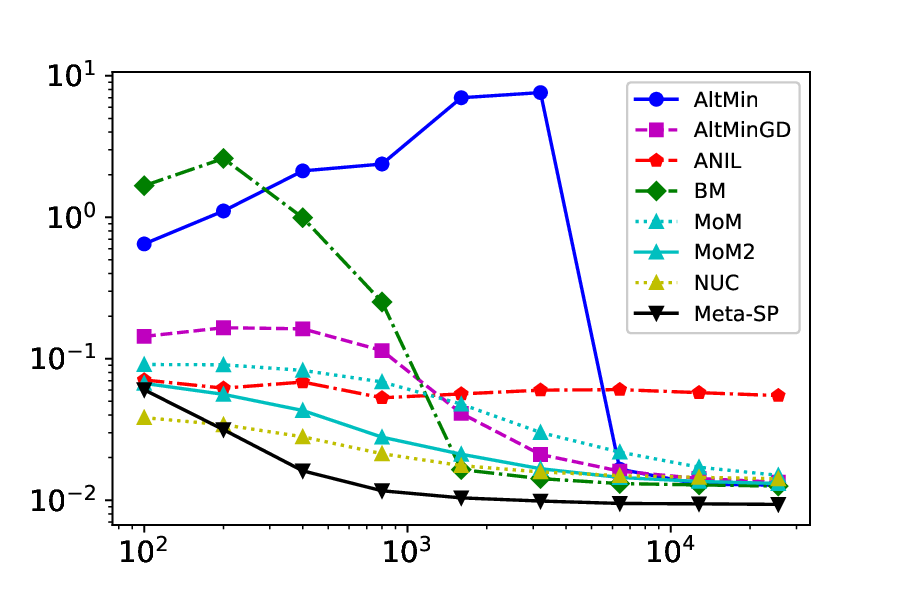}
    \includegraphics[width=.45\textwidth]{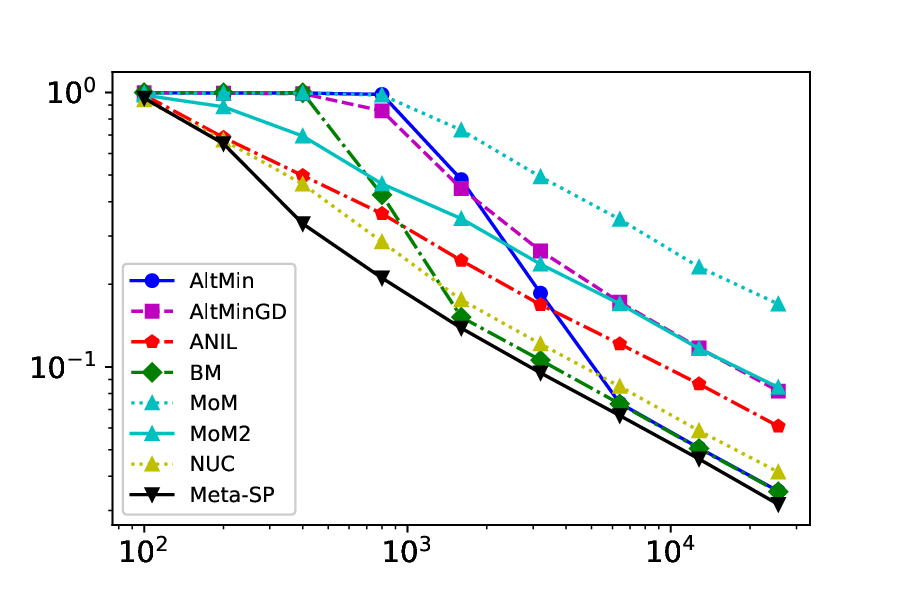}
    \caption{Evolution of $\mathbf{Dist}_1$(left) and $\mathbf{Dist}_2$(right) with the number of tasks $T$ for $s=5$, $m=10$ and $\sigma=1$.}
    \label{fig_T_m=10}
\end{figure}

\begin{figure}[htbp!]
    \centering
    \includegraphics[width=.45\textwidth]{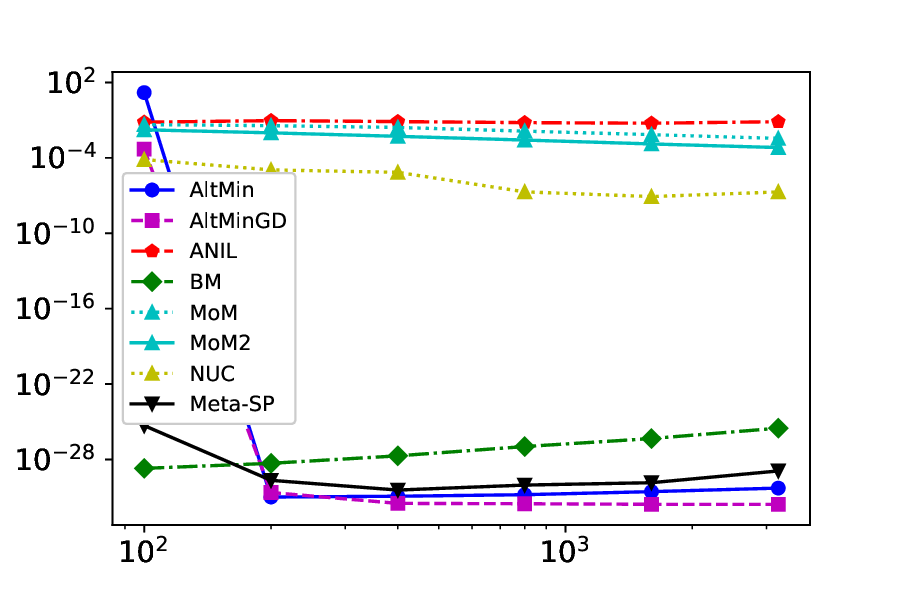}
    \includegraphics[width=.45\textwidth]{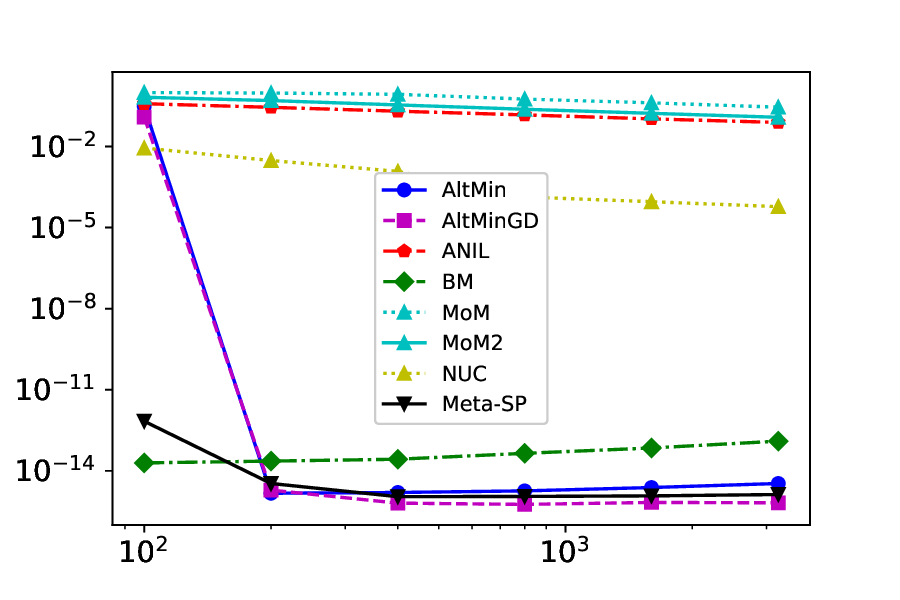}
    \caption{Evolution of $\mathbf{Dist}_1$(left) and $\mathbf{Dist}_2$(right) with the number of tasks $T$ for $s=5$, $m=25$ and $\sigma=0$.}
    \label{fig_T_m=25_sigma=0}
\end{figure}

\begin{figure}[htbp!]
    \centering
    \includegraphics[width=.45\textwidth]{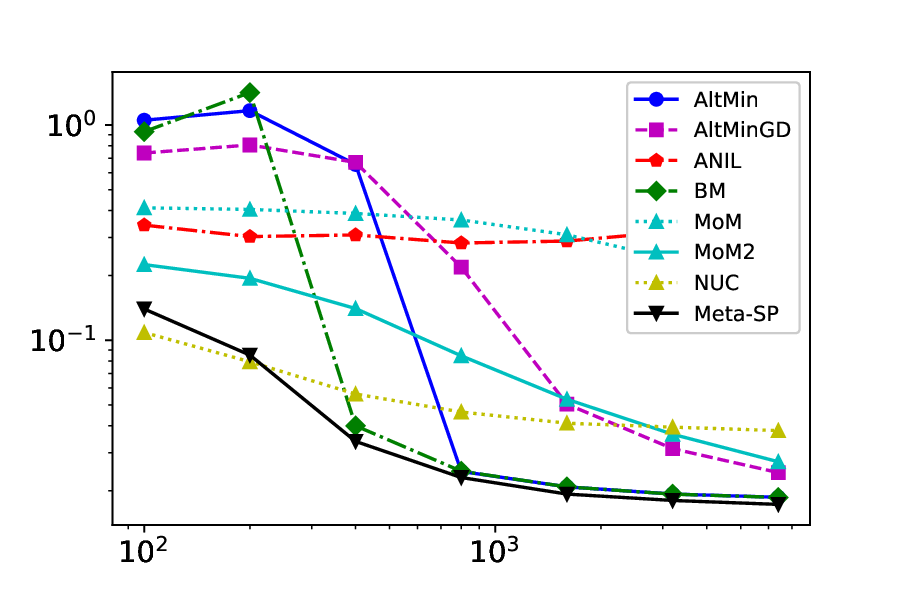}
    \includegraphics[width=.45\textwidth]{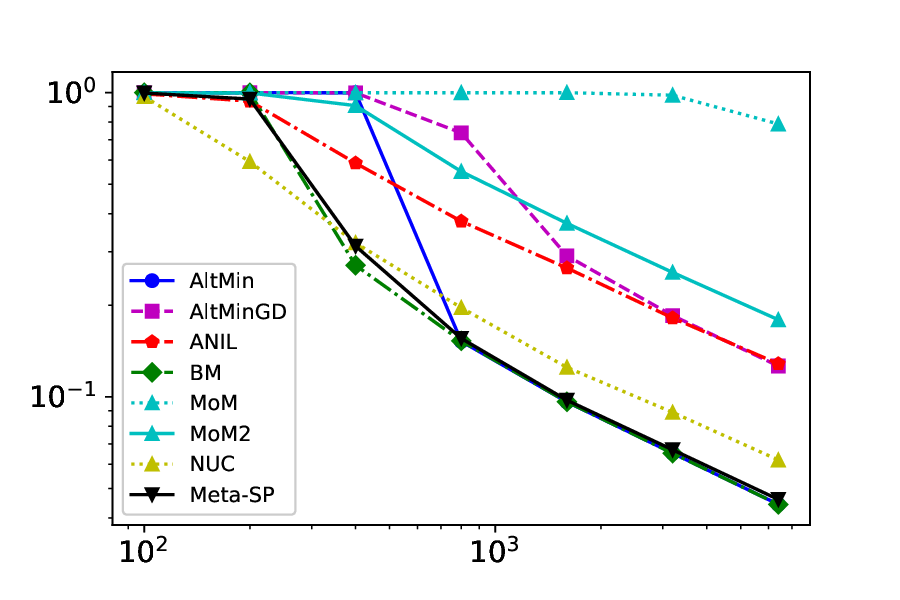}
    \caption{Evolution of $\mathbf{Dist}_1$(left) and $\mathbf{Dist}_2$(right) with the number of tasks $T$ for $s=25$, $m=40$ and $\sigma=1$.}
    \label{fig_T_m=40_s=25}
\end{figure}

In Figure \ref{fig_T_m=25_sigma=0}, we present results identical to those shown in Figure \ref{fig_T_m=25}, with the sole exception of setting $\sigma=0$. Notably, \texttt{AltMin}, \texttt{AltMinGD}, \texttt{BM}, and \our{} achieve remarkably low error rates, aligning with our expectations. In contrast, \texttt{NUC} underperforms in this scenario, while \texttt{ANIL}, \texttt{MoM}, and \texttt{MoM2} exhibit even poorer performance. These findings are in line with the outcomes depicted in Figure \ref{fig_sigma_m=25}.

We also investigated a more challenging scenario with $s=25$. Figure \ref{fig_T_m=40_s=25} and Figure \ref{fig_T_m=25_s=25} illustrate the evolution of the metrics with respect to $T$ for cases when $m=40$ and $m=25$. The comparison of results closely resembles that of the $s=5$ scenario with $m=10$ and $m=5$. These findings suggest that the methods can effectively address such challenges even as $s$ increases.

Figure \ref{fig_m_T=1600} displays the same experiment as shown in Figure \ref{fig_m_T=800}, except for $T=1600$. The results in these two scenarios are notably similar.

\begin{figure}[htbp!]
    \centering
    \includegraphics[width=.45\textwidth]{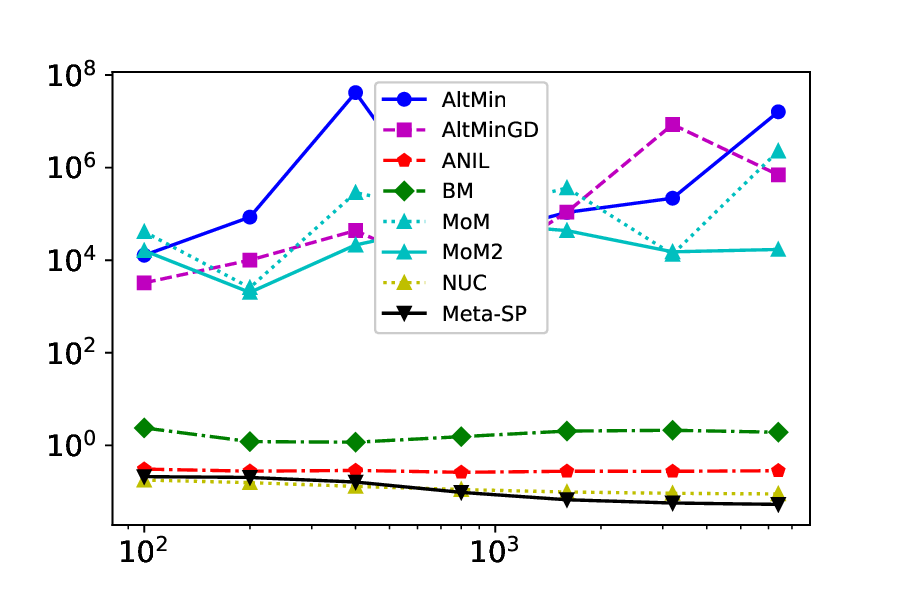}
    \includegraphics[width=.45\textwidth]{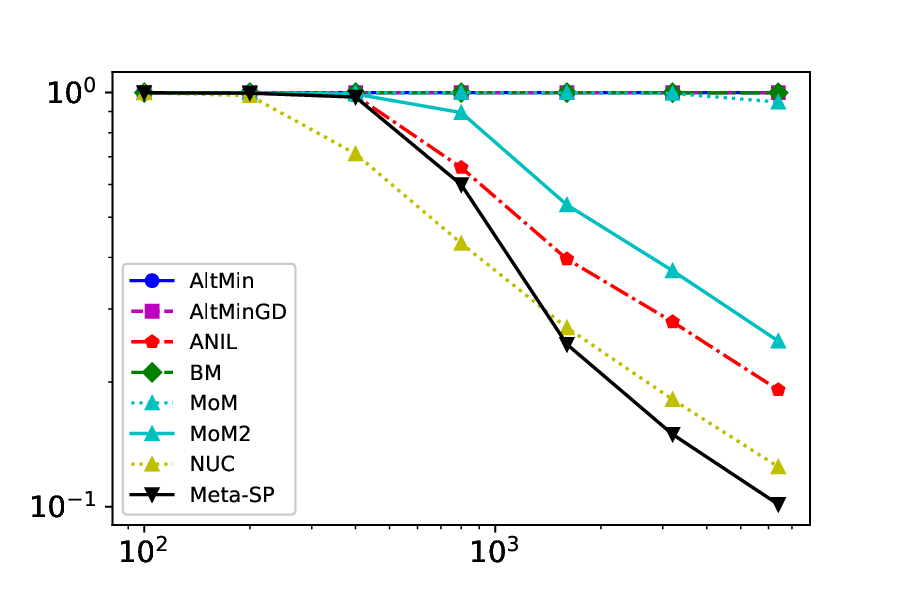}
    \caption{Evolution of $\mathbf{Dist}_1$(left) and $\mathbf{Dist}_2$(right) with the number of tasks $T$ for $s=25$, $m=25$ and $\sigma=1$}
    \label{fig_T_m=25_s=25}
\end{figure}

\begin{figure}[htbp!]
    \centering
    \includegraphics[width=.45\textwidth]{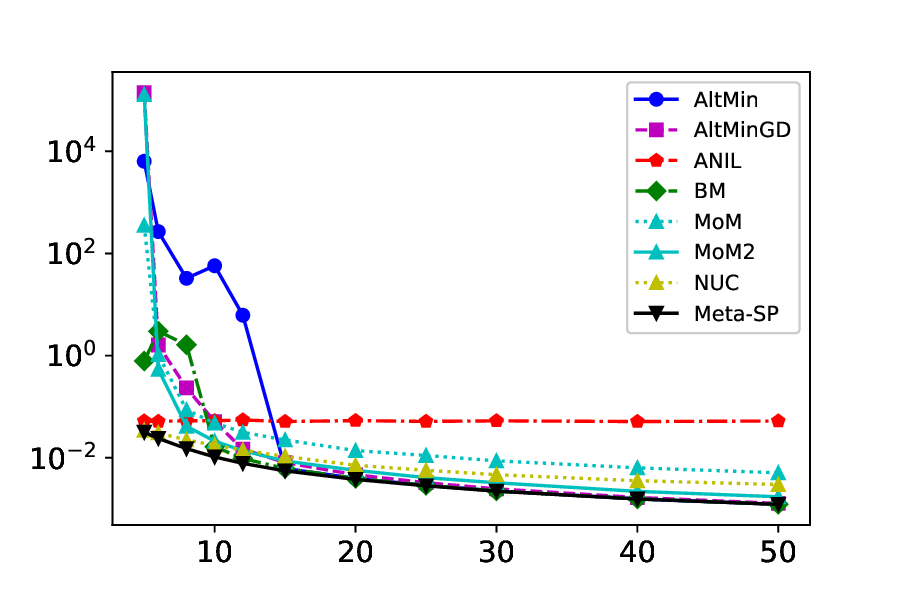}
    \includegraphics[width=.45\textwidth]{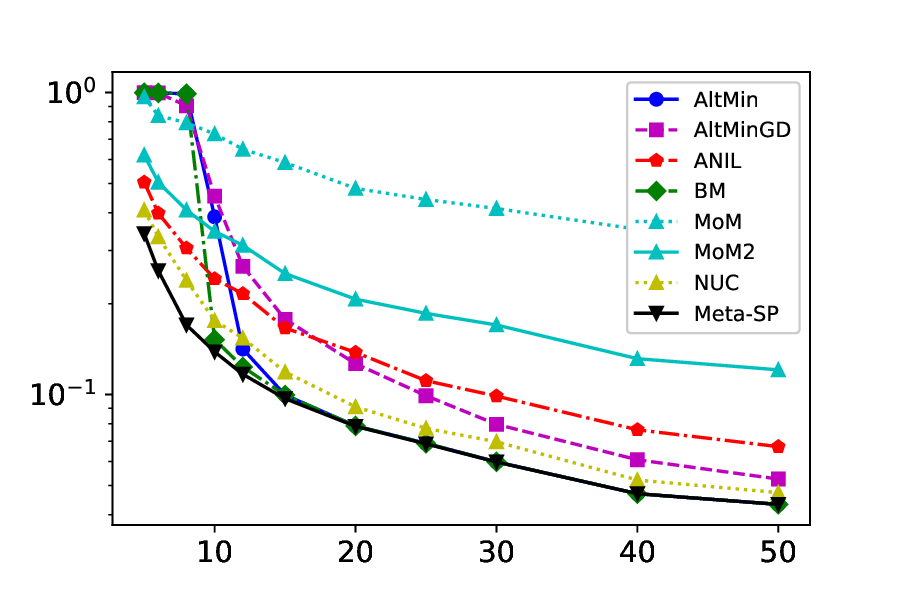}
    \caption{Evolution of $\mathbf{Dist}_1$(left) and $\mathbf{Dist}_2$(right) with the number of tasks $T$ for $s=5$, $T=1600$ and $\sigma=1$.}
    \label{fig_m_T=1600}
\end{figure}

\begin{figure}[htbp!]
    \centering
    \includegraphics[width=.45\textwidth]{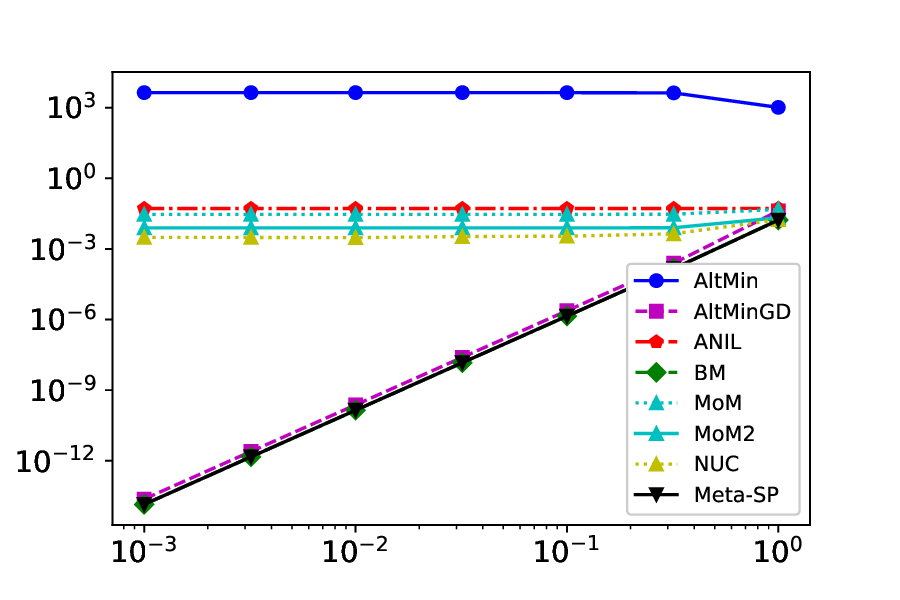}
    \includegraphics[width=.45\textwidth]{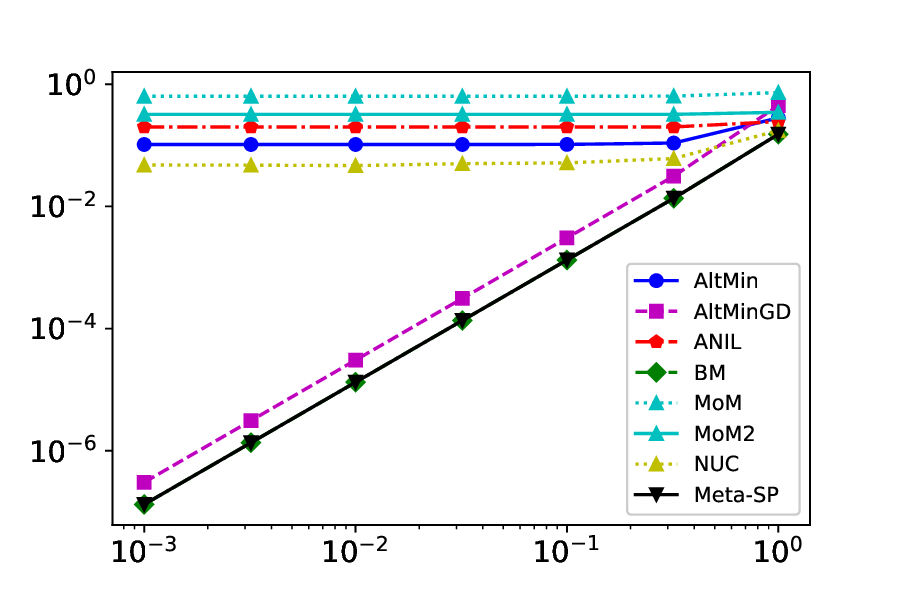}
    \caption{Evolution of $\mathbf{Dist}_1$(left) and $\mathbf{Dist}_2$(right) with variance of noise $\sigma$ for $s=5$, $T=1600$ and $m=10$.}
    \label{fig_sigma_m=10}
\end{figure}

\begin{figure}[htbp!]
    \centering
    \includegraphics[width=.45\textwidth]{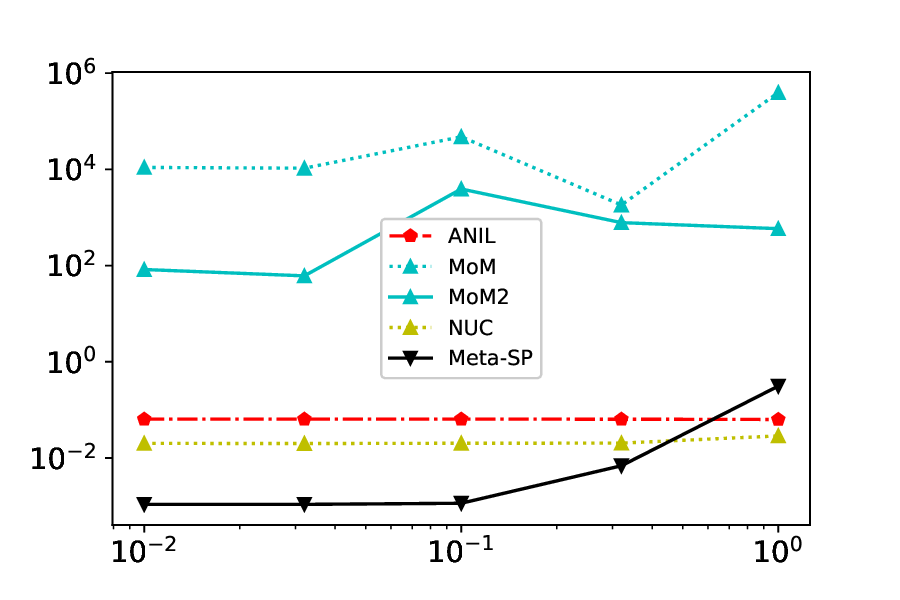}
    \includegraphics[width=.45\textwidth]{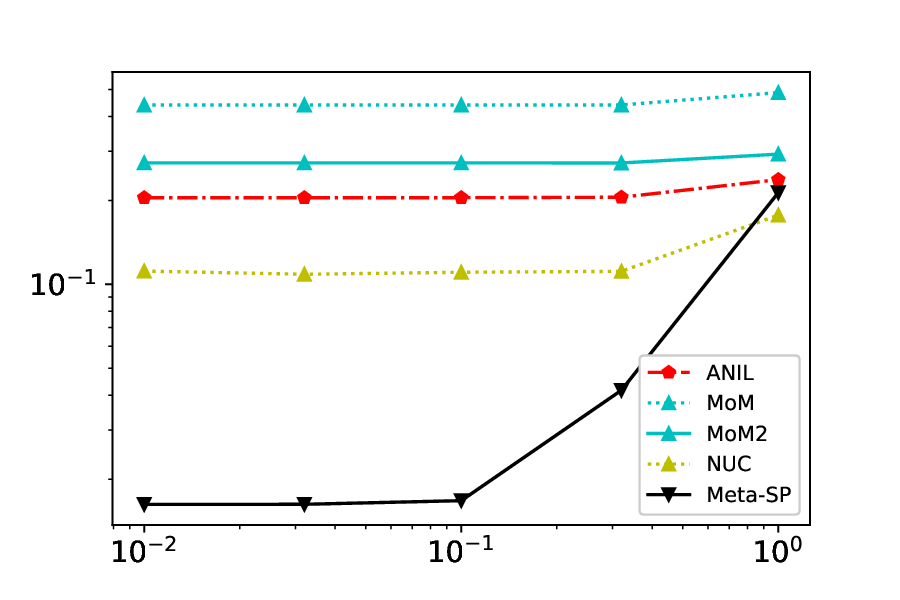}
    \caption{Evolution of $\mathbf{Dist}_1$(left) and $\mathbf{Dist}_2$(right) with variance of noise $\sigma$ for $s=5$, $T=6400$ and $m=5$.}
    \label{fig_sigma_m=5}
\end{figure}

\begin{figure}[htbp!]
    \centering
    \subfigure[Evolution of $\mathbf{Dist}_1$ with iterations.]{
    \label{fig_T=400_m=25_app_1}
    \includegraphics[width=.30\textwidth]{d_r=5_m=25_T=400_F_1.eps}
    \includegraphics[width=.30\textwidth]{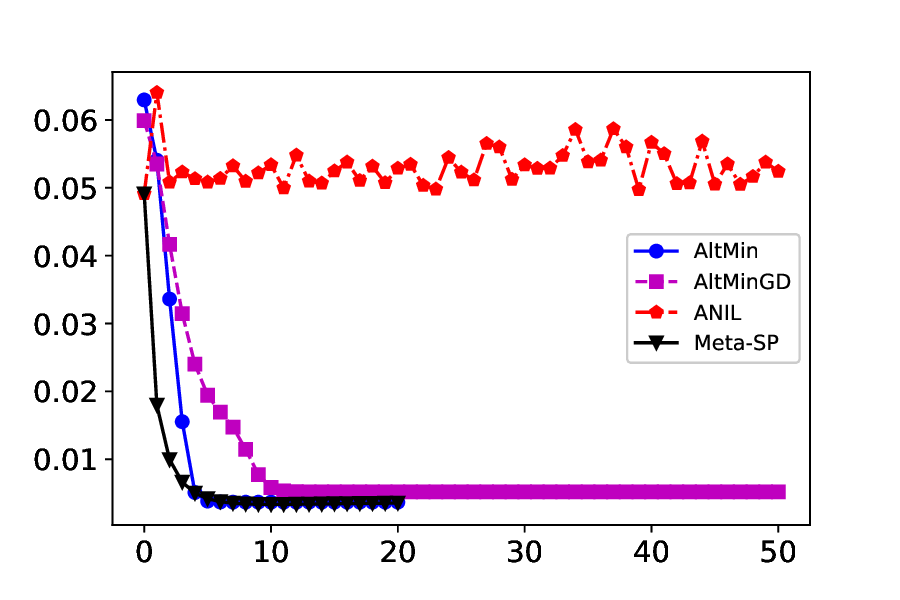}
    \includegraphics[width=.30\textwidth]{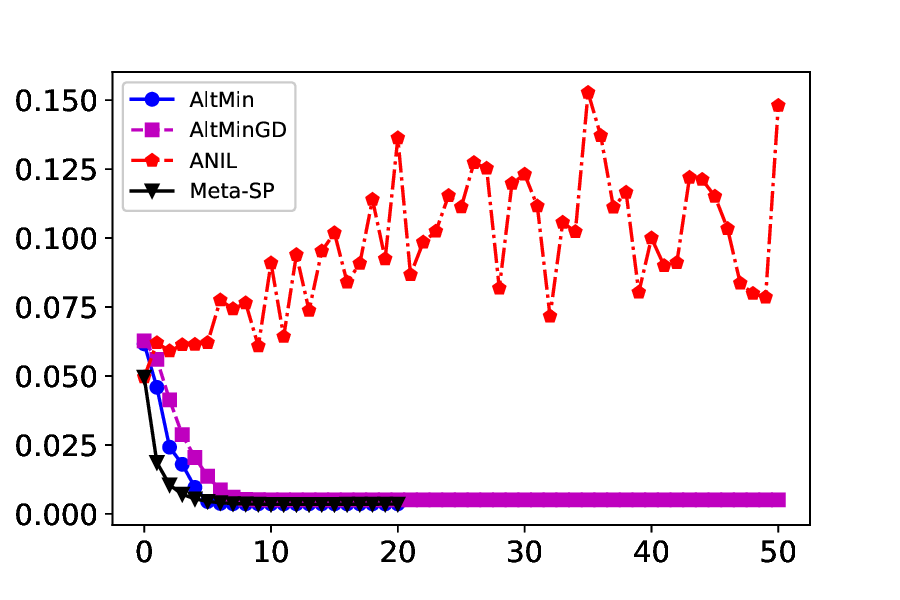}
    }
    \subfigure[Evolution of $\mathbf{Dist}_2$ with iterations.]{
    \label{fig_T=400_m=25_app_2}
    \includegraphics[width=.30\textwidth]{d_r=5_m=25_T=400_sin_1.eps}
    \includegraphics[width=.30\textwidth]{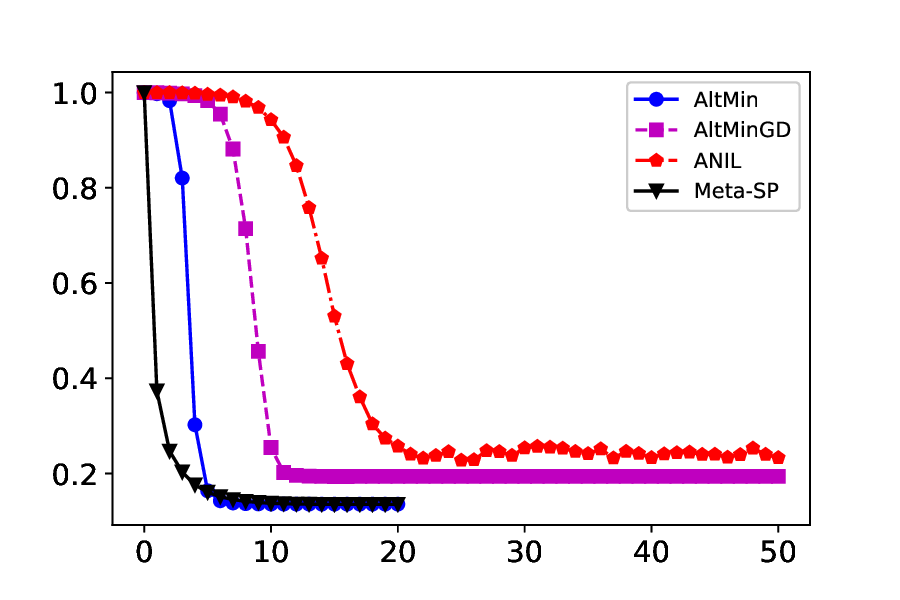}
    \includegraphics[width=.30\textwidth]{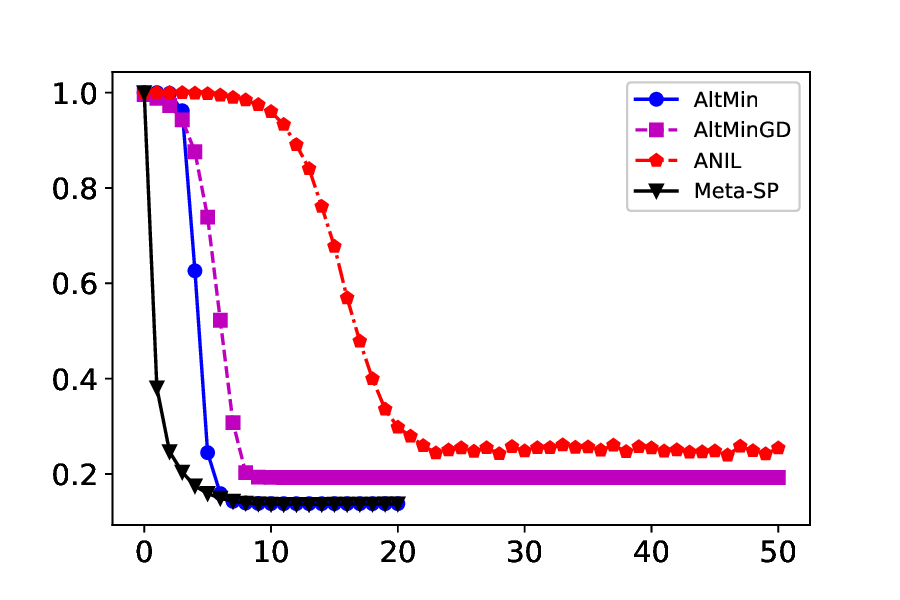}
    }
    \subfigure[Evolution of $\mathbf{Dist}_1$ with computational time.]{
    \label{fig_T=400_m=25_app_3}
    \includegraphics[width=.30\textwidth]{d_r=5_m=25_T=400_Ft_1.eps}
    \includegraphics[width=.30\textwidth]{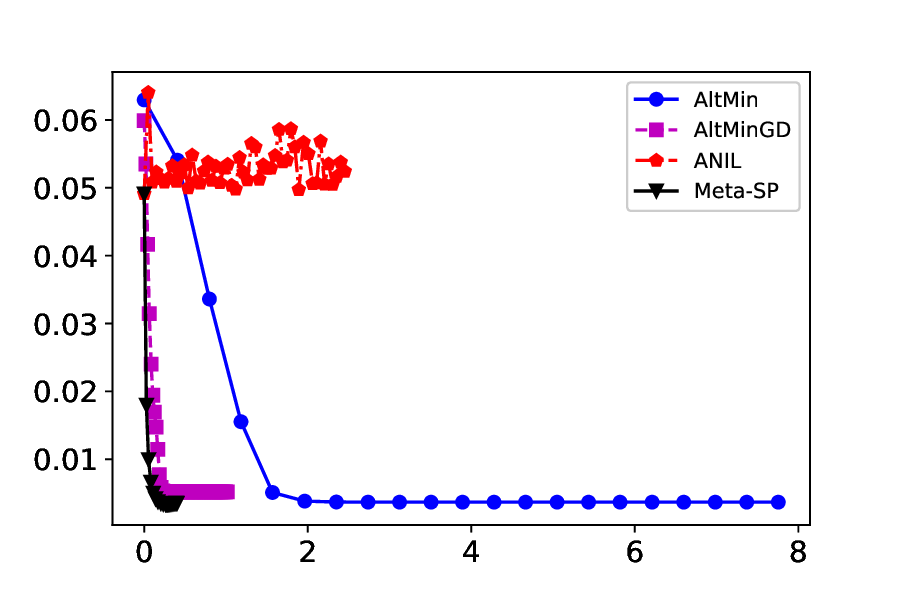}
    \includegraphics[width=.30\textwidth]{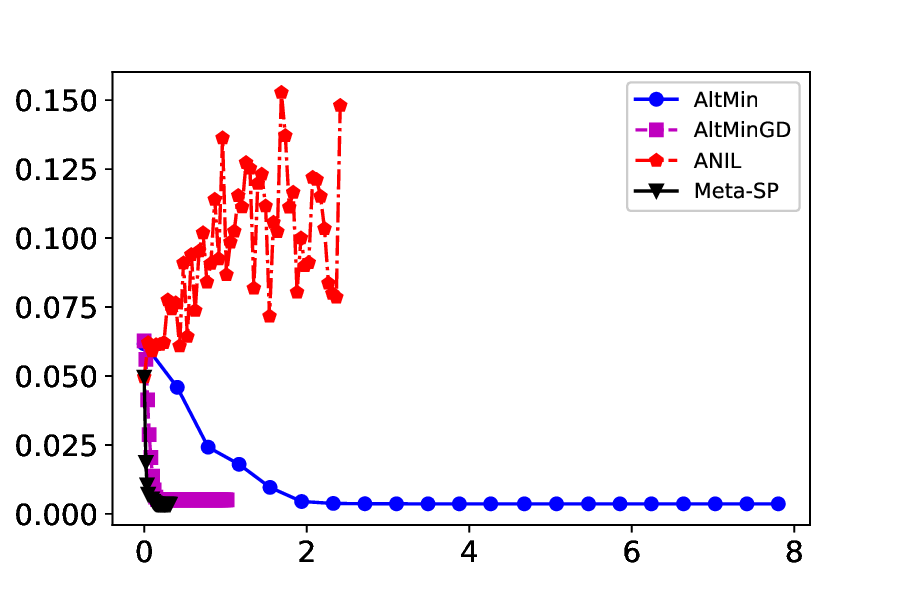}
    }
    \subfigure[Evolution of $\mathbf{Dist}_2$ with computational time.]{
    \label{fig_T=400_m=25_app_4}
    \includegraphics[width=.30\textwidth]{d_r=5_m=25_T=400_sint_1.eps}
    \includegraphics[width=.30\textwidth]{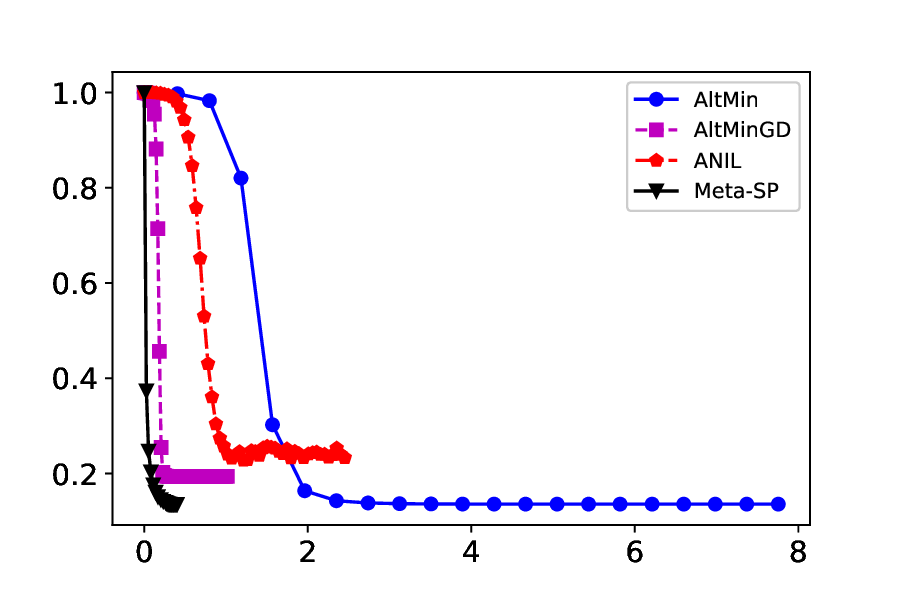}
    \includegraphics[width=.30\textwidth]{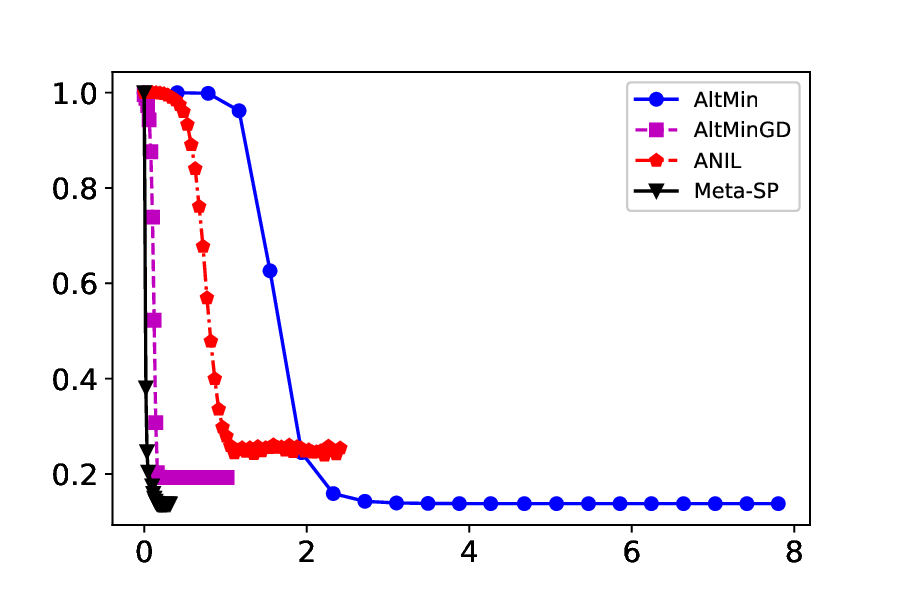}
    }
    \caption{Evolution of $\mathbf{Dist}_1$ and $\mathbf{Dist}_2$ with iterations and computational time (unit: second) for $s=5$, $m=25$, $T=400$ and $\sigma=1$ in three examples.}
    \label{fig_T=400_m=25_app}
\end{figure}

\begin{figure}[htbp!]
    \centering
    \subfigure[Evolution of $\mathbf{Dist}_1$ with iterations.]{
    \label{fig_T=3200_m=10_app_1}
    \includegraphics[width=.30\textwidth]{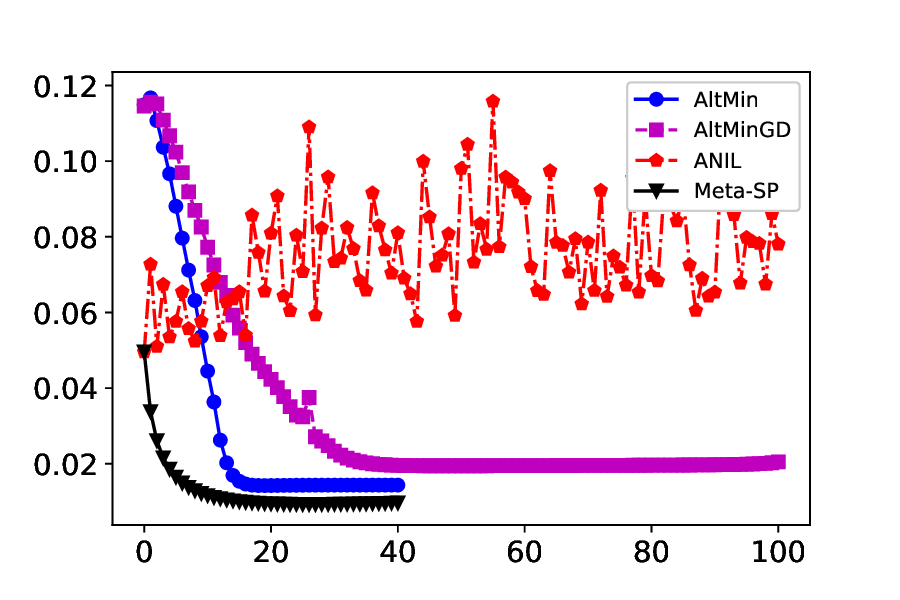}
    \includegraphics[width=.30\textwidth]{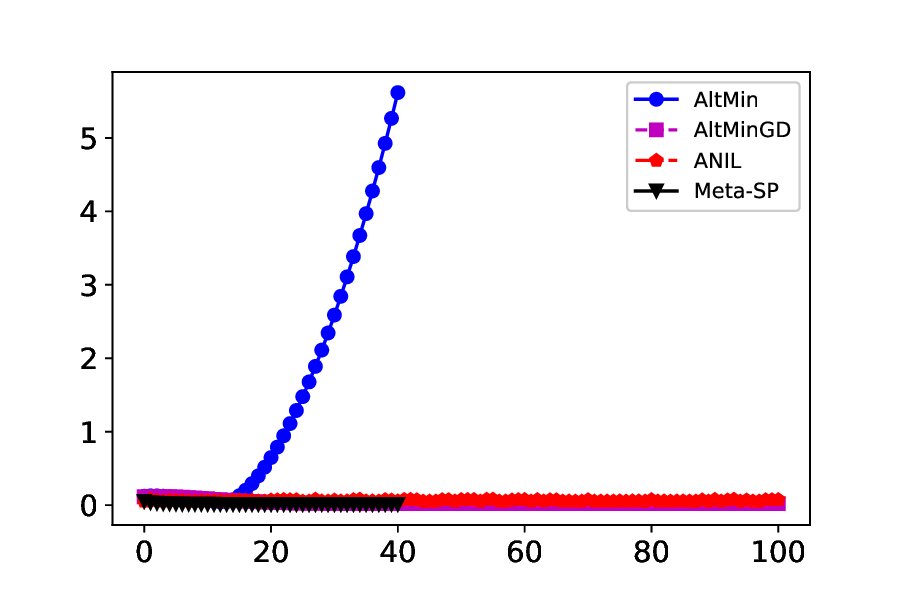}
    \includegraphics[width=.30\textwidth]{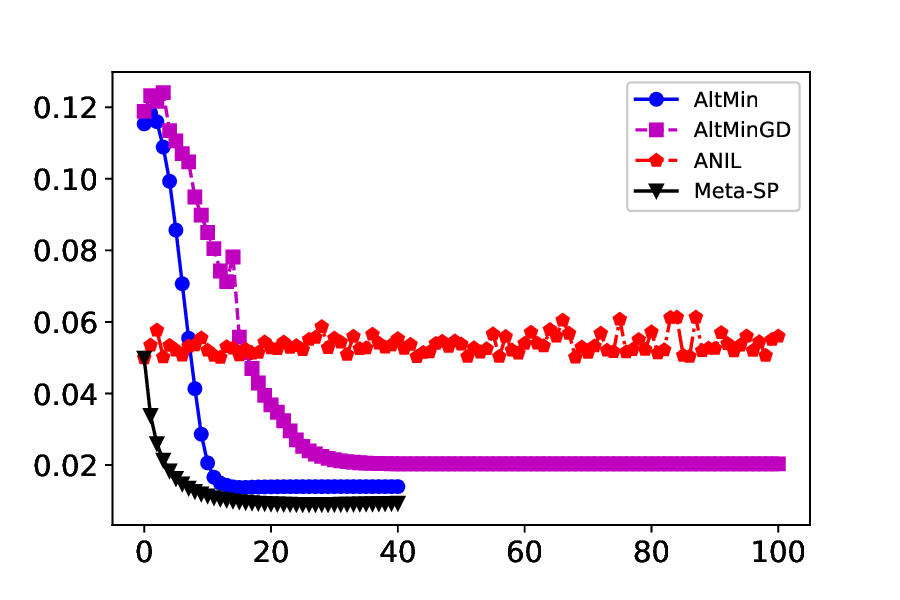}
    }
    \subfigure[Evolution of $\mathbf{Dist}_2$ with iterations.]{
    \label{fig_T=3200_m=10_app_2}
    \includegraphics[width=.30\textwidth]{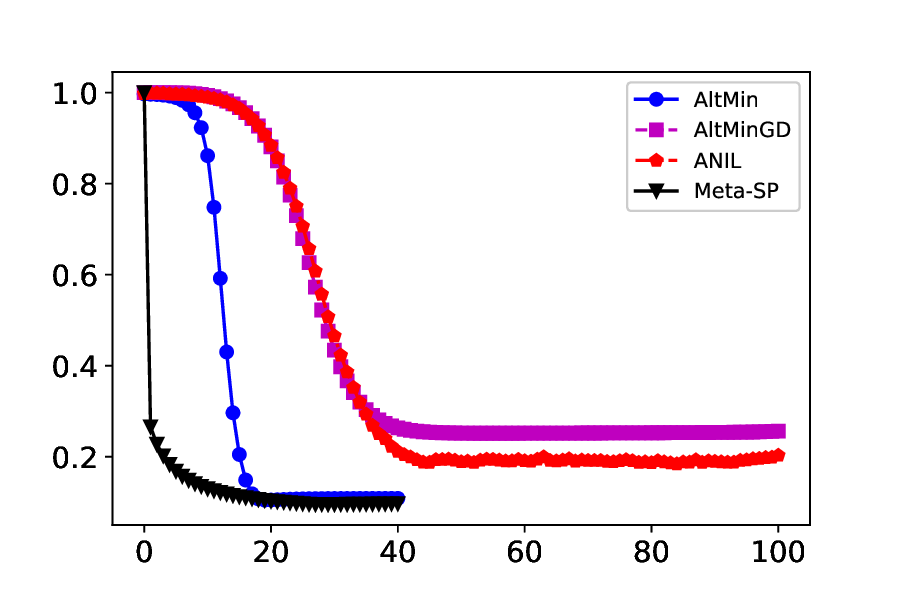}
    \includegraphics[width=.30\textwidth]{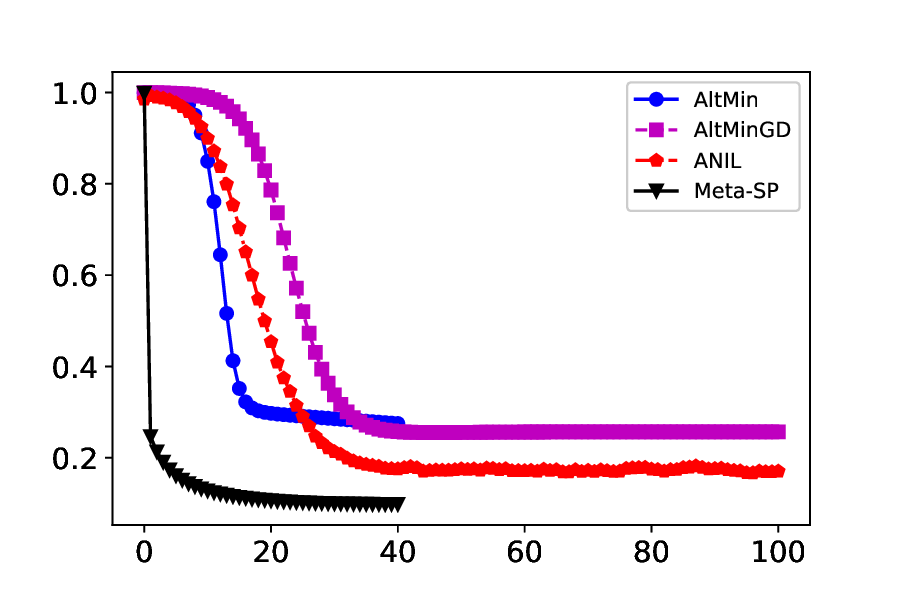}
    \includegraphics[width=.30\textwidth]{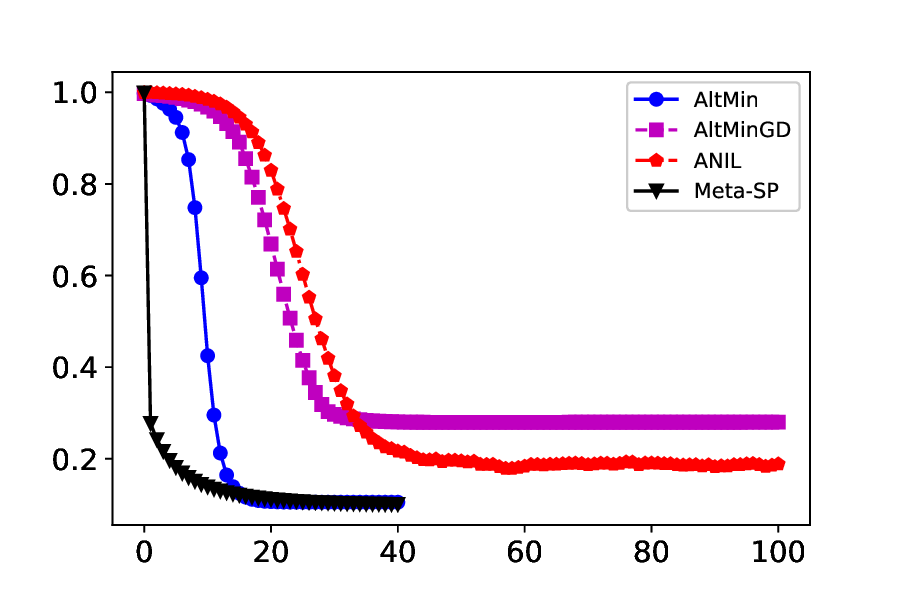}
    }
    \subfigure[Evolution of $\mathbf{Dist}_1$ with time.]{
    \label{fig_T=3200_m=10_app_3}
    \includegraphics[width=.30\textwidth]{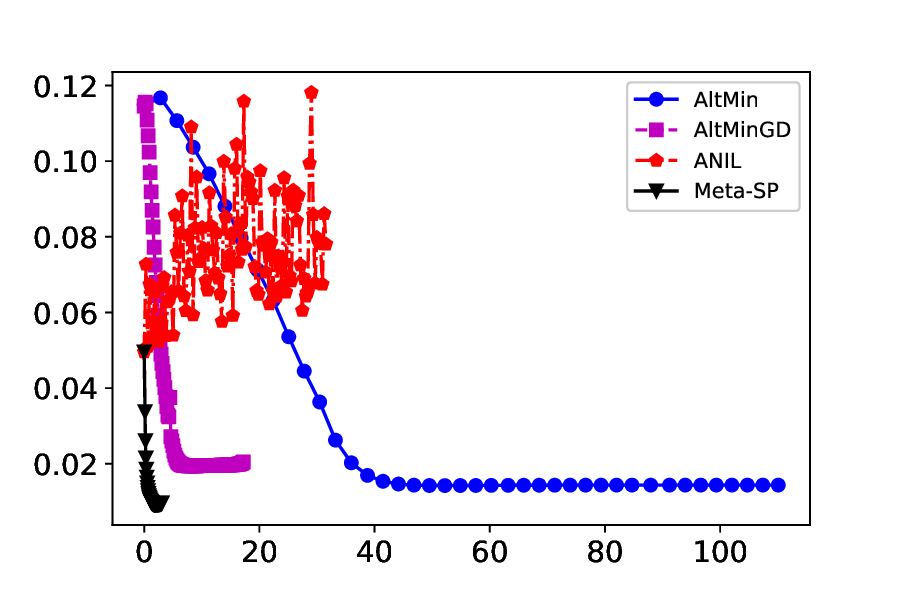}
    \includegraphics[width=.30\textwidth]{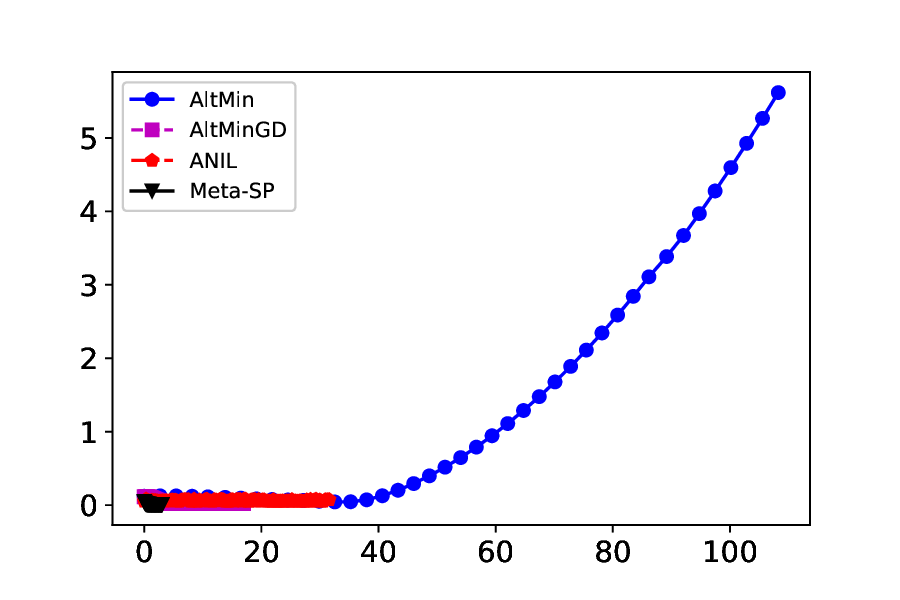}
    \includegraphics[width=.30\textwidth]{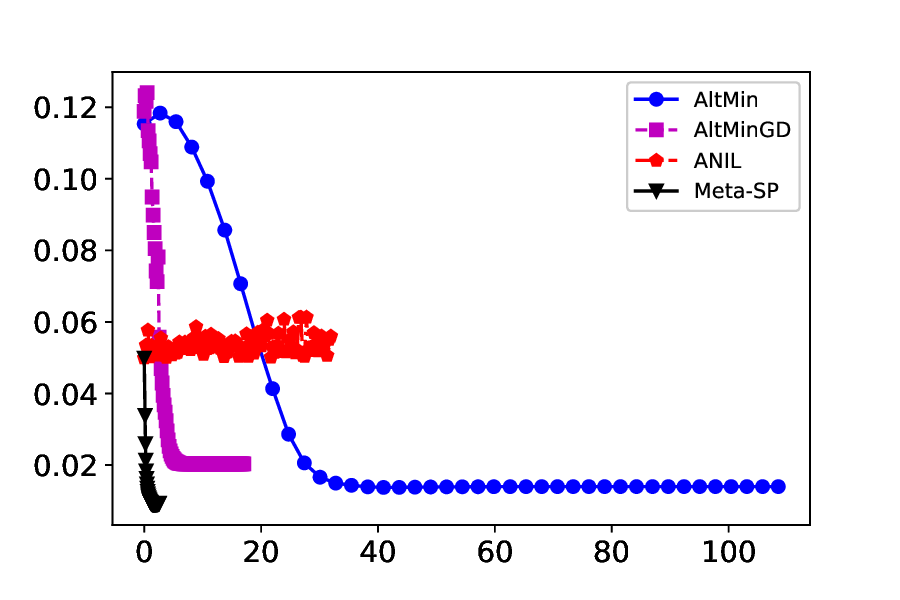}
    }
    \subfigure[Evolution of $\mathbf{Dist}_2$ with computational time.]{
    \label{fig_T=3200_m=10_app_4}
    \includegraphics[width=.30\textwidth]{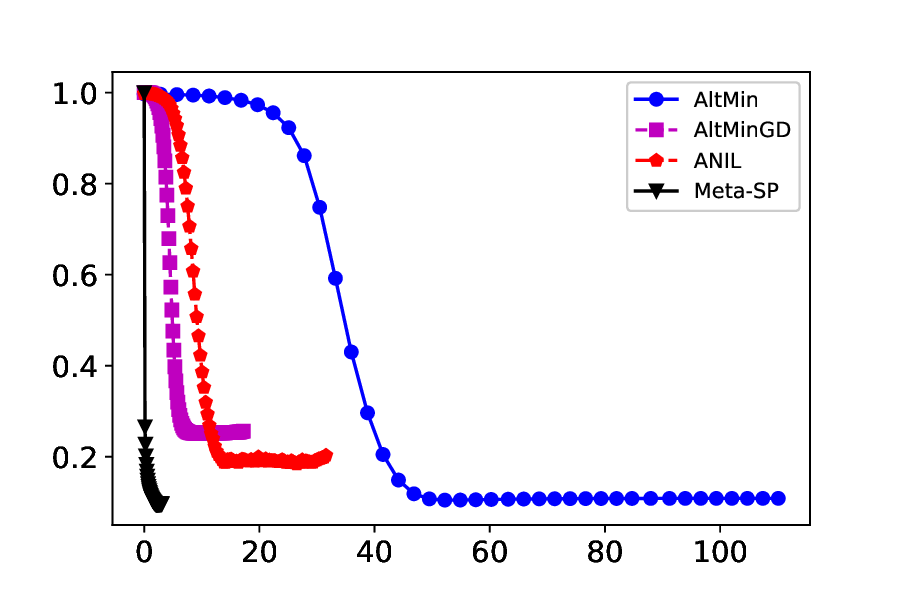}
    \includegraphics[width=.30\textwidth]{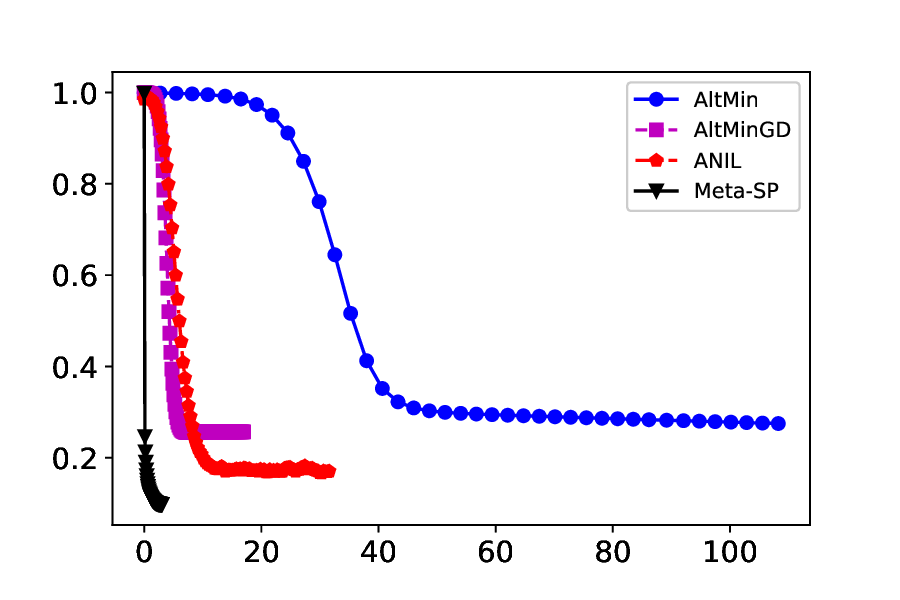}
    \includegraphics[width=.30\textwidth]{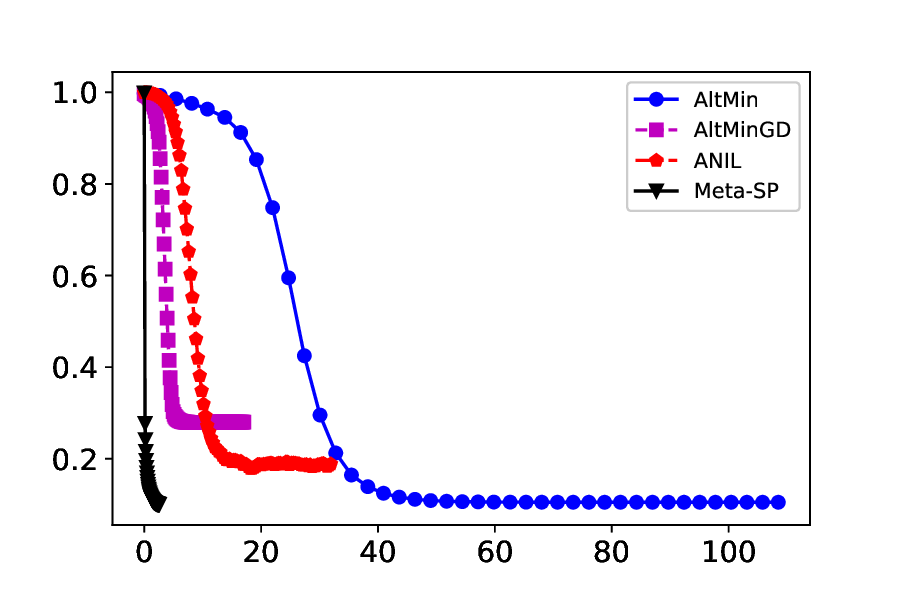}
    }
    \caption{Evolution of $\mathbf{Dist}_1$ and $\mathbf{Dist}_2$ over iterations and computational time (unit: second) for $s = 5$, $m = 10$, $T = 3200$ and $\sigma = 1$ in three examples.}
    \label{fig_T=3200_m=10_app}
\end{figure}

The above experiments investigate the influence of both $m$ and $T$ on the results. These two parameters determine the amount of data utilized to address the problems. All of these experiments consistently demonstrate that our method, \our{}, requires less data to solve the same problem and consistently achieves superior results under the same conditions.

We have presented results pertaining to the parameter $\sigma$ in Figure \ref{fig_sigma_m=25}. Figure \ref{fig_sigma_m=10} illustrates the scenario when $m=10$ and $T=1600$, while Figure \ref{fig_sigma_m=5} covers the case when $m=5$ and $T=6400$. Notably, \texttt{AltMin}, \texttt{AltMinGD}, and \texttt{BM} are omitted in Figure \ref{fig_sigma_m=5} because they cannot effectively function when $m=5$. In Figure \ref{fig_sigma_m=10}, the results closely resemble those in Figure \ref{fig_sigma_m=25}, with the exception that \texttt{AltMin} exhibits less effectiveness. This observation aligns with the findings in Figures \ref{fig_T_m=10} and \ref{fig_m_T=1600}. The outcomes presented in Figure \ref{fig_sigma_m=5} underscore that our method, \our{}, stands out as the sole approach capable of performing effectively in scenarios with extremely limited data represented by small values of $m$.

Figure \ref{fig_T=400_m=25} serves as an illustrative example of how the metrics evolve with iterations and time. To provide further insights, we present three additional examples under the same conditions of $m=25$ and $T=400$ in Figure \ref{fig_T=400_m=25_app}. Furthermore, in Figure \ref{fig_T=3200_m=10_app}, we replicate the experiments with $m=10$ and $T=3200$. Notably, \texttt{AltMin} exhibits suboptimal performance in the second example, signifying its instability when dealing with small values of $m$. Collectively, these experiments underscore that our method, \our{}, not only delivers effective results but also achieves efficiency by necessitating fewer iterations and less time to yield favorable outcomes.

\subsection{Experimental Details} \label{app-exp-detail}
In all the experiments, we generate the true task-invariant matrix $\bB^{\ast}$ by first QR factorizing of a $d \times d$ matrix with elements sampled i.i.d. from the standard normal $\mathcal{N} (0, 1)$, and then retrieving the first $s$ columns. The elements of each task-varying coefficient vector $\bw_t^\ast$ are also sampled i.i.d. from the standard normal $\mathcal{N} (0,1)$. For all methods except \our{}, initialization is based on random draws from the standard normal as well. In contrast, \our{} initializes $\btheta_{t}^{(0)}$ as the zero vector for all $t \in [T]$. For the algorithm \texttt{NUC}, a hyperparameter $\lambda$ called the ``regularization coefficient'' is determined as $\lambda = \frac{\sigma}{T} \sqrt{\frac{T + d^2 / m}{mT}}$, following the recommendation in \citep{boursier2022trace}. When implementing the \texttt{BM} and \texttt{NUC} algorithms, as in \citep{tripuraneni2021provable} and \citep{boursier2022trace}, we use the L-BFGS algorithm \citep{liu1989limited} within the \textbf{scipy} package. Moreover, we employ the \textbf{autograd} package for gradient computations in these two methods.

The choice of step sizes and maximum iterations for our proposed \our{}, as well as \texttt{AltMin}, \texttt{AltMinGD}, and \texttt{ANIL}, are determined empirically. These parameters are set differently for various experiments. In Figure \ref{fig_T_m=25}, the step sizes for \our{} and \texttt{AltMinGD} are 0.25 and 1.0, respectively, while the inner and outer loop step sizes for \texttt{ANIL} are both 0.5. \our{}, \texttt{AltMinGD}, \texttt{AltMin} and \texttt{ANIL} are run for 40, 200, 20, 200 iterations, respectively.
In Figure \ref{fig_T_m=5}, Figure \ref{fig_m_T=800}, Figure \ref{fig_T_m=25_s=25} and Figure \ref{fig_m_T=1600}, the step sizes for \our{} and \texttt{AltMinGD} are 0.05 and 0.2, respectively, while the inner and outer loop step sizes for \texttt{ANIL} are both 0.1. \our{}, \texttt{AltMinGD}, \texttt{AltMin} and \texttt{ANIL} are run for 200, 1000, 100, 800 iterations, respectively.
In Figure \ref{fig_sigma_m=25}, the step sizes for \our{} and \texttt{AltMinGD} are 0.5 and 1.0, respectively, while the inner and outer loop step sizes for \texttt{ANIL} are both 0.5. \our{}, \texttt{AltMinGD}, \texttt{AltMin} and \texttt{ANIL} are run for 300, 200, 100, 400 iterations, respectively.
In Figure \ref{fig_T=400_m=25} and \ref{fig_T=400_m=25_app}, the step sizes for \our{} and \texttt{AltMinGD} are 0.5 and 1.0, respectively, while  the inner and outer loop step sizes for \texttt{ANIL} are both 0.5. \our{}, \texttt{AltMinGD}, \texttt{AltMin} and \texttt{ANIL} are run for 20, 50, 20, 50 iterations, respectively.
In Figure \ref{fig_T_m=10} and Figure \ref{fig_T_m=40_s=25}, the step sizes for \our{} and \texttt{AltMinGD} are 0.1 and 0.4, respectively, while  the inner and outer loop step sizes for \texttt{ANIL} are both 0.2. \our{}, \texttt{AltMinGD}, \texttt{AltMin} and \texttt{ANIL} are run for 100, 500, 50, 400 iterations, respectively.
In Figure \ref{fig_T_m=25_sigma=0}, the step sizes for \our{} and \texttt{AltMinGD} are 0.25 and 1.0, respectively, while  the inner and outer loop step sizes for \texttt{ANIL} are both 0.5. \our{}, \texttt{AltMinGD}, \texttt{AltMin} and \texttt{ANIL} are run for 1000, 200, 100, 400 iterations, respectively. In addition, the regularization coefficient for \texttt{NUC} is set to $1 \times 10^{-5}$ since the best value is zero when it follows the formulation in \cite{tripuraneni2021provable}.
In Figure \ref{fig_sigma_m=10}, the step sizes for \our{} and \texttt{AltMinGD} are 0.3 and 0.4, respectively, while  the inner and outer loop step sizes for \texttt{ANIL} are both 0.2. \our{}, \texttt{AltMinGD}, \texttt{AltMin} and \texttt{ANIL} are run for 15000, 500, 400, 500 iterations, respectively.
In Figure \ref{fig_T=3200_m=10_app}, the step sizes for \our{} and \texttt{AltMinGD} are 0.2 and 0.4, respectively, while  the inner and outer loop step sizes for \texttt{ANIL} are both 0.3. \our{}, \texttt{AltMinGD}, \texttt{AltMin} and \texttt{ANIL} are run for 40, 100, 40, 100 iterations, respectively.

To enhance the reliability of our results, all the results in Figure \ref{fig_T_m=25} to Figure \ref{fig_sigma_m=25} and Figure \ref{fig_T_m=10} to Figure \ref{fig_sigma_m=5} are averaged over 5 trials. The results presented in Figure \ref{fig_mT} and Table \ref{tab_mT} are derived from experimenting with different $T$ values for different $m$, approximated to the nearest hundred or ten, and represent the average error over 5 trials for each scenario.

\end{appendices}

\bibliographystyle{plain}
\bibliography{meta.bib}

\end{document}